\let\over=\@@over \let\overwithdelims=\@@overwithdelims
\let\atop=\@@atop \let\atopwithdelims=\@@atopwithdelims
\let\above=\@@above \let\abovewithdelims=\@@abovewithdelims
	\tikzstyle{int}=[draw, fill=blue!20, minimum size=2em]
	\tikzstyle{dot}=[circle, draw, fill=blue!20, minimum size=2em]
	\tikzstyle{init} = [pin edge={to-,thin,black}]
	\def\ba{{\boldsymbol a}}   \def\bA{\boldsymbol A}  
	\def\bb{{\boldsymbol b}}   \def\bB{\boldsymbol B}  
	\def\bd{{\boldsymbol d}}     
	     \def\EE{\mathbb{E}}
	\def\bff{{\boldsymbol f}}   
	     \def\GG{\mathbb{G}}
	   \def\bH{\boldsymbol H}
	     \def\PP{\mathbb{P}}
	\def\bu{{\boldsymbol u}}     
	\def\bv{\boldsymbol v}   \def\bV{\boldsymbol V}  
	   \def\bW{\boldsymbol W}  
	\def\bx{\boldsymbol x}     
	\def\bz{\boldsymbol z}     
	\def\11{\mathbbm{1}}
	\def\calA{{\cal  A}} 
	\def\calB{{\cal  B}} 
	\def\calC{{\cal  C}} 
	\def\calD{{\cal  D}} 
	\def\calE{{\cal  E}} 
	\def\calF{{\cal  F}} 
	\def\calG{{\cal  G}} 
	\def\calH{{\cal  H}} 
	\def\calI{{\cal  I}} 
	\def\calJ{{\cal  J}} \def\cJ{{\cal  J}}
	\def\calL{{\cal  L}} 
	\def\calN{{\cal  N}} 
	\def\calP{{\cal  P}} \def\cP{{\cal  P}}
	\def\calR{{\cal  R}}
	\def\calU{{\cal  U}}
	\newcommand{\bfsym}[1]{\ensuremath{\boldsymbol{#1}}}
	\def\btheta{{\bfsym {\theta}}}           \def\bTheta {\bfsym {\Theta}}
	             \def\bSigma{\bfsym \Sigma}
	\def\blambda {\bfsym {\lambda}}
	\DeclareMathOperator{\argmin}{argmin}
	\DeclareMathOperator{\sgn}{sgn}
	\DeclareMathOperator{\Var}{Var}
	\def\Tr{{{\rm Tr}}}
	\newcommand{\eqref}[1]{~(\ref{#1})}
	\def\mod{\mathop{\rm mod}}
	\def\argmin{\mathop{\rm argmin}}
	\def\dim{\mathop{\rm dim}}
	\def\exp{\mathop{\rm exp}}
	\def\EE{\Expect}
	\def\Var{\mathrm{Var}}
	\def\PP{\mathbb{P}}
	\def\eqdef{\triangleq}
	\def\simiid{\stackrel{iid}{\sim}}
	\newcommand{\abs}[1]{\left| #1 \right|}
	\def\bbordermatrix#1{\begingroup \m@th
		\@tempdima 4.75\p@
		\setbox\z@\vbox{%
			\def\cr{\crcr\noalign{\kern2\p@\global\let\cr\endline}}%
			\ialign{$##$\hfil\kern2\p@\kern\@tempdima&\thinspace\hfil$##$\hfil
				&&\quad\hfil$##$\hfil\crcr
				\omit\strut\hfil\crcr\noalign{\kern-\baselineskip}%
				#1\crcr\omit\strut\cr}}%
		\setbox\tw@\vbox{\unvcopy\z@\global\setbox\@ne\lastbox}%
		\setbox\tw@\hbox{\unhbox\@ne\unskip\global\setbox\@ne\lastbox}%
		\setbox\tw@\hbox{$\kern\wd\@ne\kern-\@tempdima\left[\kern-\wd\@ne
			\global\setbox\@ne\vbox{\box\@ne\kern2\p@}%
			\vcenter{\kern-\ht\@ne\unvbox\z@\kern-\baselineskip}\,\right]$}%
		\null\;\vbox{\kern\ht\@ne\box\tw@}\endgroup}
	\newcommand{\ceil}[1]{{\left\lceil {#1} \right \rceil}}
	\newcommand{\reals}{\mathbb{R}}
	\newcommand{\naturals}{\mathbb{N}}
	\newcommand{\Expect}{\mathbb{E}}
	\newcommand{\pth}[1]{\left( #1 \right)}
	\newcommand{\qth}[1]{\left[ #1 \right]}
	\newcommand{\sth}[1]{\left\{ #1 \right\}}
	\newcommand{\Binom}{\text{Binom}}
	\newcommand{\indc}[1]{{\mathbf{1}_{\left\{{#1}\right\}}}}
	\definecolor{myblue}{rgb}{.8, .8, 1}
	\definecolor{mathblue}{rgb}{0.2472, 0.24, 0.6} 
	\definecolor{mathred}{rgb}{0.6, 0.24, 0.442893}
	\definecolor{mathyellow}{rgb}{0.6, 0.547014, 0.24}
	\newcommand{\barr}{{\bar{r}}}
	\def\unifto{\mathop{{\mskip 3mu plus 2mu minus 1mu%
				\setbox0=\hbox{$\mathchar"3221$}%
				\raise.6ex\copy0\kern-\wd0%
				\lower0.5ex\hbox{$\mathchar"3221$}}\mskip 3mu plus 2mu minus 1mu}}
	\def\simleq{{{\mskip 3mu plus 2mu minus 1mu%
				\setbox0=\hbox{$\mathchar"013C$}%
				\raise.2ex\copy0\kern-\wd0%
				\lower0.9ex\hbox{$\mathchar"0218$}}\mskip 3mu plus 2mu minus 1mu}}
	\def\simleq{\lesssim}
	\def\simgeq{{{\mskip 3mu plus 2mu minus 1mu%
				\setbox0=\hbox{$\mathchar"013E$}%
				\raise.2ex\copy0\kern-\wd0%
				\lower0.9ex\hbox{$\mathchar"0218$}}\mskip 3mu plus 2mu minus 1mu}}
	\def\simgeq{\gtrsim}
		\theoremstyle{definition}
		\newif\ifmapx
		\edef\jobnametmp{\expandafter\string\csname ic_apx\endcsname}
		\edef\jobnameapx{\expandafter\mkillslash\jobnametmp}
		\edef\jobnameexpand{\jobname}
		\renewcommand{\hat}{\widehat}
		\renewcommand{\tilde}{\widetilde}
		\newcommand{\fidd}{{FIDDLE}}
		\newcommand{\aipw}{{\sf AIPW}}
		\newcommand{\mfunc}{\mu_1}
		\newcommand{\relu}{{\text{ReLU}~}}
		\newcommand{\opt}{\mathsf{opt}}
		\newcommand{\fast}{{\sf{FAST}~}}
		\newcommand{\mufidd}{{\hat \mu^{\sf{FIDDLE}}}}
		\newtheorem{theorem}{Theorem}
		\newtheorem{lemma}{Lemma}
		\newtheorem{definition}{Definition}
		\theoremstyle{definition}
		\newtheorem{remark}{Remark}
		\newtheorem{myassumption}{Assumption}
\newcommand{\blind}{1}
\begin{document}

%


\if1\blind
{

  \title{\bf Factor Informed Double Deep Learning For Average Treatment Effect Estimation}
  \author{Jianqing Fan, Soham Jana, Sanjeev Kulkarni, and Qishuo Yin \thanks{J.F. and S.K. are with the Department of Operations Research and Financial Engineering and Department of Electric and Computer Engineering, Princeton University, Princeton, NJ, USA email: \url{jqfan@princeton.edu}, \url{kulkarni@princeton.edu}. Q.Y. is with the Department of Operations Research and Financial Engineering, Princeton University, Princeton, NJ, USA email: \url{qy1448@princeton.edu}. S.J. is with the Department of Applied and Computational Mathematics and Statistics, University of Notre Dame, Notre Dame, IN, USA, (correspondence to: \url{soham.jana@nd.edu}). J.F.'s research is supported by  NSF Grants DMS-2210833 and DMS-2412029 and the ONR Grant N00014-25-1-2317. }}
  \maketitle
} \fi

\if0\blind
{
  \bigskip
  \bigskip
  \bigskip
  \begin{center}
    {\LARGE\bf Factor Informed Double Deep Learning For Average Treatment Effect Estimation}
\end{center}
  \medskip
} \fi

\begin{abstract}

We investigate the problem of estimating the average treatment effect (ATE) under a very general setup where the covariates can be high-dimensional, highly correlated, and can have sparse nonlinear effects on the propensity and outcome models. We present the use of a Double Deep Learning strategy for estimation, which involves combining recently developed factor-augmented deep learning-based estimators, FAST-NN,  for both the response functions and propensity scores to achieve our goal.  By using FAST-NN, our method can select variables that contribute to propensity and outcome models in a completely nonparametric and algorithmic manner and adaptively learn low-dimensional function structures through neural networks.  Our proposed novel estimator, FIDDLE (Factor Informed Double Deep Learning Estimator), estimates ATE based on the framework of augmented inverse propensity weighting AIPW with the FAST-NN-based response and propensity estimates. FIDDLE consistently estimates ATE even under model misspecification, and is flexible to also allow for low-dimensional covariates. Our method achieves semiparametric efficiency under a very flexible family of propensity and outcome models. We present extensive numerical studies on synthetic and real datasets to support our theoretical guarantees and establish the advantages of our methods over other traditional choices, especially when the data dimension is large.

\end{abstract}

\noindent%
{\it Keywords:}  
Factor models, Deep learning, FAST-NN, AIPW, Average treatment effect.




\section{Introduction}
Estimating the average treatment effect ({ATE}) is a central task in causal inference, which has led to significant findings in many disciplines, including economics \citep{oreopoulos2006estimating} and political science \citep{aronow2013beyond}. ATE measures the expected difference in responses between units assigned to a treatment and those assigned to a control. In mathematical terms, given an experimental unit with covariate vector $\bx\in \reals^p$ and treatment assignment indicator $T$ ($T=1$ denotes that the unit receives treatment and $T=0$ indicates that the unit is in the control group), the population outcome of $y$ is given as
\begin{align}\label{eq:model-0}
	\EE[y|T,\bx] = \mu_0^*(\bx)\indc{T=0}+\mu_1^*(\bx)\indc{T=1},
\end{align}
where $\mu_0^*,\mu_1^*$ are unknown outcome functions and the ATE is given by  $\mu = \EE\qth{\mu_1^*(\bx)-\mu_0^*(\bx)}.$
In practice, it is common to collect data on many variables deemed important in affecting policy outcomes; however, the treatment effect changes depending only on a handful of covariates, which are usually unknown to statisticians.   Our method allows us to select important variables in a completely nonparametric and algorithmic manner.  In addition, the covariates affecting the responses and treatments can be high-dimensional and highly correlated, and researchers might have incorrect assumptions about the data-generating models on the outcome and propensity functions.  These challenges can be addressed by employing the recently developed neural network method FAST-NN  \citep[Factor-Augmented Sparse Throughput Neural Networks]{fan2024factor}.

The ATE estimation problem becomes significantly challenging when the covariate dimension $p$ grows with the sample size, and could be significantly larger than that. It is standard in such scenarios to assume that the output functionals are low-dimensional functions of $\bx$. Factor modeling is historically considered to be an excellent choice for studying low-dimensional structures in the data  and can also account for dependency among data variables
\citep{fama2015five, fan2020factor}. 
Applications exist in various important statistical problems, such as covariance estimation \citep{fan2008high}, dependence modeling  
\citep{oh2017modeling}, variable selection \citep{fan2020factor}, tensor modeling \citep{zhou2025factor} and clustering \citep{tang2024factor}. Given a $p$-dimensional random vector $\bx$, the factor model assumes
\begin{align}\label{eq:factor}
	\bx=\bB\bff + \bu, \quad \bB\in \reals^{p\times r},\bff\in \reals^r, \bu\in \reals^p,\quad r<p, 
\end{align}
where the latent factor $\bff$ and the independent idiosyncratic part $\bu$ are unobservable random variables and the factor loading matrix $\bB\in\reals^{p\times r}$ is fixed but unknown. The loading matrix $\bB$ indicates how the covariate vector $\bx$ depends on the latent factor $\bff$. 

For modeling a function $m(\bx)$ such as the outcome or propensity functions using low-dimensional components, one often assumes the factor-augmented sparse throughput ({FAST}) model $m(\bx)\eqdef m(\bff,\bu_{\calJ})$, where $\calJ$ is the set of active coordinates of $\bu$ and $r+|\calJ|$ (here and below, given any set $\calJ$, we denote its size by $|\calJ|$) is significantly smaller than the covariate dimension $p$. In particular, the factor structure enables us to estimate the effective components $\bu_\calJ$ and $\bff$ more accurately as $p$ grows via neural networks \citep{fan2024factor} and the FAST-NN in that paper allows us to nonparametrically select the variable set $\calJ$.  As noted in \cite{fan2024factor}, the FAST model is very flexible and includes most existing models, from factor regression to sparse models in  both parametric and nonparametric forms.  Note that given $\bff$, the FAST model $m(\bff,\bu_{\calJ})$ can also be written as 
$$
m(\bff,\bu_{\calJ})  = \tilde m(\bff,\bx_{\calJ}) = \left \{
\begin{array}{l l}
	\tilde m(\bx_{\calJ})  &  \mbox{specific case I: sparse regression}\\
	\tilde m(\bff) &   \mbox{specific case II: factor regression}
\end{array} \right .
$$ 
for another function $\tilde m$ under the factor model \eqref{eq:factor}.  Therefore, it is a factor-augmented sparse nonparametric regression model.  The model includes the nonparametric sparse regression model $\tilde m(\bx_{\calJ})$  and nonparametric factor regression model  $\tilde m(\bff)$ as two specific examples.  It is applicable to the case where there is no factor structure ($r=0$, covariates are weakly correlated) or low-dimensional setting ($\cJ = $ all variables).

In our current manuscript, we study the application of Deep Learning (DL) methods for estimating {ATE}. Efficient estimation of ATE often involves estimating both the responses (corresponding to treatment and control groups) and the propensity score, the conditional probability of receiving treatment given the covariates \citep{hirano2003efficient}, via the Augmented Inverse Propensity Weighting AIPW. Given an experimental unit with treatment assignment indicator $T$ and covariate $\bx$, its propensity score is defined as
\begin{align}\label{eq:propensity}
	\pi^*(\bx)=\EE\qth{T|\bx} .
\end{align} 
Deep Learning is an extremely useful estimation tool when the structures of the target functions are unknown and possibly nonlinear. We term the strategy of using DL to learn both response and propensity component as the \textbf{Double Deep Learning} ({DDL}) technique, and our proposed estimator of the ATE will combine the benefits of such deep learning strategies. In the current literature, it is unclear whether DL methods are valuable tools for ATE estimation in the presence of strong covariance dependency and sparsity. For applying DL to handle strong covariates dependence, it is sensible to perform a denoising step to capture the independent components of the high-dimensional covariates and use the projected data to perform function estimation. However, the dependency structure is often misspecified (e.g., incorrect knowledge about $r$), leading to incorrect constructions of the denoising algorithms. It is known in the literature that model misspecifications can hurt propensity estimation significantly and lead to biased estimation of the ATE ~\citep{drake1993effects}. From a practitioner's perspective, it is desirable to have efficient {ATE} estimators that can counter the practical issues mentioned above. On the other hand, the ATE estimation strategy should be flexible to tackle the case where the covariates are given to be low-dimensional, and we do not need to estimate the factor structure. In brief, we address the following:\\

{\it {Can using Double Deep Learning for responses and propensity estimation lead to efficient ATE estimation, both in the case of low-dimensional covariates and high-dimensional covariates with or without factor structures, even under model misspecifications?}}\\

In this paper we answer this question affirmatively. We show that for high-dimensional covariates, even when an over-specification $\bar r$ of the factor dimension $r$ is provided (this includes the useful case that covariates are weakly correlated, but the factor model is used.), we can construct consistent factor augmented and deep learning based ATE estimators. Our results allow the covariate dimension to be significantly larger than the sample size, leading to resolving the problem in high dimensions. The versatility of our inference also provides the option to remove the factor modeling component when dealing with low-dimensional scenarios. Our ATE estimator is asymptotically Gaussian and semiparametrically efficient.

\subsection{Our contributions}

\subsubsection*{Methodological contribution}
To our knowledge, our work is the first to introduce factor augmented deep learning techniques in the context of ATE estimation and analyze their theoretical guarantees. We propose a double deep learning type estimator called {\textbf{FIDDLE}}, that stands for the \textbf{F}actor \textbf{I}nformed \textbf{D}ouble \textbf{D}eep \textbf{L}earning \textbf{E}stimator. Suppose we have $n$ observations of the response, treatment indicator, and covariate values, given by $\{(y_i,T_i,\bx_i)\}_{i=1}^n$. Our algorithm consists of three major steps:
\begin{itemize}
	\item \textbf{The pretraining factor augmentation step:} This step aims to estimate the factor components that determine the response and propensity (target) functions.  We introduce a novel diversified projection matrix construction to perform factor augmentation. This is the only step where we use an independent pretraining sample of negligible size, vanishing compared to $n$. When the covariate dimension is low, we remove this factor augmentation step from our method, and the following steps remain the same. 
	
	\item \textbf{The double deep learning step:} We estimate the outcome and propensity functions using factor-augmented deep neural networks. Specifically, we use a newly constructed diversified matrix to construct the {FAST-NN} \citep{fan2024factor} type estimators  $\hat\mu_0,\hat \mu_1,\hat\pi$.
	
	\item \textbf{The ATE estimation step:} We use the structure of the {\it Augmented Inverse Propensity Weighted ({AIPW})} estimator \citep{glynn2010introduction}, to combine the deep-learning-based FAST-NN estimators in the last step, and apply them to the same set of data without any sample splitting to construct the ATE estimator FIDDLE
	\begin{align}\label{eq:aipw}
		\mufidd
		=\frac 1n \sum_{i=1}^n 
		&\left\{\pth{{T_iy_i\over \hat \pi(\bx_i)}-{(1-T_i)y_i\over 1-\hat\pi(\bx_i)}}
		-(T_i-\hat \pi(\bx_i))\pth{{\hat \mu_1(\bx_i)\over \hat \pi(\bx_i)}+{\hat\mu_0(\bx_i)\over 1-\hat \pi(\bx_i)}}\right\}.
	\end{align}
	The doubly robust structure of the AIPW estimator enables us to combine the consistency of the estimators for response and propensity to produce an efficient estimator of {ATE}.
\end{itemize}
	
\begin{remark}[Comparison with previous algorithms]
	The double deep learning-type strategies to combine deep-learning-based estimators are not new in the ATE estimation literature \citep{du2021dimension,farrell2015robust}. However, such constructions often require knowledge of the exact low-dimensional structure of the target functions \cite[Condition 1]{du2021dimension} and often considers other dependency structures \citep{farrell2015robust} and sparsity as in the FAST model, which can significantly degrade the performance. In comparison, the construction of FIDDLE employs factor-augmented deep learning strategies to draw inference under dependency assumptions and learn low-dimensional target functions algorithmically. FIDDLE is also able to work with an overspecified number of factors, which significantly extends its applicability. In addition, if practical knowledge suggests that the data-generating response and propensity functions do not depend on the factor components, we apply our algorithm without the factor-augmentation step, thereby avoiding the pretraining step. Such scenarios often arise when the covariate dimension is small. 
\end{remark}

\subsubsection*{Contributions to theory and applications} Our paper is the first to show asymptotic normality of the {FAST-NN} based AIPW estimator for estimating the average treatment effect. We only use a negligible pretraining sample (compared to the sample size $n$) to construct the diversified projection matrix, and the rest of the deep learning-oriented estimator construction does not involve any further sample splitting. This makes it challenging to provide theoretical guarantees for the corresponding ATE estimator FIDDLE. In particular, our theoretical contributions are threefold:
\begin{itemize}
	\item \textbf{Results for our new diversified projection matrix:} We show that our proposed diversified projection matrix adheres to the requirements in the literature \citep{fan2022learning} and that its singular values are large enough to produce strong estimation guarantees for the response and propensity functions. Our construction differs from the previous method in \cite{fan2024factor} where an incoherence condition on the sample variance matrix \citep{candes2007sparsity, abbe2020entrywise} is required in order to deduce boundedness of the diversified projection matrix (see \prettyref{def:div-proj}). In contrast, our new construction is simple and removes such a requirement.
	
	\item \textbf{Estimation guarantees for response and propensity functions:} We demonstrate that, under the assumption of a hierarchical composition model, factor-augmented neural network estimators can provide optimal guarantees even when the covariate dimension is high. Additionally, these intermediate steps help identify the active components of the covariates in the propensity and response functions, thereby providing interpretable results. This provides valuable information for policy-making, answering questions such as which covariate components influence the assignment of individual units to treatment and control groups, as well as their corresponding outcomes. Our theoretical guarantees for response function estimation using the factor-augmented neural network deviates from the existing work of \cite{fan2024factor}, that studied the function estimation problem with a fixed dataset, as we need to use the random subsamples of control and treatment groups to estimate the response functions $\mu_0^*,\mu_1^*$ respectively. We improve on the above work and provide a detailed analysis of controlling the estimation errors in such random setups.

	\item \textbf{Efficiency guarantees for FIDDLE: } We show that under some broad and relaxed smoothness assumptions on the outcome and propensity functions, FIDDLE enjoys asymptotic normality with semiparametric efficiency for ATE estimation. The analysis comes with significant challenges as we avoid sample splitting to perform the ATE estimation. The semiparametric efficiency is a desirable property in the literature for such tasks \citep{farrell2015robust,fan2022optimal}, as this helps to construct confidence intervals for the unknown treatment effects.  
	
\item \textbf{Contributions in numerical studies:} We also present comparisons of our methods with many classical off-the-shelf ATE estimation techniques and demonstrate how a factor-oriented denoising step helps improve performance in high-dimensional scenarios. Our studies support our theoretical results and show that the accuracy of our estimators increases impressively as the covariate dimensions grow large, even beyond the sample size. The methods we compare against include other regularized neural networks, Generative Adversarial Networks (GANs), and Causal Forest, among others. In terms of studying semi-synthetic data, we use the CIFAR-10 dataset (Canadian Institute For Advanced Research), which demonstrates the excellent performance of FIDDLE over other state-of-the-art methods for ATE estimation, particularly as the dimensionality of covariates $\boldsymbol{x}$ or sample size increases. Furthermore, we apply FIDDLE and benchmark methods to a real-world dataset from the Metabolic and Bariatric Surgery Accreditation and Quality Improvement Program (MBSAQIP) to evaluate the causal effects of different bariatric surgery procedures on weight loss after $30$ days of surgery. 
\end{itemize}

\begin{remark}[Comparison with similar existing results]
	To establish the asymptotic distribution of {FIDDLE}, we apply a proof technique based on the concept of bracketing integral \citep{vaart2023empirical} to control the randomness of the estimators without sample splitting, which was inspired by the work of \cite{fan2022optimal}. Still, the analysis differs significantly as  we work with (a) an AIPW-type estimator instead of the {IPW} and  (b) deep neural networks instead of the standard non-parametric structural assumption. The use of deep neural networks makes our work model agnostic and algorithmic, providing a more general guarantee. In addition, it is known in the literature \citep{glynn2010introduction} that the AIPW estimator obtains ATE estimators with lower variance compared to the {IPW} estimator, which helps our cause as well. We also establish that even in the presence of the factor structure, which leads to high correlation in the covariates, we can achieve the above results. In particular, our guarantees excel when the covariate dimension is significantly large.  The randomness of subsamples
	$\{(y_i,\bx_i)\}_{i\in [n],T_i=1}$ and $\{(y_i,\bx_i)\}_{i\in [n],T_i=0}$ also contribute to the technical proofs.
\end{remark}

\subsection{Related works}
%

Factor models play a crucial role in uncovering low-dimensional latent  structures. Foundational contributions include \citep{chamberlain1982arbitrage} and \citep{bai2003inferential}, which established identification and inference under general factor structures. We learn latent factors based on Diversified Projections (DP) \citep{fan2022learning}, which allows for low-sample size and purposeful overestimation of latent factors for robustness. Our estimator incorporates DP to recover the shared latent structure and improve both treatment and outcome estimation.  

Recent developments in machine learning have further enriched the landscape of causal inference, particularly in high-dimensional or nonlinear regimes. Double Machine Learning (DML) \citep{chernozhukov2018double} formalizes orthogonalization and sample splitting for inference under ML-based function estimation. Causal Forests \citep{wager2018estimation} adapt random forests to estimate conditional average treatment effects (CATE) using specialized split criteria. GANITE \citep{yoon2018ganite} uses generative adversarial networks to learn counterfactual outcomes and derive individualized treatment effects. Recent methodological reviews \citep{hoffmann2024double, brand2023recent} provide comprehensive evaluations of these approaches. Furthermore, recent work on Calibrated Debiased Machine Learning (C-DML) \citep{vanderlaan2024automatic} introduces novel doubly robust estimators that maintain asymptotic linearity even under misspecification. While these methods offer flexibility and strong empirical performance, many do not explicitly account for latent factor structure in the covariates, and several are sensitive to model misspecification due to reliance on either outcome or treatment models alone. In contrast, our method integrates the strengths of factor structural modeling and modern machine learning by combining factor-based learning, neural network estimation, and the AIPW framework into a unified pipeline. 




Deep neural networks (DNNs) \citep{lecun2015deep} have shown state-of-the-art performance in high-dimensional learning tasks and can recover low-dimensional structure \citep{mousavi2015deep,chen2025multilook}. Recent studies \citep{yarotsky2017error, kohler2021rate} provide non-asymptotic guarantees across function classes. In nonparametric regression, DNNs mitigate the curse of dimensionality \citep{bauer2019deep, schmidt2020nonparametric, fan2024noise, bhattacharya2024deep} with the property of adaptively and algorithmically learning low-dimensional structure. Our method leverages the FAST-NN architecture proposed by \cite{fan2024factor}, which is designed to adaptively capture sparse and dense components of the covariate space. This flexibility makes it particularly suitable for estimating both propensity scores and outcomes in AIPW estimation. Furthermore, \cite{farrell2021deep} offers theoretical support for using DNNs in semiparametric estimation without sample splitting, aligning with our unified approach for efficient estimation of {ATE}.

\subsection{Organization of the manuscript}

The remainder of the paper is organized as follows. 
\prettyref{sec:prep} introduces the setup, notation, and structural definitions of the problem that underpin our framework. 
\prettyref{sec:method} introduces our proposed estimator FIDDLE, and its components. \prettyref{sec:theory} formally describes the model assumptions and the theoretical guarantees for our estimator, including consistency and convergence rates. \prettyref{sec:numerical} presents simulation studies that compare the performance of our method with existing benchmarks under various simulated and real datasets. Additional supporting results are provided in the appendix.

\section{Preparation}
\label{sec:prep}

We build our model using a fully connected deep neural network with ReLU activation $\bar \sigma(\cdot)=\max\sth{\cdot,0}$ similar to \cite{fan2024factor}. Before presenting our methodology and results, we provide definitions that we will rely on throughout the manuscript.

\begin{definition}[Deep \relu Networks]
	Let $L$ be any positive integer and
	$\bd = (d_1, . . . , d_{L+1}) \in \naturals^{L+1}$. A deep ReLU network $g:\reals^{d_0} \to \reals^{d_{L+1}}$ is given as
	the form
	\begin{align}
		\label{eq:relu}
		g(\bx) = \calL_{L+1} \circ \bar\sigma \circ \calL_{L} \circ \bar\sigma \circ \cdots \circ \calL_2 \circ \bar\sigma \circ \calL_1(\bx),
	\end{align}
	where $\calL_{\ell}(\bz) = \bW_{\ell}\bz + \bb_\ell$ is a linear transformation with the weight parameters $\bW_{\ell} \in \reals^{d_{\ell}\times d_{\ell-1}},\bb_\ell \in \reals^{d_{\ell}}$, and $\bar\sigma : \reals^{d_{\ell}}\mapsto  \reals^{d_\ell}$ applies the ReLU activation function coordinatewise.
	
\end{definition}

\begin{definition}[Deep \relu network class]\label{def:relu-class}
	For any $L\in \naturals,\bd\in \naturals^{L+1}, B,M\in \reals^+\cup \{\infty\}$, the deep \relu network family $\calG(L,\bd,M,B)$ with truncation level $M$, depth $L$, width vector $\bd$, and weight bound $B$ is given as
	\begin{align*}
		\calG(L,\bd,M,B)=\sth{\Tr_M(g(\bx)): g \text{ of form \eqref{eq:relu} with } \|\bW_{\ell}\|_{\max}\leq B,\|\bb_{\ell}\|_{\max}\leq B},
	\end{align*}
	where $\Tr_{M}(\cdot)$ is the coordinatewise truncation operator given by $[\Tr_M(\bz)]_i=\sgn(z_i)(|z_i|\wedge M)$ and $\|\cdot\|_{\max}$ denotes the suppremum norm of a vector. The class of deep \relu networks with depth $L$ and width $N$ is given by the specific case $\bd=(d_{in},N,N,\dots,N,d_{out})$, and we denote it throughout the text by $\calG(L,d_{in},d_{out}, N,M,B)$.
\end{definition}
We will use the following class of hierarchical composition functions to model $\mu_0^*,\mu_1^*,\pi^*$.
\begin{definition} [$(\beta,C)$-smooth functions]  A $d$-variate function $f$ is called $(\beta,C)$-smooth for $\beta,C>0$ if the following is satisfied. Decompose $\beta$ into integer part $r\geq 0$ and fraction part $0<s<1$. Then given every non-negative sequence $\alpha \in \naturals^d$ with $\sum_{j=1}^d \alpha_j = r$, the partial derivative $(\partial f)/(\partial x_1^{\alpha_1}
	\dots x_d^{\alpha_d})$ exists, and
	$\left|{\partial^r f\over
			\partial x_1^{\alpha_1}
			\dots \partial x_d^{\alpha_d}}
		(\bx) -
		{\partial^rf\over
			\partial x_1^{\alpha_1}
			\dots \partial x_d^{\alpha_d}}
		(\bz)\right|
		\leq C \|\bx - \bz\|^s_2. 
	$
\end{definition}

\begin{definition}[Hierarchical composition of smooth functions \citep{kohler2021rate,fan2024factor}]
\label{def:hierarchical}
Fix a constant $C>0$. Let $\calH(d,l,\calP)$ denote the class of $l$-depth and $d$-variate hierarchical composition of $(\beta,C)$-smooth functions for $(\beta,t)$ in a set $\calP$ with
$\label{eq:hierarchical}
		\calP \subset [1,\infty)\times \naturals^+,  \sup_{(\beta,t)\in \calP} (\beta \vee t) <\infty$
	\begin{itemize}
		\item $(l=1)$ We have the set of all $t$-variate functions with $(\beta,C)$ smoothness
		\begin{align*}
			\calH(d, 1,\calP) = &\left\{h : \reals^d \mapsto \reals : h(\bx) = g(\bx_{\calJ}), \text{ where } g : \reals^t \mapsto \reals \text{ is }\right.
			\nonumber\\  
			&~ \left.\text{$(\beta,C)$-smooth for some $(\beta, t) \in \calP$ and $\calJ\in [d],|\calJ|=t$}\right\}
		\end{align*}
		\item $(l\geq 2)$ We recursively define $\calH(d,l,\calP)$ as
		\begin{align*}
			\calH(d, l,\calP) = &\left\{h : \reals^d \mapsto \reals : h(\bx) = g(f_1(\bx),\dots, f_t(\bx)), \text{ where } g : \reals^t \mapsto \reals \text{ is }  \right.
			\nonumber\\  
			&~ \left.\text{$(\beta,C)$-smooth for some $(\beta, t) \in \calP$ and $f_i\in \calH(d,l-1,\calP),i\in [t]$}\right\}
		\end{align*}
	\end{itemize}
\end{definition}

Basically, $\calH(d,l,\calP)$ consists of the $l$ time compositions of $t$-variate functions of $(\beta, C)$ smoothness for any $(t, \beta) \in \cP$.
The accuracy of estimating $\mu_0^*,\mu_1^*,\pi^*\in \calH(d, l,\calP)$ will be quantified by the parameter $\gamma^*$ indicating the hardness of the above composition class.

\begin{definition}[Hardness parameter of $\calH(d, l,\calP)$]\label{def:dim-adjusted-smoothness}
	Given any $\calP$ satisfying \eqref{def:hierarchical} the hardness quantifier $\gamma^*$ of the worst case error of approximating any function in $\calH(d, l,\calP)$ by a deep \relu network is quantified by $\gamma^* ={\beta^* \over d^*} \text{ with } (\beta^*, d^*) = \argmin_{(\beta,t)\in \calP}{\beta\over t}$.
	In view of \cite{kohler2021rate}, we restrict to the case where all the compositions has a smoothness parameter
	$\beta \geq  1$ to simplify the presentation. The parameter $\gamma^*$ originates from the following approximation result of \cite{fan2024factor} (Theorem 4 therein), in which $\beta/t$ reflects the dimension-adjusted degree of smoothness in a component of the hierachical composition model.
\end{definition}

\begin{lemma}[Approximating $\calH(d, l,\calP)$ via deep \relu Networks]
	\label{thm:approx-smooth}
	Let $g$ be a $d$-variate, $(\beta, C)$-smooth function. There exists some universal constants $c_1$--$c_5$ depending only on $d, \beta, C$, such that for arbitrary $N\in \mathbb{N}^+ \setminus \{1\}$, there exists a deep \relu network $g^\dagger \in \mathcal{G}(c_1, d, 1, c_2 N, \infty, c_3 N^{c_4})$ satisfying $
		\|g^\dagger-g\|_{\infty,[0,1]^d} \le c_5 N^{-2\beta/d}$.
	Furthermore, if $g\in \mathcal{H}(d, l,\mathcal{P})$ with $\sup_{(\beta, t)\in \mathcal{P}} (\beta \lor t) <\infty$ and $g$ is supported on $[-c_6, c_6]^d$ for some constant $c_6$. There also exists some universal constants $c_{7}$--$c_{11}$ such that for arbitrary $N\in \mathbb{N}^+ \setminus \{1\}$, there exists a deep \relu network $g^\dagger \in \mathcal{G}(c_7, d, 1, c_8 N, \infty, c_9 N^{c_{10}})$ satisfying
	$\|g^\dagger-g\|_{\infty,[-c_{6},c_6]^d} \le c_{11} N^{-2\gamma^*}$.
\end{lemma}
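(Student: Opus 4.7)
The plan is to establish the two parts in sequence, with the second following from the first by induction on the composition depth $l$. For the first statement, concerning a single $(\beta,C)$-smooth function $g$ on $[0,1]^d$, I would follow a Yarotsky-style construction that combines localization via a ReLU partition of unity with local Taylor polynomial approximation. Each hat function is realised exactly by a constant-depth ReLU subnetwork, yielding a partition of unity $\{\phi_{\bj}\}$ subordinate to a suitably refined grid (of spacing $\asymp N^{-2/d}$, hence $\asymp N^{2}$ cells) on $[0,1]^d$. On each cell centred at $\bx_{\bj}$ take $T_{\bj}$ to be the degree-$\lfloor\beta\rfloor$ Taylor polynomial of $g$ at $\bx_{\bj}$; the $(\beta,C)$-smoothness condition gives the pointwise bound $|g(\bx)-T_{\bj}(\bx)|\lesssim N^{-2\beta/d}$ on the supporting cell, so $\sum_{\bj}\phi_{\bj}(\bx)T_{\bj}(\bx)$ inherits this rate. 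The core obstacle here is realising the products inside each $T_{\bj}$, together with the product $\phi_{\bj}T_{\bj}$, by a ReLU network of constant depth while keeping the overall width $O(N)$ and weights $O(\poly(N))$: I would plug in Yarotsky's approximate-multiplication gadget, which computes the product of two bounded numbers with error $\epsilon$ using a subnetwork of size $O(\log(1/\epsilon))$, and tune $\epsilon=N^{-2\beta/d}$ so that it is absorbed in the target rate. Careful bookkeeping of how grid cells, Taylor coefficients, and multiplication gadgets can be packed into a constant-depth width-$O(N)$ circuit then delivers the promised rate $c_5 N^{-2\beta/d}$; the truncation $\Tr_{M}$ with $M=\infty$ is vacuous.

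For the second statement I would proceed by induction on $l$. The base case $l=1$ is exactly the first statement applied to a $(\beta,t)$-smooth function of $t$ coordinates, giving approximation error $O(N^{-2\beta/t})\le O(N^{-2\gamma^*})$ because $\beta/t\ge\beta^*/d^*=\gamma^*$ for every $(\beta,t)\in\calP$ by the very definition of $\gamma^*$. For the inductive step, write $g(\bx)=g_{0}(f_{1}(\bx),\dots,f_{t}(\bx))$ where $g_{0}$ is $(\beta,t)$-smooth and $f_{1},\dots,f_{t}\in\calH(d,l-1,\calP)$. By induction there exist ReLU networks $\hat f_{i}\in\calG(c_{7}',d,1,c_{8}'N,\infty,c_{9}'N^{c_{10}'})$ with $\|\hat f_{i}-f_{i}\|_{\infty,[-c_{6},c_{6}]^{d}}\le c_{11}'N^{-2\gamma^{*}}$, and by the base case a network $\hat g_{0}\in\calG(c_{1},t,1,c_{2}N,\infty,c_{3}N^{c_{4}})$ with $\|\hat g_{0}-g_{0}\|_{\infty,K}\le c_{5}N^{-2\beta/t}\le c_{5}N^{-2\gamma^{*}}$ on a compact box $K$ chosen to strictly contain the ranges of $f_{1},\dots,f_{t}$ with a small safety margin absorbing the approximation errors. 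Stacking the $\hat f_{i}$ in parallel and feeding them into $\hat g_{0}$ yields the candidate $\hat g=\hat g_{0}(\hat f_{1},\dots,\hat f_{t})$, whose depth equals $c_{7}'+c_{1}$, whose width is bounded by $\max\{c_{8}',c_{2}\}N$, and whose weight bound stays polynomial in $N$, so $\hat g\in\calG(c_{7},d,1,c_{8}N,\infty,c_{9}N^{c_{10}})$ for updated universal constants.

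The error analysis then uses the triangle inequality $|g(\bx)-\hat g(\bx)|\le |g_{0}(f(\bx))-g_{0}(\hat f(\bx))|+|g_{0}(\hat f(\bx))-\hat g_{0}(\hat f(\bx))|$ with $f=(f_{1},\dots,f_{t})$ and $\hat f=(\hat f_{1},\dots,\hat f_{t})$. Since $g_{0}$ is $(\beta,C)$-smooth with $\beta\ge 1$, it is Lipschitz on any compact set with a constant depending only on $\calP$, so the first term is at most a constant times $\max_{i}\|f_{i}-\hat f_{i}\|_{\infty}\le c_{11}'N^{-2\gamma^{*}}$, while the second term is at most $c_{5}N^{-2\gamma^{*}}$ uniformly on $K$; summing gives $\|g-\hat g\|_{\infty,[-c_{6},c_{6}]^{d}}\le c_{11}N^{-2\gamma^{*}}$, closing the induction. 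The main technical points to watch are keeping the intermediate outputs inside the domain on which the outer smoothness bound remains valid (handled by enlarging $K$ by an $O(1)$ margin and by the truncation operator $\Tr_{M}$), controlling how universal constants accumulate across the $l$ inductive levels (safe because $l$, $\sup_{(\beta,t)\in\calP}(\beta\vee t)$, and the associated smoothness and Lipschitz constants depend only on $\calP$), and checking that parallel composition of ReLU networks stays inside the class $\calG$ with only a constant blow-up in width and a polynomial blow-up in the weight bound.
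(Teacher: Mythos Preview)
The paper does not prove this lemma; it is stated as an imported result, explicitly attributed to \cite{fan2024factor} (Theorem~4 therein), which in turn rests on the constant-depth approximation theory of \cite{kohler2021rate}. So your proposal is being held up against that literature rather than against anything argued in the present paper.

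Your outline matches the standard route taken in those references: a local Taylor expansion glued by a ReLU partition of unity for the base case, followed by induction on the composition depth $l$ using Lipschitz continuity of the outer function (valid since every $\beta\ge 1$ in $\calP$). The inductive step, the domain control via enlarging $K$ and invoking the truncation operator, and the tracking of constants across the finitely many levels are all handled correctly. One technical point to watch in the base case: the Yarotsky multiplication gadget you invoke has depth $O(\log(1/\epsilon))=O(\log N)$, not constant depth, and a direct count of $\asymp N^{2}$ local Taylor pieces would naively give width $O(N^{2})$ rather than $O(N)$; achieving the stated architecture $\calG(c_1,d,1,c_2 N,\infty,c_3 N^{c_4})$ with $c_1$ independent of $N$ requires the specific Kohler--Langer construction (fixed depth depending only on $\beta,d$, with the multiplication realised differently) rather than a literal Yarotsky plug-in. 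Your sketch gestures at this (``careful bookkeeping\ldots packed into a constant-depth width-$O(N)$ circuit'') but would need to cite or reproduce that construction to actually close the argument at the claimed network dimensions.
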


\begin{definition}[Bracketing number and integral \citep{vaart2023empirical}]\label{def:bracketing}
	Given any distribution $P$, a function class $\calF$ and a fraction $\epsilon>0$, let $\calN_{[]}(\epsilon,\calF,\|\cdot\|)$ denote the $\epsilon$-bracketing number of $\calF$ under any norm $\|\cdot\|$, i.e., the minimum number of $\epsilon$-brackets needed to cover $\calF$ in the $\|\cdot\|$ distance.
	Denote the bracketing integral as 
	$
	\tilde J_{[]}\pth{\delta,\calF,\|\cdot\|}
	= \int_0^\delta \sqrt{1+\log \calN_{[]}(\epsilon,\calF,\|\cdot\|)}\ d\epsilon.
	$
	We will pick a suitable norm later to fit our analysis.
\end{definition}

\begin{definition}[Diversified projection (DP) matrix \citep{fan2022learning,fan2024factor}]\label{def:div-proj}
	Let $\bar r \geq r$ and $c_1$ be a universal positive constant. A $p \times \bar r$ matrix $\bW$ is called a DP matrix if it satisfies (a) Boundedness: $\|\bW\|_{\max} \leq c_1$, (b) Exogeneity: $\bW$ is independent of $\bx_1,\dots , \bx_n$, (c) Significance: the matrix $\bH = p^{-1} \bW^\top \bB \in \reals^{\bar r\times r}$ satisfies $\nu_{\min}(\bH) \gg p^{-1/2}$.
	Each column of $\bW$ is called a diversified weight, and $\bar r$ is the number of diversified weights.
\end{definition}

\section{Methodology: \fidd}
\label{sec:method}

Our proposed estimator FIDDLE  is a {\it double deep learning estimator} that relies on estimating both the outcome and propensity function using factor-augmented sparse throughput neural networks (FAST-NN) and then applying the AIPW estimator \eqref{eq:aipw}. To obtain the above deep learning-based estimators, we use the idea of the FAST estimator introduced in \cite{fan2024factor} that uses a LASSO \citep{tibshirani1996regression} type penalized loss function. We describe the estimator below. Let $\bW\in \reals^{\barr\times p}$ be a given diversified projection matrix as defined in \prettyref{def:div-proj} (a construction of $\bW$ used in our work is outlined below). Suppose that we have the data $\{(y_i,T_i,\bx_i)\}_{i=1}^n$. Then estimate the factor component of $\bx_i$ as
\begin{align}
	\tilde \bff_i=\frac 1p \bW^\top \bx_i,\quad i=1,\dots,n.
	\label{eq:factors_daffodil}
\end{align}
To describe our objective functions to construct the deep learning estimators, define the clipped-$L_1$ function $\psi_\tau (x)$ with the clipping threshold $\tau > 0$ as
$
\psi_\tau (x) = {|x|\over \tau} \wedge 1.
$
Define $n_0=\sum_{i=1}^n (1-T_i),\quad n_1=\sum_{i=1}^n T_i$. Then the penalized mean squared error objectives $\hat R_{0},\hat R_{1},\hat R_{2}$ corresponding to estimating $\mu_0^*, \mu_1^*,\pi^*$ are defined as (the choice of the tuning parameters $\lambda_0,\lambda_1, \lambda_2, \tau_0,\tau_1, \tau_2, B,M,\bar r$ to guarantee our results will be described later)
\begin{align}
	\begin{gathered}
		\hat R_{t}(g, \bTheta)=\frac 1{n_t}\sum_{i=1,T_i=t}^n \sth{y_i - g
			\pth{\qth{\tilde \bff_i,\Tr_M(\bTheta^\top \bx_i)}}}^2 
		+\lambda_t\sum_{i,j}\psi_{\tau_t}(\Theta_{i,j}),\quad t=0,1
		\\
		\hat R_{2}(g, \bTheta)=\frac 1n\sum_{i=1}^n \sth{T_i - g
			\pth{\qth{\tilde \bff_i,\Tr_M(\bTheta^\top \bx_i)}}}^2 
		+\lambda_2\sum_{i,j}\psi_{\tau_2}(\Theta_{i,j})
	\end{gathered}
	\label{eq:l1-optim}
\end{align}
where $[x, y]$ denotes the concatenation of two vectors $x\in \reals^{d_1}$ and $y \in \reals^{d_2}$ to form a $(d_1 + d_2)$-dimensional vector, $\Tr_{M}(\cdot)$ is the truncation operator defined in \prettyref{def:relu-class}.  Following \cite{fan2024factor}, we optimize the above loss functions over $g\in \calG(L,\bar r+N,1,N,M,B)$, the \relu deep network class given via \prettyref{def:relu-class}, and $\bTheta\in \reals^{p\times N}$. Given any estimators $\hat g,\hat\bTheta$ originating from the above optimization, denote the corresponding FAST-NN estimator as
\begin{align}
	\label{eq:mfast}
	m^{\fast}(\bx; \bW, \hat g, \hat \bTheta) =  \hat g
	\pth{\qth{\tilde \bff,\Tr_M(\hat \bTheta^\top \bx)}}.
\end{align} 
In light of the above, we are now ready to present our primary estimators.

\subsection{Constructing a diversified projection matrix}
To construct a diversified projection matrix $\bW$ we first randomly pick $\{i_1,\dots,i_m\}\subset[n]$ and consider the spectral decomposition of the corresponding variance covariance matrix $\frac 1m \sum_{j=1}^m\bx_{i_j}\bx_{i_j}^\top$ to obtain the eigenvalues
$\{\hat \lambda_j\}$ and eigenvectors $\{\hat \bv\}$ so that
$$
\frac 1m \sum_{j=1}^m\bx_{i_j}\bx_{i_j}^\top
=\sum_{j=1}^m \hat \lambda_j \hat \bv_j\hat \bv_j^\top,\quad
\lambda_1\geq\lambda_2\geq\dots,\lambda_m\geq 0.
$$
Then we propose the following  novel construction of a diversified projection matrix
\begin{align}
	\label{eq:div-proj}
	\bW = \Biggl[\sqrt{\hat \blambda_1}\cdot \hat \bv_1,\dots,\sqrt{\hat \blambda_{\barr}}\cdot \hat \bv_\barr\Biggr]
\end{align}
We will show later in \prettyref{thm:div-proj} that $\bW$ satisfies the requirements of a diversified projection matrix with a constant-order smallest singular value. For showing theoretical guarantees, we can use $m=n^{1-\gamma}$ for some constant $\gamma>0$ and use $\{(y_i,T_i,\bx_i):i\in[n]/\{i_1,\dots,i_m\}\}$ for ATE estimation.  Therefore, the pretraining sample size $m$ is negligible.  For the convenience of notations, we will assume from this point onward an access to a $\bW$ that is independent of the data, whose size is indexed by $n$. As $m$ is negligible with respect to $n$, our theoretical results presented later will remain the same in view of the construction of $\bW$ above.

\begin{remark}[Comparison with the previous construction of DP matrix]
	\cite{fan2024factor} uses the matrix $\tilde \bW=\sqrt{p}[\hat \bv_1,\dots \hat \bv_\barr]$ as their choice of the DP matrix. Notably, showing the boundedness requirement for $\tilde \bW$ as in \prettyref{def:div-proj} is challenging, and requires the incoherence assumption in \cite{abbe2020entrywise}. For example, it is challenging to satisfy the boundedness requirement of \prettyref{def:div-proj} for the submatrix $\sqrt p [\hat\bv_{r+1},\dots,\hat\bv_{\barr}]$ of $\tilde\bW$, as the usual argument based on Weyl's Theorem \cite[Lemma 2.2]{chen2021spectral} provides significantly weaker controls on the magnitudes of eigenvalues $\hat\lambda_{r+1},
	\dots,\hat\lambda_{\barr}$ when the data generating $\bB$ matrix in \eqref{eq:factor} is of rank $r<\barr$. Our modifications for constructing $\bW$ directly guarantee the boundedness requirements and provide a more natural candidate for the DP matrix compared to $\tilde \bW$.
\end{remark}

\subsection{Response function estimation}
To estimate the outcome functions corresponding to the control and treatment groups, we run two separate FAST-NN on the data $\{(y_i,\bx_i):T_i=0,i\in [n]\}$ and $\{(y_i,\bx_i):T_i=1,i\in [n]\}$ respectively, and define the FAST-NN estimators for estimating $\mu_0^*,\mu_1^*$ as
\begin{align}
		\hat g_i(\cdot), \hat \bTheta_i \in \argmin_{
		\bTheta\in\reals^{p\times N}\atop g\in \calG(L,\barr+N,1,N,M,B)}\hat R_i(g,\bTheta),
	\quad 
		\hat \mu_{i}^{\fast}(\cdot) = m^{\fast}(\cdot; \bW, \hat g_i, \hat \bTheta_i),
		\quad 
		i=0,1.
		\label{eq:mu-fast}
\end{align}

\subsection{Propensity function estimation}
To estimate the propensity function $\pi^*$ given in \prettyref{eq:propensity} we construct the FAST-NN estimator using the treatment indicators for the experimental units:
\begin{align}\label{eq:fast-propensity}
	\begin{gathered}
		\hat g_2(\cdot), \hat \bTheta_2 \in \argmin_{
			\bTheta_2\in\reals^{p\times N}\atop g_2\in \calG(L,\barr+N,1,N,M,B)}\hat R_2(g,\bTheta),
		\quad \tilde \pi(\cdot) = m^{\fast}(\cdot; \bW, \hat g_2, \hat \bTheta_2).
	\end{gathered}
\end{align}
Note that to aid the theoretical analysis later on, we do not initially impose any restrictions to ensure that $m^{\fast}(\cdot; \bW, \hat g_2, \hat \bTheta_2)$ will lie within $[0,1]$, which is the range for the true propensity score. We perform a subsequent truncation step to obtain the final propensity estimator $\hat\pi^{\fast}=\max\sth{\alpha_n,\min\sth{\tilde \pi^\fast,1-\alpha_n}}$ for a suitable $\alpha_n\in [0,1] $ to be chosen.

\subsection{ATE estimation}
Ultimately, the {\it double deep learning}-type ATE estimator FIDDLE is given as
\begin{align}\label{eq:aipw-fast}
	\mufidd
	=\frac 1n \sum_{i=1}^n 
	&\left\{\pth{{T_iy_i\over \hat \pi^{\fast}(\bx_i)}-{(1-T_i)y_i\over 1-\hat\pi^{\fast}(\bx_i)}}\right.
	\nonumber\\
	&\quad \left.-(T_i-\hat \pi^{\fast}(\bx_i))\pth{{\hat \mu_1^{\fast}(\bx_i)\over \hat \pi^{\fast}(\bx_i)} + {\hat\mu_0^{\fast}(\bx_i)\over 1-\hat \pi^{\fast}(\bx_i)}}\right\}.
\end{align}

	\begin{remark}[Modifying our algorithm for low-dimensional covariates]
		When the covariates $\bx_1,\dots,\bx_n$ have a low dimension, the factor augmentation step becomes redundant. In that case, we modify our algorithm by replacing $\tilde \bff_i$ in \eqref{eq:factors_daffodil} with $\bx_i$ for all $i\in \{1,\dots,n\}$ and set $\bTheta = 0$ in \eqref{eq:l1-optim}. For simplicity of presentation, the reference to the FIDDLE method will also include such modifications. The proof of the theoretical results presented later accommodates this specific scenario in the case $r=0$. 
		
	\end{remark}

	\section{Theory}
	\label{sec:theory}
	
	\subsection{Model}
	
	We assume that data $\sth{(y_i,T_i,\bx_i)}_{i=1}^n$ are independently and identically distributed realizations of random variables $(y,T,\bx)$. 
	The model that generates $(y,T,\bx)$ is given by
	\begin{align}\label{eq:model-output}
		y(t) = \mu_t^*(\bx) +\varepsilon(t), t\in \{0,1\},\quad \PP\qth{T=1|\bx} = 1- \PP\qth{T=0|\bx} = \pi^*(\bx),
	\end{align}
	where $\varepsilon(0),\varepsilon(1)$ are mean zero random variables. The goal is to estimate $\mu = \EE\qth{\mu_1^*(\bx)-\mu_0^*(\bx)}.$
	We assume the factor model $\bx=\bB\bff + \bu$ as in \eqref{eq:factor}, and model the functions $\mu_0^*,\mu_1^*,\pi^*$ as 
	\begin{align}\label{eq:low-dim-functions}
		\mu_0^*(\bx)=\mu_0^*(\bff,\bu_{\calJ_0}),
		\mu_1^*(\bx)=\mu_1^*(\bff,\bu_{\calJ_1}),
		\pi^*(\bx)=\pi^*(\bff,\bu_{\calJ_2}), \calJ_0,\calJ_1,\calJ_2\subset\sth{1,\dots,p}.
	\end{align}

	\subsection{Assumptions}
	\label{sec:asmp}
	
	\begin{myassumption}[Low dimensionality]
		\label{asmp:low-dim}
		$r,|\calJ_0|,|\calJ_1|,|\calJ_2|$ are at most finite constants.
	\end{myassumption}
	
	\begin{myassumption}[Response and propensity bounds]
		\label{asmp:reg}
		$\|\mu_0^*\|_\infty,\|\mu_1^*\|_\infty \leq M^*,
		\pi^*\in (\alpha_*,1-\alpha_*)$
		for constants $M^*,\alpha_*\in (0,1)$ and $\mu_0^*,\mu_1^*,\pi^*$ are $c_1$-Lipschitz for some constants $c_1>0$. We further assume that $1\leq M^* \leq M\leq c_2 M^*$ for some constant $c_2 > 1$, where $M$ is the trimming parameter used in constructing the {FAST-NN} estimators in \prettyref{sec:method}.
	\end{myassumption}
	
	\begin{myassumption}[Unconfoundedness]\label{asmp:confoundedness}
		$T$ is independent of $(y(0),y(1))$ given $\bx$.
	\end{myassumption}
	
	\begin{myassumption}[Boundedness]
		\label{asmp:bounded}
		For the factor model (2.2), there exist universal constants $c_1$ and $b$ such that (a) the factor loading matrix satisfies $\|\bB\|_{\max} \leq c_1$, (b) the factor component $\bff$ of $\bx$ is zero-mean and supported on $[-b, b]^r$, and (c) the idiosyncratic component $\bu$ of $\bx$ is zero-mean and supported on $[-b, b]^p$.
		This also implies that covariates are bounded in each coordinate and $\bx_1,\dots,\bx_n\in [-K,K]^p$ for some constant $K>0$. 
	\end{myassumption}
	
	\begin{myassumption}[Weak dependence]
		\label{asmp:depend}
		$\sum_{j,k\in [p],j\neq k} \abs{\EE[u_ju_k]}\leq c_1\cdot p$ for some constant $c_1$.
	\end{myassumption}
	
	\begin{myassumption}[Sub-Gaussian noise]
		\label{asmp:noise}
		There exists a universal constant $c_1$ such that 
		$$\PP\qth{\abs{\varepsilon(0)} \geq t|\bff,\bu},\PP\qth{\abs{\varepsilon(1)} \geq t|\bff,\bu} \leq 2e^{-c_1t^2}$$
		for all the $t > 0$ almost surely.
	\end{myassumption}
	
	\begin{myassumption}[Pervasiveness]
		\label{asmp:pervasiveness}
		${p\over c_1}< \lambda_{\min}(\bB^\top \bB) \le \lambda_{\max}(\bB^\top \bB) \le c_1 p$ for a constant $c_1$.
	\end{myassumption}
	
	\begin{myassumption}[Weak dependence between $\bff$ and $\bu$]
		\label{asmp:dist-fu}
		$\|\bB \bSigma_{\bff, \bu}\|_{F} \le c_1 \sqrt{p}$ for a constant $c_1>0$, where $\bSigma_{\bff, \bu} = \mathbb{E} [\bff \bu^T] \in \mathbb{R}^{r\times p}$ is the covariance matrix between $\bff$ and $\bu$. 
	\end{myassumption}
	
	\begin{myassumption}[Function Class and ReLU Hyperparameters]
		\label{asmp:hyperparameters}
		The true response and propensity functions satisfy
		$\mu_0^*\in \calH(r+|\calJ_0|,l,\calP),
		\mu_1^*\in \calH(r+|\calJ_1|,l,\calP),
		\pi^*\in \calH(r+|\calJ_2|,l,\calP)$
		for some bounded constants $r, l, |\calJ_0|,|\calJ_1|,|\calJ_2|$, and $\calP$ has the dimension-adjusted smoothness $\gamma^*>\frac 12+c_0$  for a constant $c_0>0$, where $\gamma^*$ is given by \eqref{def:dim-adjusted-smoothness}). The following conditions on the deep ReLU network hyperparameters hold for constants $c_1,\dots,c_6$ which only depend on $l$ and $\mathcal{P}$ of $\{\mathcal{H}(r+|\calJ_j|, l,\mathcal{P})\}_{j=0,1,2}$.
		\begin{align}
			\begin{gathered}
				c_1 \le L \leq c_2,\quad  c_3 \log n \le \log B \leq c_4\log n,
				\quad 
				r\le \overline{r} \lesssim c_3\\
			c_5 (n/\log n)^{\frac{1}{4\gamma^*+2}} \le N \leq c_6 (n/\log n)^{\frac{1}{4\gamma^*+2}}.
			\end{gathered}
		\end{align}
		
	\end{myassumption}
	
	\begin{remark}[Discussion of the assumptions]
		\prettyref{asmp:low-dim} and \prettyref{asmp:reg} are standard in the deep learning literature \citep{kohler2021rate,fan2024noise}. \prettyref{asmp:confoundedness} is also standard in the Causal Inference literature \citep{hirano2003efficient}, which ensures that there are no unmeasured confounders. \prettyref{asmp:bounded} through \prettyref{asmp:pervasiveness} are also borrowed from the factor modeling literature \citep{fan2024factor}. \prettyref{asmp:dist-fu} subsumes the standard assumption of independence of $\bff$ and $\bu$ in the factor modeling literature, which is usually needed for identifiability of the model \citep{fan2021robust}. \prettyref{asmp:hyperparameters} provides necessary constraints on the complexity of the outcome and propensity models to guarantee asymptotic normality of {FIDDLE}. This is also standard in the literature of nonparametric regressions via deep neural networks for achieving optimal mean squared errors \citep{fan2024factor} and the class of functions is indeed very broad, including additive models or more generally the compositions of low-dimensional functions.
	\end{remark}
	
	\subsection{Main results}
	We begin with a guarantee for our proposed diversified projection matrix, as shown in \prettyref{eq:div-proj}.
	\begin{theorem}
		\label{thm:div-proj}
		The diversified projection matrix $\bW$ constructed in \eqref{eq:div-proj} is a valid diversified projection under \prettyref{asmp:bounded} through
		\prettyref{asmp:dist-fu}. In addition, there exist universal constants $c_1,c_2,c_3$ independent of $m, p, t, r, \overline{r}$ such that
		\begin{align*}
			c_1 - c_2 \left(r\sqrt{\frac{\log p + t}{m}} + r^2\sqrt{\frac{\log r + t}{m}} + \frac{1}{\sqrt{p}}\right)\le \nu_{\min}(p^{-1}{\bW}^\top \bB) \le \nu_{\max}(p^{-1}{\bW}^\top \bB) \le c_3 .
		\end{align*} 
	\end{theorem}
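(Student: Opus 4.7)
The plan is to verify the three requirements of a DP matrix (boundedness, exogeneity, significance) and extract the explicit singular value estimates as a by-product. Writing $\hat\bSigma := \tfrac{1}{m}\sum_{s=1}^m \bx_{i_s}\bx_{i_s}^\top$, boundedness will follow from the eigen-equation $\hat\bSigma\hat\bv_j = \hat\lambda_j\hat\bv_j$: for each $j\le\bar r$ and coordinate $k$,
\[
\hat\lambda_j\,(\hat\bv_j)_k \;=\; \be_k^\top\hat\bSigma\hat\bv_j \;=\; \tfrac{1}{m}\sum_{s=1}^m x_{i_s,k}(\bx_{i_s}^\top\hat\bv_j),
\]
and a Cauchy--Schwarz application combined with $\tfrac{1}{m}\sum_s(\bx_{i_s}^\top\hat\bv_j)^2 = \hat\lambda_j$ gives $\hat\lambda_j(\hat\bv_j)_k^2 \le K^2$ under Assumption~\ref{asmp:bounded}, hence $|W_{k,j}| = \sqrt{\hat\lambda_j}|(\hat\bv_j)_k| \le K$. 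Exogeneity is immediate since $\bW$ is built only from the independent pretraining subsample.

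The singular value bounds I would derive from the sandwich identity
\[
(p^{-1}\bW^\top\bB)^\top (p^{-1}\bW^\top\bB) \;=\; p^{-2}\bB^\top[\hat\bSigma]_{\le\bar r}\bB, \qquad [\hat\bSigma]_{\le\bar r} := \sum_{j=1}^{\bar r}\hat\lambda_j\hat\bv_j\hat\bv_j^\top,
\]
so the task reduces to bounding $\lambda_{\max}$ and $\lambda_{\min}$ of this $r\times r$ PSD matrix. For the upper bound the key observation is $[\hat\bSigma]_{\le\bar r}\preceq \hat\bSigma$, after which I can rewrite $p^{-2}\bB^\top\hat\bSigma\bB = \tfrac{1}{m}\sum_s (p^{-1}\bB^\top\bx_{i_s})(p^{-1}\bB^\top\bx_{i_s})^\top$; splitting $\bx = \bB\bff+\bu$ and using pervasiveness together with $\|p^{-1}\bB^\top\bu\|_2^2 \le p^{-2}\|\bB\bB^\top\|_{\mathrm{op}}\|\bu\|_2^2 = O(1)$ shows each summand has operator norm $O(1)$, which yields the constant bound $c_3$.

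For the lower bound, I would first use $[\hat\bSigma]_{\le\bar r}\succeq[\hat\bSigma]_{\le r}$ to reduce to lower bounding $\lambda_{\min}(p^{-2}\bB^\top[\hat\bSigma]_{\le r}\bB)$. The oracle target is $(p^{-1}\bB^\top\bB)\bSigma_{\bff}(p^{-1}\bB^\top\bB)$, whose smallest eigenvalue is a positive constant by pervasiveness together with non-degeneracy of $\bSigma_{\bff}$. I would then compare $[\hat\bSigma]_{\le r}$ to the pure factor covariance $\bSigma_0 := \bB\bSigma_{\bff}\bB^\top$ through $\hat\bSigma - \bSigma_0 = (\hat\bSigma - \bSigma) + (\bSigma - \bSigma_0)$. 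The deterministic bias $\bSigma-\bSigma_0 = \bSigma_{\bu}+\bB\bSigma_{\bff,\bu}+\bSigma_{\bff,\bu}^\top\bB^\top$ has operator norm $O(\sqrt p)$ by Assumptions~\ref{asmp:depend} and~\ref{asmp:dist-fu}, and after sandwiching by $\bB/p$ contributes the $1/\sqrt p$ term. The sample-population gap splits into two pieces: an entrywise Hoeffding concentration of $\hat\bSigma - \bSigma$, which after contraction by $\bB$ produces the $r\sqrt{(\log p + t)/m}$ contribution (the $\log p$ from unioning over $p^2$ entries, the $r$ from the rank of the final product), and a matrix-Bernstein concentration of the $r\times r$ factor block $p^{-2}\bB^\top\hat\bSigma\bB - \EE[p^{-2}\bB^\top\bx\bx^\top\bB]$, which produces the $r^2\sqrt{(\log r + t)/m}$ term. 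Pervasiveness supplies a $\Theta(p)$ eigengap between the $r$-th and $(r{+}1)$-th eigenvalues of $\bSigma_0$, so Weyl's inequality and the Davis--Kahan theorem pass these deviations cleanly from $\hat\bSigma$ to its rank-$r$ truncation.

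The main obstacle is tracking two different noise scales simultaneously: the $p$-dimensional idiosyncratic block contributes only through projection by $\bB$, so it enters at the $\log p$ rate but only with a factor $r$; while the factor block is intrinsically $r$-dimensional, concentrating at the finer $\log r$ rate but with an $r^2$ prefactor inherited from matrix concentration in $r\times r$ dimensions. Keeping these two contributions separate, rather than coarsely unioning the entire $p$-dimensional concentration, is what yields the quoted combined bound.
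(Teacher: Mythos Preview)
Your proposal is correct and aligned with the paper's proof, though the two differ in packaging. For boundedness, the paper reaches the same conclusion by the one-line observation $\hat\bSigma_{kk}=\sum_i\hat\lambda_i(\hat v_{ik})^2$, so each nonnegative summand satisfies $\hat\lambda_i(\hat v_{ik})^2\le K^2$; your Cauchy--Schwarz route is equivalent but slightly longer. For the singular-value bounds, the paper works eigenvalue-by-eigenvalue: it applies Weyl's inequality with the cited bound $\|\hat\bSigma-\bB\bB^\top\|_F\le c\,p(\,r\sqrt{(\log p+t)/m}+r^2\sqrt{(\log r+t)/m}+p^{-1/2})$ to pin $\hat\lambda_i\in(p/c_4,c_4p)$ for $i\le r$, then uses Courant--Fischer to obtain $\nu_{\min}(p^{-1}\bW^\top\bB)\ge\nu_{\min}(p^{-1}\bW_r^\top\bB)\ge c_4^{-1/2}\nu_{\min}(p^{-1/2}\hat\bV_r^\top\bB)$ and defers the last quantity to \cite[Proposition~1]{fan2024factor}. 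Your sandwich identity $(p^{-1}\bW^\top\bB)^\top(p^{-1}\bW^\top\bB)=p^{-2}\bB^\top[\hat\bSigma]_{\le\bar r}\bB$ together with the PSD orderings $[\hat\bSigma]_{\le r}\preceq[\hat\bSigma]_{\le\bar r}\preceq\hat\bSigma$ is exactly the same reduction, just phrased matricially; once expanded it yields the paper's two factors $\hat\lambda_r$ and $\nu_{\min}(p^{-1/2}\hat\bV_r^\top\bB)$. The benefit of your framing is that the upper bound falls out immediately from $[\hat\bSigma]_{\le\bar r}\preceq\hat\bSigma$ without separate eigenvalue work, and your account of where each of the three error terms originates is more self-contained than the paper's citation to an external lemma.

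One small caveat: you invoke ``non-degeneracy of $\bSigma_{\bff}$'' for the oracle lower bound, but this is not among Assumptions~\ref{asmp:bounded}--\ref{asmp:dist-fu}. The paper sidesteps this by targeting $\bB\bB^\top$ rather than $\bB\bSigma_{\bff}\bB^\top$, which amounts to the standard factor-model normalization $\bSigma_{\bff}=\bI_r$ (implicit in the cited Lemma~5 of \cite{fan2024factor}); you should make the same normalization explicit rather than introducing an unstated assumption.
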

	
	Note that due to our slight modification of the construction of $\bW$, we do not require incoherent type of conditions.  We now present function estimation guarantees. For $\hat R_0,\hat R_1,\hat R_2$ in \eqref{eq:l1-optim} and optimization error $\delta_{\opt}$, define $\{(\hat g_t,\hat\bTheta_t)\}_{t=0}^2$ as
	\begin{align}
		\hat R_{t}\pth{\hat g_t,\hat \Theta_t}
		&\leq \inf_{
			\bTheta\in\reals^{p\times N}\atop g\in \calG(L,\barr+N,1,N,M,B)}\hat R_{t}\pth{g, \bTheta}
		+\delta_{\opt},
		\quad t=0,1,2.
		\label{eq:mu1-loss}
	\end{align}
	Consider the FAST-NN estimators $\hat \mu_0^\fast,\hat \mu_1^\fast,\hat \pi^\fast$ defined as
	\begin{align}\label{eq:estimators}
		\begin{gathered}
			\hat \mu_0^\fast(\bx)=m^\fast(\bx;\bW,\hat g_0,\hat\bTheta_0),
			\quad \hat \mu_1^\fast(\bx)=m^\fast(\bx;\bW,\hat g_1,\hat\bTheta_1),\\
			\hat \pi^\fast(\bx)=\max\sth{1/\log n,\min\sth{ m^\fast(\bx;\bW,\hat g_2,\hat\bTheta_2),1-1/\log n}},
		\end{gathered}
	\end{align}
	Given any function $h$ and $j=0,1$, define $\|h\|_{n,j}^2= \frac 1{n_j}\sum_{i:T_i=j} h^2(\bx_i),\|h\|_{n}^2= \frac 1{n}\sum_{i=1}^n h^2(\bx_i)$, and $\|h\|_2^2 = \int h^2(\bx)dP(\bx)$, where $P$ is the law of $\bx$. Then we have the following result.
	\begin{theorem}[Oracle-type inequality for FAST-NN estimator]
		\label{thm:main-fast-nn}
		Suppose that all the assumptions in \prettyref{sec:asmp} hold, except for \prettyref{asmp:hyperparameters}, which is not used in this context. Consider the FAST model \eqref{eq:low-dim-functions} and the FAST-NN estimator obtained by solving \eqref{eq:mu-fast} and \eqref{eq:fast-propensity} with $N,B$ large enough such that $N\geq 2(r + \max\{|\calJ_j|:j=0,1,2\}), B \geq c_1r \max\{|\calJ_j|\}_{j=0}^2$,
		\begin{align*}
			&\lambda_j \geq c_2 {\log(n_jp(N + r)) + L\log(BN)\over n_j},\quad 
			&&\tau_j^{-1} \geq c_3(r + 1)(BN)^{L+1}(N + \bar r)p{n_j},\quad j=0,1\\
			&\lambda_2 \geq c_2 {\log(np(N + r)) + L\log(BN)\over n},\quad 
			&&\tau_2^{-1} \geq c_3(r + 1)(BN)^{L+1}(N + \bar r)p{n}
		\end{align*}
		for some constants $c_1,c_2,c_3$, and the number of diversified projections $\bar r \geq r$. Define 
		$$
		\begin{gathered}
			\delta_{i,a} = \inf_{g\in\calG(L,\barr+N,1,N,M,B)}\|g - \mu_i^*\|_\infty^2,\quad i=0,1,\quad
			\delta_{2,a} = \inf_{g\in\calG(L,\barr+N,1,N,M,B)}\|g - \pi^*\|_\infty^2,\\
			\delta_{i,s} = {(N^2L +N\bar r){L\log(BNn)}\over n} + \lambda_i|\calJ_i|,
			\quad
			\delta_{i,f} = {|\calJ_i|r \cdot \bar r\over {\nu_{\min}^2(\bH) \cdot p}}, \quad i=0,1,2.
		\end{gathered}
		$$ 
		Then, with probability at
		least $1 - 3e^{-t}$, the following holds, for $n$ large enough,
		\begin{align}
			\| \hat \mu_{j}^\fast - \mu_j^*\|_2^2+\| \hat \mu_{j}^\fast - \mu_j^*\|_{n,j}^2
			&\leq {c_4\over \alpha_*^2}\sth{\delta_\opt + \delta_{j,a} + \delta_{j,s} + \delta_{j,f} +\frac tn},\quad j=0,1 \label{fastnn1}\\
			\| \hat \pi^{\fast} - \pi^*\|_2^2+\| \hat \pi^{\fast} - \pi^*\|_n^2
			&\leq {c_4}\sth{\delta_\opt + \delta_{2,a} + \delta_{2,s} + \delta_{2,f} +\frac tn}, \label{fastnn2}
		\end{align}
		where $c_4$ is a constant and $\alpha_*$ is as given in \prettyref{asmp:bounded}.
	\end{theorem}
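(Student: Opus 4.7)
My plan is to build on the FAST-NN oracle-inequality framework of Fan et al.\ (2024), extending it in two directions: incorporating the new diversified projection matrix $\bW$ from \prettyref{thm:div-proj}, and handling the random subsampling induced by treatment assignment when estimating $\mu_0^*$ and $\mu_1^*$. The propensity bound \eqref{fastnn2} uses the full sample of size $n$ and should follow most directly from the existing analysis; the response bounds \eqref{fastnn1} require the additional subsample handling that produces the $1/\alpha_*^2$ prefactor.

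First I would exploit the near-optimality condition \eqref{eq:mu1-loss}: for each $j\in\{0,1,2\}$, pick a competitor $(g_j^\dagger, \bTheta_j^\dagger)$ where $\bTheta_j^\dagger$ has nonzero rows supported only on the active idiosyncratic coordinates $\calJ_j$ (so that $\bTheta_j^{\dagger\top}\bx$ reproduces $\bu_{\calJ_j}$) and $g_j^\dagger \in \calG(L, \bar r+N, 1, N, M, B)$ approximates the target function via \prettyref{thm:approx-smooth}. The approximation step produces $\delta_{j,a}$, while the clipped-$L_1$ penalty, given the prescribed $\tau_j$, saturates on the $|\calJ_j|$ surviving entries of $\bTheta_j^\dagger$ and contributes $\lambda_j|\calJ_j|$ to $\delta_{j,s}$. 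Next I absorb the factor-augmentation error. Writing $\tilde\bff = p^{-1}\bW^\top\bx = \bH\bff + p^{-1}\bW^\top\bu$, \prettyref{thm:div-proj} gives $\nu_{\min}(\bH) \gtrsim 1$, so $g_j^\dagger$ may be pre-composed with a linear map to account for $\bff \mapsto \bH\bff$. The residual $p^{-1}\bW^\top\bu$ has squared magnitude of order $r\bar r / (\nu_{\min}^2(\bH) p)$ by the weak dependence assumption \prettyref{asmp:depend}; propagating this through the Lipschitz dependence of $g_j^\dagger$ on its $r+|\calJ_j|$ active inputs yields $\delta_{j,f}$.

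For the stochastic error $\delta_{j,s}$ I would run a covering/Bernstein argument on the class $\calG(L, \bar r+N, 1, N, M, B)$ coupled with the effective sparsity imposed by the clipped-$L_1$ penalty on $\bTheta$; the log-covering number is of order $(N^2 L + N\bar r)L\log(BNn)$, which combined with the sub-Gaussian noise of \prettyref{asmp:noise} and a standard peeling argument produces the $(N^2 L + N\bar r)L\log(BNn)/n$ rate. The Bernoulli responses $T_i$ in the propensity problem are automatically bounded, so no sub-Gaussian control is needed there. Both the empirical and population $L^2$-norms on the left-hand sides are then recovered in the standard way: the basic inequality gives an empirical-norm bound, after which a uniform concentration argument on the localized class converts this into a population-norm bound, up to the same stochastic order.

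For $\mu_0^*, \mu_1^*$ the new ingredient is the randomness of $n_j = \sum_i \indc{T_i=j}$. I would first establish the event $\cE = \{n_j \ge \alpha_* n/2,\ j=0,1\}$ by a Chernoff bound using $\pi^* \in (\alpha_*, 1-\alpha_*)$, then condition on $\{T_i\}_{i=1}^n$. On $\cE$, \prettyref{asmp:confoundedness} ensures that $\{(\bx_i, y_i) : T_i = j\}$ is an i.i.d.\ sample from $P(\bx, y \mid T=j)$, so the preceding three steps apply verbatim with effective sample size $n_j$ and conditional norm $\|\cdot\|_{2,j}^2 = \mathbb{E}[\cdot^2 \mid T=j]$. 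Transferring to $\|\cdot\|_2^2$ uses that the Radon--Nikodym derivative $\mathbb{P}(T=j)/\mathbb{P}(T=j\mid\bx)$ is bounded by $1/\alpha_*$ under \prettyref{asmp:reg}, which together with the $1/n_j \le 2/(\alpha_* n)$ rescaling of the stochastic term yields the $1/\alpha_*^2$ prefactor. The hard part is precisely this subsampling step: because $\bW$, the truncation $\Tr_M$, and the clipped-$L_1$ penalty are all coupled, the Fan et al.\ (2024) FAST-NN inequality cannot be invoked as a black box on random data. The conditioning argument resolves this by reducing each response problem to an independent-sample FAST-NN problem of deterministic size $n_j$, while the exogeneity of $\bW$ (from the vanishing pretraining sample $m = n^{1-\gamma}$) keeps the deviations in \prettyref{thm:div-proj} dominated, so that $\nu_{\min}(\bH) \gtrsim 1$ holds uniformly on the relevant high-probability event.
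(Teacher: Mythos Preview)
Your proposal is correct and follows essentially the same route as the paper: the propensity bound is taken directly from the fixed-sample FAST-NN analysis, while for $\mu_0^*,\mu_1^*$ you condition on $\{T_i\}$, invoke a Chernoff bound to secure $n_j\ge \alpha_* n/2$, run the FAST-NN oracle argument (competitor with $|\calJ_j|$-sparse $\bTheta$, factor-augmentation error via $\tilde\bff=\bH\bff+p^{-1}\bW^\top\bu$, basic inequality plus covering/peeling for the stochastic term) on the conditional i.i.d.\ subsample, and then transfer from $\|\cdot\|_{2,j}$ to $\|\cdot\|_2$ via the Radon--Nikodym bound $dP\le dP_j/\alpha_*$. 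The paper organizes the same ingredients into an explicit eight-step lemma (their \prettyref{lmm:conditional-bound}) and a separate Chernoff lemma, but the decomposition, the key events, and the source of the $1/\alpha_*^2$ factor are exactly as you describe.
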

	
	The results \eqref{fastnn1},\eqref{fastnn2} give the mean squared errors for estimating the outcome functions and propensity score function in terms optimization error, approximation error,  complexity of neural networks, penalization biases, as well as the estimation error of latent factors.  The proof of the above results do not follow from the standard estimation guarantees for FAST-NN with fixed datasets as in \cite{fan2024factor}, as our estimators $\hat \mu_0^\fast$ and $\hat\mu_1^\fast$ are constructed using the random sub-samples corresponding to the treatment group and control groups.  The mean squared errors are measured with respected to the probability measure of the covariate $\bx$ and its empirical version. 
	
	Next, we present asymptotic and efficiency guarantees of $\mufidd$ for estimating $\mu$.
	
	\begin{theorem}[Asymptotic normality of {FIDDLE}]
		\label{thm:main-ate}
		Assume that all the assumptions in \prettyref{sec:asmp} hold and that
		$(n/ \log n)^{\frac 12 +c_1} < p <  n^{100}$ for some $c_1\in (0,\frac 12)$ depending on $c_0$ in  \prettyref{asmp:hyperparameters}.
		Let $\hat\mu_0=\hat\mu_0^\fast, \hat\mu_1=\hat\mu_1^\fast, \hat \pi=\hat \pi^\fast$ be as in \eqref{eq:estimators} with $\delta_\opt < (n/ \log n)^{-{\gamma^*\over 2\gamma^*+1}}$ and tuning parameters 
		\begin{align*}
			&\lambda_j = c_2 {\log(n_jp(N + r)) + L\log(BN)\over n_j},\quad 
			&&\tau_j^{-1} = c_3(r + 1)(BN)^{L+1}(N + \bar r)p{n_j}n,\quad j=0,1,\\
			&\lambda_2 = c_2 {\log(np(N + r)) + L\log(BN)\over n},\quad 
			&&\tau_2^{-1} = c_3(r + 1)(BN)^{L+1}(N + \bar r)p{n^2},
		\end{align*}
		where $c_2,c_3$ are constants as in \prettyref{thm:main-ate}. Then the ATE estimator FIDDLE \eqref{eq:aipw-fast} satisfies
		\begin{eqnarray*}
		&&\sqrt n (\mufidd -\mu)\to N(0,\sigma^2),\\
		&& \sigma^2=\EE\left[ (\mu_1^*(\bx) - \mu_0^*(\bx)-\mu)^2  + {\Var[y(1)|\bx] \over \pi^*(\bx)}+ {\Var[y(0)|\bx] \over 1-\pi^*(\bx)} \right],
		\end{eqnarray*}
		and $\sigma^2$ attains the semiparametric efficiency bound \cite[Theorem 1]{hahn1998role}. In addition, if $r=0$, i.e., $\bx$ does not have a factor component, 
		then the result holds for any $p\leq n^{100}$. 
	\end{theorem}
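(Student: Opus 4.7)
The plan is to follow the classical Neyman-orthogonal decomposition of the AIPW functional and show that the remainder terms vanish at rate $o_P(n^{-1/2})$ using the FAST-NN oracle inequality in \prettyref{thm:main-fast-nn} together with an empirical process control via bracketing, which is needed because no sample splitting is used. Let the efficient influence function be
\begin{align*}
\psi^*(y,T,\bx) = \mu_1^*(\bx)-\mu_0^*(\bx) - \mu + \frac{T(y-\mu_1^*(\bx))}{\pi^*(\bx)} - \frac{(1-T)(y-\mu_0^*(\bx))}{1-\pi^*(\bx)},
\end{align*}
and let $\hat\psi$ be the same expression with $(\mu_0^*,\mu_1^*,\pi^*)$ replaced by $(\hat\mu_0^{\fast},\hat\mu_1^{\fast},\hat\pi^{\fast})$. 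Write
\begin{align*}
\sqrt n(\mufidd - \mu) = \underbrace{\frac{1}{\sqrt n}\sum_{i=1}^n \psi^*(y_i,T_i,\bx_i)}_{=:A_n} + \sqrt n\, P[\hat\psi - \psi^*] + \sqrt n\,(P_n-P)[\hat\psi - \psi^*].
\end{align*}
By the central limit theorem, $A_n \todistr N(0,\sigma^2)$ with the variance formula stated in the theorem, which matches the semiparametric bound of Hahn (1998). It remains to show the last two terms are $o_P(1)$.

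\textbf{Bias term.} Using $\EE[T|\bx]=\pi^*(\bx)$ and $\EE[y(t)|\bx]=\mu_t^*(\bx)$, a direct calculation (the doubly robust identity) gives
\begin{align*}
P[\hat\psi - \psi^*] = \EE\qth{\frac{(\hat\pi^{\fast}-\pi^*)(\hat\mu_1^{\fast}-\mu_1^*)}{\hat\pi^{\fast}} - \frac{(\hat\pi^{\fast}-\pi^*)(\hat\mu_0^{\fast}-\mu_0^*)}{1-\hat\pi^{\fast}}}.
\end{align*}
Since $\hat\pi^{\fast}\in [1/\log n, 1-1/\log n]$, Cauchy-Schwarz yields $|P[\hat\psi-\psi^*]|\lesssim \log n \cdot \|\hat\pi^{\fast}-\pi^*\|_2 \cdot (\|\hat\mu_0^{\fast}-\mu_0^*\|_2+\|\hat\mu_1^{\fast}-\mu_1^*\|_2)$. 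Plugging in the rates from \prettyref{thm:main-fast-nn}, each factor is $O_P((n/\log n)^{-\gamma^*/(2\gamma^*+1)})$ up to a $p/n$ term from the factor estimation $\delta_{j,f}$. The product is $O_P((n/\log n)^{-2\gamma^*/(2\gamma^*+1)})$, and since $\gamma^*>1/2+c_0$ by \prettyref{asmp:hyperparameters}, this is $o(n^{-1/2})$. The factor-estimation piece $\delta_{j,f}\lesssim 1/p$ contributes $\sqrt n/p$, which is $o(1)$ under the range $p>(n/\log n)^{1/2+c_1}$; when $r=0$, the factor component is absent and the constraint on $p$ disappears.

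\textbf{Empirical process term.} This is the main obstacle because $\hat\psi$ depends on the same sample used in $P_n$. Define the class
\begin{align*}
\mathcal{F}_n = \sth{\psi_{g_0,g_1,g_2} : g_0,g_1\in \calG(L,\bar r+N,1,N,M,B),\ g_2\in \calG(L,\bar r+N,1,N,M,B)}
\end{align*}
of plug-in influence functions (with the truncation on the propensity), and let $\mathcal{F}_n(\delta)=\{f\in\mathcal{F}_n: \|f-\psi^*\|_{P,2}\le \delta\}$. The strategy, following \cite{fan2022optimal}, is to apply a maximal inequality in terms of the bracketing integral $\tilde J_{[]}(\delta,\mathcal{F}_n(\delta),\|\cdot\|_{P,2})$. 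Using standard covering bounds for truncated ReLU networks of depth $O(1)$, width $N\asymp (n/\log n)^{1/(4\gamma^*+2)}$, weights bounded by $B=n^{O(1)}$ together with the fact that the propensity component is bounded away from $0$ and $1$ (hence the map from $(g_0,g_1,g_2)$ to $\psi$ is Lipschitz), one obtains $\log \calN_{[]}(\epsilon,\mathcal{F}_n,\|\cdot\|_{P,2})\lesssim N^2 L\log(BN/\epsilon)$. The resulting maximal inequality yields
\begin{align*}
\sqrt n\,|(P_n-P)[\hat\psi-\psi^*]| \lesssim \tilde J_{[]}(\hat\delta_n,\mathcal{F}_n,\|\cdot\|_{P,2}) \cdot (1 + \text{negligible})
\end{align*}
with $\hat\delta_n$ the $L_2(P)$-radius of the estimation error, which by the oracle inequality is $(n/\log n)^{-\gamma^*/(2\gamma^*+1)}$. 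Combining the entropy bound with the radius gives a $o(1)$ control, completing the argument.

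\textbf{Main difficulty.} The technically hardest step is the uniform empirical-process control because (i) the same sample appears in $P_n$ and in $\hat\psi$, (ii) the class $\mathcal{F}_n$ involves a ratio through the propensity, so Lipschitz constants in $1/\hat\pi^{\fast}$ need to be tracked carefully using the truncation at $1/\log n$, and (iii) the random allocation into treatment and control groups (so $n_0, n_1$ are random, and $\hat\mu_0^{\fast}, \hat\mu_1^{\fast}$ are trained on random subsamples) must be accommodated when applying Theorem 3. I would first condition on the treatment labels, use concentration of $n_t/n$ around $\EE[\pi^*(\bx)]$ or $1-\EE[\pi^*(\bx)]$, and then apply the FAST-NN rates on the conditional distribution of $\bx$ given $T=t$ (which inherits all the factor-model assumptions up to absolute constants), so that the bracketing argument can be executed uniformly.
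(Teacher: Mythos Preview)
Your overall architecture — CLT on the efficient influence function, doubly-robust bias controlled by Cauchy--Schwarz, and a bracketing maximal inequality to absorb the dependence of $\hat\psi$ on the same sample — is exactly the route the paper takes, and your remarks about conditioning on $\{T_i\}$ to handle the random subsample sizes $n_0,n_1$ are on point. The paper's decomposition into four remainders $R_0,\dots,R_3$ is a slightly finer slicing of your two terms, but this is cosmetic. There is, however, a genuine gap in your empirical-process step that, as written, makes the argument fail in the high-dimensional regime.

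Your stated entropy bound $\log \calN_{[]}(\epsilon,\mathcal{F}_n,\|\cdot\|_{P,2})\lesssim N^2 L\log(BN/\epsilon)$ counts only the parameters of the inner network $g\in\calG(L,\bar r+N,1,N,M,B)$. But the FAST-NN estimator is $m^{\fast}(\bx;\bW,g,\bTheta)=g\bigl([\tilde\bff,\Tr_M(\bTheta^\top\bx)]\bigr)$, so the function class you must bracket also carries the variable-selection matrix $\bTheta\in\reals^{p\times N}$ with $\|\bTheta\|_{\max}\le B$. Na\"ively this adds $Np$ free parameters, and since $p$ is allowed up to $n^{100}$, the resulting bracketing integral $\tilde J_{[]}(\delta,\cdot,\cdot)\asymp\delta\sqrt{Np\log(B\tilde C)}$ does not tend to zero. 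The paper closes this gap in two coupled steps that are absent from your proposal. First, from the basic inequality for the penalized empirical risk it proves that with high probability the minimizer satisfies $\sum_{i,j}\psi_{\tau_k}(\hat\Theta_{k,i,j})\le s$ with $s\asymp(n/\log n)^{1/2-c}$ (\eqref{eq:psi-bound}); this localizes $\hat\mu_0^{\fast},\hat\mu_1^{\fast},\hat\pi^{\fast}$ to a sparse subclass $\calG_{m,s}$ rather than the full $\calG_m$. Second, it establishes a two-regime bracketing bound for $\calG_{m,s}$ (\prettyref{lmm:bracketing-dnn}): for $\epsilon\ge\tau$ one pays only $A+2s$ effective parameters (the at-most-$s$ entries of $\bTheta$ exceeding $\tau$ in magnitude, plus the inner network), while the full $A+Np$ count is incurred only on the interval $[0,\tau]$, whose contribution to the integral is negligible because $\tau^{-1}$ is chosen as a polynomial in $n$ and $p$. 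Without this sparsity localization and the split of the entropy integral at $\tau$, your maximal-inequality step cannot deliver $o_P(1)$.
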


	\begin{remark}[Discussion of the results]

		\prettyref{thm:main-ate} establishes asymptotic normality and semiparametric efficiency of FIDDLE even when there is strong dependency among the covariates. We require only that the dimensionality-adjusted degree of smoothness $\gamma^*$ satisfies $\gamma^* > 1/2$  in \prettyref{asmp:hyperparameters}. Additionally, our proof shows that if $\mu_0^*,\mu_1^*,\pi^*$ have different dimensionality-adjusted degree of smoothness $\gamma_0^*,\gamma_1^*, \gamma_2^*$ then we can establish the above result by only requiring $\gamma_0^*\gamma_2^*>\frac 14$ and $\gamma_1^*\gamma_2^*>\frac 14$, rather than having $\gamma_i^*>\frac 12, i=0,1,2$. This establishes the doubly-robustness of {FIDDLE}, which relaxes the complexities of outcome and propensity models. Under the factor model assumptions, the requirement of a large covariate dimension $p$ is essential to consistently estimating the factor components, see, e.g., \cite[Theorem 3]{fan2024factor}. The additional requirement of $p>(n/ \log n)^{\frac 12 +c_1}$ is imposed to guarantee the stronger result of asymptotic normality of the AIPW estimator, which can be removed in the absence of the factor component. This is captured in the second half of \prettyref{thm:main-ate} with the case $r=0$. In addition, the asymptotic normality and semiparametric efficiency hold even when our algorithm uses an overspecified number of factors $\barr$. Our proof involves applying empirical process theory to establish a uniform error bound over the set of possible estimators within complex neural network classes. 
	\end{remark}

	\section{Numerical studies}
	\label{sec:numerical}

	\subsection{Candidate methods}
	
	\label{ss:method-compare}
	We will compare the performance of FIDDLE with the following alternative approaches for estimating the average treatment effect. See \prettyref{app:sim-imp} for implementation details. 
	
	\begin{itemize}
		\item \textbf{Vanilla Neural Networks (Vanilla-NN):} A baseline variant that replaces FAST-NN with the trained fully connected neural networks, and with everything else the same.
	
	\item \textbf{Generative Adversarial Nets for Individualized Treatment Effects (GANITE):} A GAN-based approach \citep{yoon2018ganite} that first generates counterfactual outcomes via a dedicated generator, followed by training an individualized treatment effect (ITE) estimator. The ATE is estimated by the sample average of the estimated ITEs.
	
	\item \textbf{Double Robust Forest (DR):} A forest-based implementation of the doubly robust estimator \citep{bang2005doubly}. It jointly estimates the propensity scores and outcome models using random forests and computes the ATE via the AIPW estimator.
	
	\item \textbf{Double Machine Learning Forest (DML):} A double machine learning framework of \cite{chernozhukov2018double}, which employs cross-fitting procedures to estimate nuisance parameters and eliminate regularization bias. It utilizes forest learners for both propensity score and response estimation, then applies double machine learning for ATE estimation. 
	
	\item \textbf{Causal Forest (CF):} A nonparametric method based on generalized random forests \citep{wager2018estimation, athey2019generalized}. It estimates the responses by ensembling classification and regression trees (CART), and computes the ATE by their weighted difference.
	
	\item \textbf{Causal Forest on Latent Factors (Factor-CF):} A variant of the Causal Forest method applied exclusively to the latent factors $\boldsymbol{f}$ extracted from the covariates $\boldsymbol{x}$. 
	
	\item \textbf{Oracle Inverse Propensity Weighting (Oracle-IPW):} Oracle benchmark using the ground truth propensity scores and corresponding IPW estimator \cite{robins2000marginal}.
	
	\item \textbf{Oracle Augmented Inverse Propensity Weighting (Oracle-AIPW):} Oracle benchmark using the true response and propensity functions for AIPW \citep{robins1994estimation}.

	\end{itemize}
	
	\subsection{Analysis with simulated data}
	\label{sec:simulated-data}
	
	We conduct two Monte Carlo experiments using synthetic datasets to evaluate the empirical performances. The first experiment benchmarks FIDDLE against a range of state-of-the-art machine learning estimators for the {ATE}, focusing on performance across varying covariate dimensions. The second experiment investigates how FIDDLE behaves as the sample size increases. In both experiments, the data-generating process incorporates latent factor structures and nonlinear treatment and outcome models, enabling us to assess the robustness of estimators in high-dimensional, complex settings. Each experiment is replicated 100 times. Denote  $\sigma(x)= \frac{e^x}{1 + e^x}, \mathrm{trun} \{z\} = 0.8 z + 0.1$ for the rest of the paper.
	
	\paragraph{Data Generating Process.}
	We assume that the covariate vector $\boldsymbol{x}$ follows a factor model with a loading matrix $\boldsymbol{B} = (b_{ij})_{p \times r} \in \mathbb{R}^{p \times r}$, a vector of latent factors $\boldsymbol{f} = (f_i)_{r} \in \mathbb{R}^{r}$, and an idiosyncratic component $\boldsymbol{u} = (u_i)_{p} \in \mathbb{R}^{p}$, with $b_{ij} \overset{\mathrm{i.i.d}}{\sim} \mathrm{Unif}(-\sqrt{3}, \sqrt{3}),\quad 
	f_i, u_i \overset{\mathrm{i.i.d}}{\sim} \mathrm{Unif}(-1,1)$. The number of factors is fixed at $r = 4$, and we evaluate performance for $p \in \{10, 50, 100, 500, 1000, 5000, 10000\}$. The sample size is set to $n = 5000$. The propensity and response models presented below incorporate nonlinear interactions of $\bff$ and $\bu$. We model $\pi^*(\bx)$ as $\pi^*(\boldsymbol{f}, \boldsymbol{u})
	= \mathrm{trun} \{ \sigma(\sin(f_1) + \tan(f_2) + f_3 + f_4 + \sum_{j=1}^{5} u_j), \}$
	the outcome $y$ as
	$y= \mu^*(\boldsymbol{f}, \boldsymbol{u}) + T \, \tau^*(\boldsymbol{f}, \boldsymbol{u_{\mathcal{J}}}) + \varepsilon$, with $\varepsilon \sim \mathcal{N}(0, 1/4)$ independent of $\boldsymbol{f}, \boldsymbol{u}$ and
	\begin{align*}
		\mu^*(\boldsymbol{f}, \boldsymbol{u}) &= 10 + f_1 + f_2 f_3 + \sin(f_4) + \log(5 + u_1 + u_2 u_3) + \tan(u_4) + u_5\\
		\tau^*(\boldsymbol{f}, \boldsymbol{u}) &= 5 + f_1 + f_2 + \sin(f_3) + \tan(f_4) + u_1 + u_2 + \sin(u_3 + u_4) + \tan(u_5)
		.
	\end{align*}
	The ground truth ATE with the specified model below is $\mathbb{E} [ \tau^*(\boldsymbol{f}, \boldsymbol{u}) ] = 5$.

	\begin{figure}[t]
		\centering
		\includegraphics[width=\linewidth]{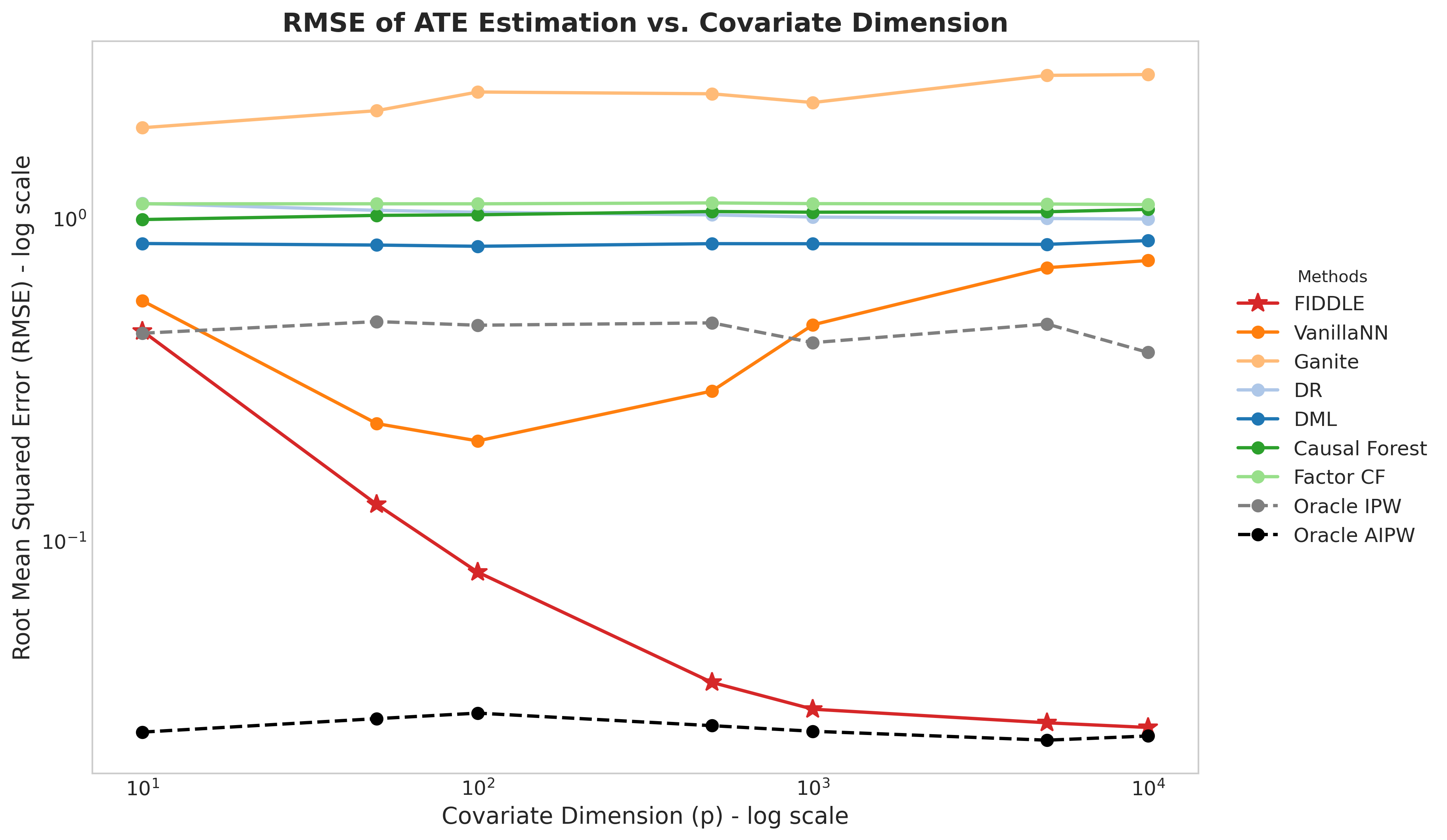}
		\caption{RMSE for different numbers of covariates by candidate ATE estimation methods.}
		\label{fig:method_compare}
	\end{figure}

	\paragraph{Results with varying covariate dimensions.} \prettyref{fig:method_compare} shows that the proposed estimator FIDDLE demonstrates a remarkable advantage over all other non-oracle methods in root mean squared error (RMSE), achieving consistently superior performance (see \prettyref{tab:method_compare} for the exact values and standard errors (SE)). This is due to improved accuracy of FIDDLE in estimating latent factors and highlights our method's robustness and ability to leverage latent factor structure. Remarkably, when $p$ is sufficiently large, FIDDLE attains performance comparable to that of the oracle-AIPW estimator. These empirical patterns align with the theoretical guarantees of {FIDDLE}. Moreover, FIDDLE achieves performance comparable to the oracle-IPW estimator when the covariate dimension is relatively low and surpasses it as $p$ increases. In comparison, competing estimators—whether factor-based or neural network-based—fail to realize similar gains, with no appreciable decline in RMSE as $p$ grows, underscoring their limitations in handling complex, high-dimensional covariate structures and nonparametric function learning.
	
	\begin{table}[t]
		\centering
		\begin{tabular}{l|ccccccc}
			\hline
			\textbf{Model} & $p=10$ & $p=50$ & $p=100$ & $p=500$ & $p=1000$ & $p=5000$ & $p=10000$ \\
			\hline
			Oracle AIPW & 0.0255 & 0.0280 & 0.0292 & 0.0267 & 0.0256 & 0.0240 & 0.0248 \\
			& (0.0021) & (0.0020) & (0.0020) & (0.0018) & (0.0019) & (0.0017) & (0.0016) \\
			\hline
			FIDDLE & 0.4467 & 0.1295 & 0.0799 & 0.0363 & 0.0300 & 0.0272 & 0.0263 \\
			& (0.0258) & (0.0076) & (0.0055) & (0.0027) & (0.0021) & (0.0020) & (0.0019) \\
			\hline
			VanillaNN & 0.5548 & 0.2308 & 0.2039 & 0.2912 & 0.4673 & 0.7028 & 0.7399 \\
			& (0.0248) & (0.0076) & (0.0069) & (0.0090) & (0.0108) & (0.0148) & (0.0154) \\
			\hline
			Ganite & 1.9107 & 2.1582 & 2.4663 & 2.4348 & 2.2871 & 2.7776 & 2.7944 \\
			& (0.0176) & (0.0204) & (0.0174) & (0.0427) & (0.0355) & (0.0858) & (0.0765) \\
			\hline
			DR & 1.1123 & 1.0588 & 1.0441 & 1.0252 & 1.0095 & 0.9990 & 0.9953 \\
			& (0.0148) & (0.0078) & (0.0081) & (0.0061) & (0.0078) & (0.0063) & (0.0061) \\
			\hline
			DML & 0.8355 & 0.8263 & 0.8191 & 0.8346 & 0.8340 & 0.8305 & 0.8532 \\
			& (0.0198) & (0.0129) & (0.0127) & (0.0103) & (0.0112) & (0.0114) & (0.0090) \\
			\hline
			CF & 0.9914 & 1.0214 & 1.0258 & 1.0501 & 1.0449 & 1.0482 & 1.0663 \\
			& (0.0186) & (0.0128) & (0.0112) & (0.0096) & (0.0121) & (0.0108) & (0.0090) \\
			\hline
			Factor-CF & 1.1094 & 1.1096 & 1.1094 & 1.1165 & 1.1108 & 1.1080 & 1.1030 \\
			& (0.0049) & (0.0054) & (0.0050) & (0.0049) & (0.0059) & (0.0046) & (0.0052) \\
			\hline
			Oracle IPW & 0.4401 & 0.4783 & 0.4660 & 0.4739 & 0.4114 & 0.4699 & 0.3838 \\
			& (0.0260) & (0.0300) & (0.0325) & (0.0353) & (0.0280) & (0.0276) & (0.0275) \\
			\hline
		\end{tabular}
		\caption{RMSE and standard error (in parentheses) of candidate ATE estimation methods across different covariate dimensions $p$ with fixed sample size $n=5000$ and $100$ replications.}
		\label{tab:method_compare}
	\end{table}
	
	\paragraph{Results for different sample sizes}
	\label{ss:size-compare}
	
	We run the experiments with the previous values of $p$ and $n \in \{1000, 2000, 3000, 4000, 5000, 6000, 7000, 8000, 9000, 10000\}$. As shown in \prettyref{fig:size_compare_heatmap} (see  \prettyref{tab:size_compare} for the standard errors (SE) and \prettyref{fig:size_compare} for a $\log-\log$ plot), FIDDLE demonstrates a pronounced and consistent reduction in both root mean square error (RMSE) as $n$ increases, across all covariate dimensions $p$. This trend is particularly evident in high dimensions, where traditional methods often suffer from instability or bias. The rapid convergence of FIDDLE with increasing $n$ highlights its statistical efficiency, affirming that the method scales well even with large dimensionality of the covariates.

	\begin{figure}[t]
		\centering
		\includegraphics[width=\textwidth]{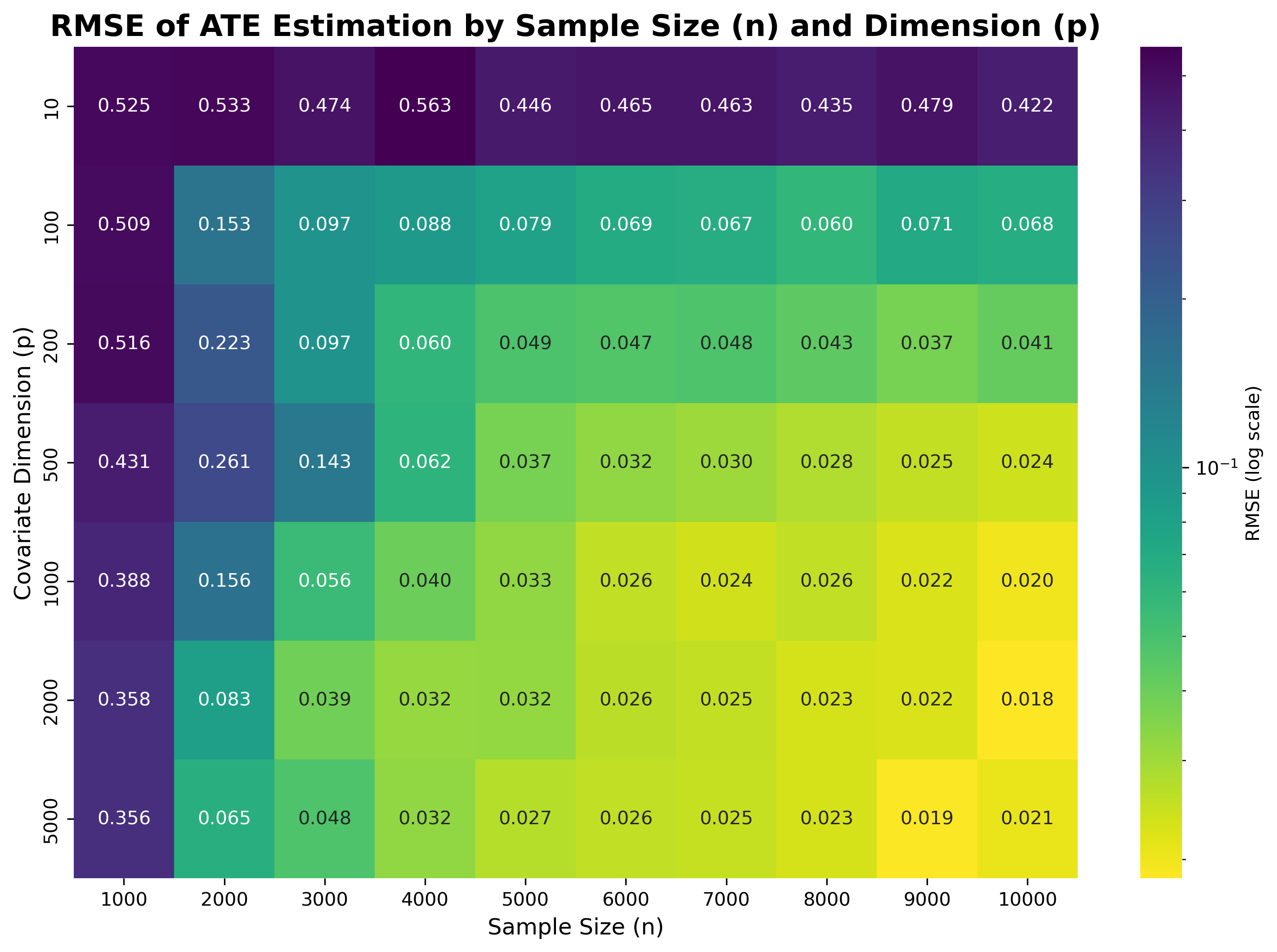}
		\caption{RMSE of FIDDLE for ATE estimation across different sample sizes and covariate dimensions. Dark regions and light regions indicate high and low RMSE, respectively.}
		\label{fig:size_compare_heatmap}
	\end{figure}

	\subsection{Application to semi-synthetic image data}
	\label{ss:application-image}
	
	In this section, we demonstrate the practical utility of FIDDLE through a semi-synthetic image-based simulation derived from the Canadian Institute for Advanced Research (CIFAR-10) dataset. The CIFAR-10 dataset (Canadian Institute For Advanced Research) \citep{krizhevsky2009learning} is a widely used benchmark in machine learning and computer vision. It consists of 60,000 color images of size $32 \times 32$ pixels, categorized into 10 different classes. To facilitate analysis, we reshape the multidimensional array into a two-dimensional matrix $\boldsymbol{X}$, where $n = 60,000$ observations and $p = 32 \times 32 \times 3 = 3,072$ covariates. The covariate matrix $\boldsymbol{X} \in \mathbb{R}^{n \times p}$ is normalized and decomposed into its factor structure $
	\boldsymbol{X} = \boldsymbol{F} \boldsymbol{B}^T + \boldsymbol{U}$,
	where $\boldsymbol{B} = (\boldsymbol{b}_1, \dots, \boldsymbol{b}_p)^T \in \mathbb{R}^{p \times r}$ represents the loading matrix, $\boldsymbol{F} = (\boldsymbol{f}_1, \dots, \boldsymbol{f}_n)^T \in \mathbb{R}^{n \times r}$ denotes the latent factors, and $\boldsymbol{U} = (\boldsymbol{u}_1, \dots, \boldsymbol{u}_p)^T \in \mathbb{R}^{n \times p}$ is the residual component with the number of factors $r = 4$. The unknown factors are estimated via a least-squares optimization algorithm: $
	\underset{\boldsymbol{B} \in \mathbb{R}^{p \times r}, \boldsymbol{F} \in \mathbb{R}^{n \times r}}{\mathrm{minimize}} \sum_{i = 1}^n \lVert \boldsymbol{X}_i - \boldsymbol{B} \boldsymbol{f}_i \rVert^2 = \lVert \boldsymbol{X} - \boldsymbol{F} \boldsymbol{B}^T \rVert^2_F$.
	We randomly sample $n' = 5,000$ observations from the entire dataset as $\boldsymbol{x}$ to replicate the semi-synthetic data-generating process. The factor $\Tilde{\boldsymbol{f}}$ is selected from the solution $\Tilde{\boldsymbol{F}}$, corresponding to $\boldsymbol{x}$, and the residuals are computed as $\Tilde{\boldsymbol{u}} = \boldsymbol{x} - \Tilde{\boldsymbol{f}} \Tilde{\boldsymbol{B}}^T$. 
	The treatment $T$ is generated by Bernoulli sampling with probability $\pi^*(\bx)$, where $\pi^*(\bx)$ is modeled as $\pi^*(\widetilde{\boldsymbol{f}}, \widetilde{\boldsymbol{u}})
	= \mathrm{trun} \{ \sigma(\sin(\widetilde{f}_1) + \sum_{i = 2}^4 \widetilde{f}_i + \sin (\widetilde{u}_1) + \sum_{j = 2}^5 \widetilde{u}_j)\}$
	with $\sigma(\cdot),\mathrm{trun}(\cdot)$ are as in \prettyref{sec:simulated-data}.
	We model the outcomes as $y= \mu^*(\widetilde{\boldsymbol{f}}, \widetilde{\boldsymbol{u}}) + T \, \tau^*(\widetilde{\boldsymbol{f}}, \widetilde{\boldsymbol{u}}) + \varepsilon$, where
	\begin{align*}
		\mu^*(\widetilde{\boldsymbol{f}}, \widetilde{\boldsymbol{u}}) &= 10 + \widetilde{f}_1 + \sin(\widetilde{f}_2) + \widetilde{f}_3 \widetilde{f}_4 + \widetilde{u}_1 (\widetilde{u}_2 + \sin(\widetilde{u}_3)) + \widetilde{u}_4 + \widetilde{u}_5, \\
		\tau^*(\widetilde{\boldsymbol{f}}, \widetilde{\boldsymbol{u}}) &= \widetilde{f}_1 (\widetilde{f}_2 + 3) + \widetilde{f}_3 + \sin(\widetilde{f}_4) + \sin(\widetilde{u}_1) + \widetilde{u}_2 + \widetilde{u}_3 \widetilde{u}_4 \widetilde{u}_5,  \\
		\varepsilon & \overset{\mathrm{i.i.d}}{\sim} \mathcal{N}(0, 1/4), \quad \text{independent of } \widetilde{\boldsymbol{f}}, \widetilde{\boldsymbol{u}}.
	\end{align*}
	
	The ground truth ATE is empirically estimated for each simulation, and all candidate methods are implemented identically to Section \ref{ss:method-compare}.

	\textbf{Results.} Table~\ref{tab:CIFAR-10} reports the root mean squared error (RMSE) and standard error (SE) of each candidate method on the semi-synthetic CIFAR-10 dataset over 100 replications. FIDDLE achieves the best performance among all non-Oracle methods. The remarkable closeness of FIDDLE's performance to Oracle-AIPW supports our semiparametric efficiency claim. In contrast, the Vanilla Neural Network (Vanilla-NN) has a substantially higher MSE, and other methods perform considerably worse. FIDDLE also outperforms the Oracle-IPW estimator, highlighting the added stability gained through the doubly robust AIPW framework.
	
	\begin{table}[H]
		\centering
		\small
		\begin{tabular}{c c c c c c c c c}
			\hline
			{$\text{Oracle}\atop \text{AIPW}$} & \footnotesize{\fidd} & $\text{Vanilla} \atop \text{NN}$ & \footnotesize{GANITE} & \footnotesize{DR} & \footnotesize{DML} & \footnotesize{CF} & $\text{Factor}\atop\text{CF}$ & {$\text{Oracle}\atop\text{IPW}$} \\
			\hline
			0.009 & 0.030 & 0.282 & 1.389 & 1.664 & 1.427 & 1.878 & 1.990 & 0.448 \\
			(0.001) & (0.003) & (0.012) & (0.032) & (0.007) & (0.007) & (0.007) & (0.006) & (0.030) \\
			\hline
		\end{tabular}
		\caption{RMSE and its standard error (in parentheses) of candidate ATE estimation methods on the semi-synthetic dataset based on CIFAR-10 over $100$ replications. } 
		\label{tab:CIFAR-10}
	\end{table}
	
	\subsection{Application with real dataset from bariatric surgery}
	\label{ss:application-surgery}
	
	We evaluate the causal effect of different bariatric surgery procedures on short-term weight loss using different candidate ATE estimators. Bariatric surgery remains the most effective treatment for morbid obesity, achieved by reducing stomach size or altering nutrient absorption pathways.
	Our analysis is based on the 2017 Participant Use File (PUF) from the Metabolic and Bariatric Surgery Accreditation and Quality Improvement Program (MBSAQIP) \citep{facs_mbsaqip}, which collects high-quality, nationwide data on bariatric surgeries. After preprocessing, the dataset includes $174,013$ patient records with $42$ pretreatment covariates, surgery type, and 30-day BMI reduction as the outcome. We select Sleeve Gastrectomy (Sleeve)—the most widely performed procedure—as the control, and compare it against four common alternatives as the treatment: Roux-en-Y Gastric Bypass (RYGB), Adjustable Gastric Band (AGB), Biliopancreatic Diversion with Duodenal Switch (BPD/DS), and Single Anastomosis Duodeno-Ileal Bypass with Sleeve Gastrectomy (SADI-S). Table~\ref{tab:surgery} reports the estimated ATE on 30-day BMI reduction, and associated 95\% confidence intervals over $100$ replications.
	
	As shown in Table~\ref{tab:surgery}, FIDDLE yields relatively robust results that support the clinical understanding of the mechanism of each surgical procedure and the expected impact on short-term weight loss. RYGB shows a small negative {ATE}, consistent with equivalent short-term outcomes \citep{arterburn2018comparative}. AGB demonstrates a substantially negative {ATE}, reflecting its restrictive mechanism that produces slower weight loss requiring behavioral adaptation \citep{hady2012impact}. BPD/DS yields a positive {ATE}, indicating a superior early reduction in BMI through its combined restrictive-malabsorptive approach \citep{hutter2013outcome}. SADI-S shows a moderately negative {ATE}, providing the first quantitative evidence that its staged design prioritizes long-term metabolic benefits over immediate weight loss enhancement \citep{pereira2024beyond}. Taken together, FIDDLE provides ATE estimates that support existing knowledge in the medical community-- less aggressive procedures produce smaller short-term benefits, while more invasive techniques correspond to greater reductions in BMI.

	\begin{table}[t]
		\centering
		\begin{tabular}{c | c c c c}
			\hline
			Surgery & RYGB & AGB & BPD/DS & SADI-S \\
			\hline
			\fidd & -0.0111 & -1.0391 &  0.3364 & -0.5020\\
			~ & (-0.0116, -0.0106) & (-1.0516, -1.0266) & (0.3109, 0.3620) & (-0.5187, -0.4854) \\
			\hline
			Vanilla NN & -0.3814 & -1.6807 & -0.3670 & -1.5438\\
			~ & (-0.3880, -0.3749) & (-1.6913, -1.6701) & (-0.3860, -0.3479) & (-1.5573, -1.5304) \\
			\hline
			GANITE & -0.4511 & -2.4651& -1.5018 & -2.0452\\
			~ & (-0.5161, -0.3861) & (-2.6590, -2.2712) & (-1.6552, -1.3484) & (-2.1686, -1.9218) \\
			\hline
			DR & -0.0310 & -0.7343 &  0.0307 & -0.5118\\
			~ & (-0.0313, -0.0308) & (-0.7412, -0.7273) & (0.0305,  0.0309) & (-0.5178, -0.5058) \\
			\hline
			DML & -0.0154 & -0.2403 &  0.6306 & -0.5457\\
			~ & (-0.0160, -0.0148) & (-0.2725, -0.2080) & (0.6060,  0.6552) & (-0.5498, -0.5416) \\
			\hline
			CF & -0.0225 & -1.0804 &  0.2135 & -0.6469\\
			~ & (-0.0229, -0.0222) & (-1.0815, -1.0792) & ( 0.2114,  0.2157) & (-0.6488, -0.6451) \\
			\hline
			Factor-CF & -0.0569 & -1.0924 &  0.1749 & -0.7773\\
			~ & (-0.0572, -0.0565) & (-1.0936, -1.0912) & (0.1730,  0.1769) & (-0.7787, -0.7759)\\
			\hline
		\end{tabular}
		\caption{Estimated ATE and 95\% confidence intervals for 30-day BMI reduction by different procedures, compared with Sleeve as the control.}
		\label{tab:surgery}
	\end{table}

    \paragraph{Data availability statement:} The data that support the findings of this study is provided in the 2017 Participant Use File (PUF) from the website of Metabolic and Bariatric Surgery Accreditation and Quality Improvement Program (MBSAQIP) and the file is available upon request at their website \citep{facs_mbsaqip}.




\appendix

\section{Proof of \prettyref{thm:div-proj}}

Define $\hat \bSigma =\frac 1m\sum_{i=1}^m \bx_i\bx_i^\top$ and consider the spectral decomposition of $\hat\bSigma$
\begin{align*}
	\hat\bSigma = \sum_{i=1}^p\hat\lambda_i\cdot \hat\bv_i\hat \bv_i^\top.
\end{align*}
Using the coordinatewise boundedness of $\{\bx_{i}\}_{i=1}^m$ we get that each coordinate of $\hat\bSigma$ is bounded. Then we use
$$
\hat \bSigma_{jj}=\sum_{i=1}^p\hat\lambda_i (\hat v_{ij})^2, \quad j=1,\dots p,
$$
which implies $\sqrt{\hat \lambda_i}\cdot \hat \bv_{i}$ has bounded coordinates for all $i=1,\dots,p$. Hence, to show that $\bW=\qth{\sqrt{\hat \lambda_1}\cdot \hat \bv_{1},\dots,\sqrt{\hat \lambda_\barr}\cdot \hat \bv_{\barr}}$ is a valid diversified projection matrix, it sufficies to ensure that it is independent of the data that we project using $\bW$ and that the smallest singular value of $\frac 1p\bW^\top \bB$ is large enough. As we use data splitting to construct the diversified projection matrix, and then use $\bW$ to project the second half of the data, independence comes for free. Hence, it is only left to prove the singular value bounds mentioned in the theorem statement.

To this end, using Weyl's theorem \cite[Lemma 2.2]{chen2021spectral} it follows that
\begin{align}\label{eq:m1}
	|\hat \lambda_i - \lambda_i(\bB\bB^\top)|
	\leq \|\hat\bSigma - \bB\bB^\top\|_F,\quad i=1,\dots, p,
\end{align}
where $\|\cdot\|_F$ denotes the Frobenius norm. In view of \cite[Lemma 5]{fan2024factor} we note that under \prettyref{asmp:bounded}, \prettyref{asmp:depend}, \prettyref{asmp:dist-fu} 
\begin{align}\label{eq:m5}
	\|\hat\bSigma - \bB\bB^\top\|_F
	\leq c_1 p\left(r\sqrt{\frac{\log p + t}{m}} + r^2\sqrt{\frac{\log r + t}{m}} + \frac{1}{\sqrt{p}}\right)
\end{align}
for a universal constant $c_1$. Note that from \prettyref{asmp:pervasiveness} we have that 
$$
\lambda_i(\bB\bB^\top) \in (\frac p{c_2} , c_2 p),\quad i=1,\dots,r
$$
for a large constant $c_2$. Hence, for $m\geq c_3\log p$ with a large constant $c_3>0$ we combine \eqref{eq:m1} and \eqref{eq:m5} with the last display to get
$$
\hat \lambda_{i}\in (\frac p{c_4} , c_4 p),\quad i=1,\dots,r
$$
for a constant $c_4>0$. Next, denoting $\bW_r = \qth{\sqrt{\hat \lambda_1}\hat \bv_1,\dots, \sqrt{\hat \lambda_1}\hat \bv_r},\hat \bV_r=[\hat \bv_1,\dots,\hat \bv_r]$ we get
\begin{align*}
	\nu_{\min}(p^{-1} \bW^\top \bB)\geq \nu_{\min}(p^{-1} \bW_r^\top \bB)
	\geq  \frac 1{\sqrt c_4} \nu_{\min}(p^{-1/2}\hat\bV_r\bB),
\end{align*}
where the above inequalities followed using the Courant-Fischer minimax characterization of the smallest singular value
$\nu_{\min}(\bA)=\min_{\dim(\calU)=1}\max_{\bx\in \calU:\|\bx\|_2=1} \|\bA\bx\|_2$ \cite[Theorem 1]{dax2013eigenvalues}. The rest of the proof follows the strategy of \cite[Proposition 1, Equation (F.23)]{fan2024factor}, and is omitted here.

\section{Proof of \prettyref{thm:main-ate}}

For the section below, we use the following notation. Let $P$ denote the law of the covariate $\bx$. Given any function $h=h(\bx)$ define 
\begin{align*}
	\EE_P[h]=\int h \ dP(\bx), \quad \|h\|_{L_2(P)} = \sqrt{\EE_P[h^2]},
	\quad
	\EE_n[h] = \frac 1n \sum_{i=1}^n h(\bx_i).
\end{align*}
We first provide a general result on the asymptotic normality of the AIPW estimator, given response and propensity estimators $\hat\mu_0,\hat\mu_1,\hat\pi$
\begin{align}\label{eq:aipw2}
	\hat\mu^\aipw
	=\frac 1n \sum_{i=1}^n 
	&\left\{\pth{{T_iy_i\over \hat \pi(\bx_i)}-{(1-T_i)y_i\over 1-\hat\pi(\bx_i)}}
	-(T_i-\hat \pi(\bx_i))\pth{{\hat \mu_1(\bx_i)\over \hat \pi(\bx_i)}+{\hat\mu_0(\bx_i)\over 1-\hat \pi(\bx_i)}}\right\}.
\end{align} 
The following result provides conditions on the complexity of possible function classes for $\hat\mu_0,\hat\mu_1,\hat\pi$, that guarantees the asymptotic normality of $\hat\mu^\aipw$. The proof of \prettyref{thm:main-ate} will rely on verifying these assumptions for the neural network classes of the FAST estimators. 
\begin{theorem}\label{thm:genr}
	Consider estimators $\hat \mu_0, \hat \mu_1,\hat \pi\in \calF$ from function class $\calF$, that are constructed from $\{(y_i,T_i,\bx_i)\}_{i=1}^n$. Suppose that with probability $1-\xi$, the estimators satisfy
	\begin{align}\label{eq:delta-bound}
		\EE_P[(\hat \mu_0 - \mu_0^*)^2]\leq \delta_0^2,
		\quad \EE_P\qth{(\hat \mu_1 - \mu_1^*)^2}\leq \delta_1^2,
		\quad 
		\EE_P\qth{(\hat \pi - \pi^*)^2} \leq \delta_2^2,
	\end{align}
	for the true response and propensity functions $\mu_0^*,\mu_1^*,\pi^*$, where the expectation is taken with respect to new $\bx$ given the data. Assume that $T_i,\pi_i$ are independent of $y_i$, conditional on $\bx_i$. In addition, suppose that the following also holds true
	\begin{enumerate}[label=(\roman*)]
		
		\item $\sup_{f\in \calF}\|f\|_\infty\leq \tilde B$ for some constant $\tilde B>0$,
		
		\item $\alpha<\pi^*,\hat \pi<1-\alpha$ for some $\alpha\in (0,1/2)$,
		
		\item ${\sqrt n\delta_1\delta_2\over \alpha^2} + {\sqrt n\delta_0\delta_2\over \alpha^2} + \sqrt n \xi/\alpha \to 0$, as $n\to \infty$,
		
		\item $\lim_{n\to \infty} {(\tilde J_{[]}({\delta},\calF,L_2(P)))^2\over  \indc{\alpha\delta^2\sqrt n < 1} \alpha\delta^2 \sqrt n +\indc{\alpha\delta^2\sqrt n \geq 1}} = 0$ for $\delta \in \sth{{\delta_0\over \alpha},{\delta_1\over \alpha},{\delta_2\over \alpha^2}}$.
		
	\end{enumerate}
	Then we have that $\sqrt n \pth{\hat \mu^\aipw-\mu} \to N(0,\sigma^2)$, where $$\sigma^2=\EE\left[ (\mu_1^*(\bx) - \mu_0^*(\bx)-\mu)^2  + {\Var[y(1)|\bx] \over \pi^*(\bx)}+ {\Var[y(0)|\bx] \over 1-\pi^*(\bx)} \right].$$
	
\end{theorem}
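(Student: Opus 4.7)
My strategy is the standard efficient-influence-function approach for AIPW, combined with empirical-process machinery to handle the lack of sample splitting. Rewriting the estimand in the canonical form
$$
\psi(\bx,T,y;\mu_0,\mu_1,\pi):=\mu_1(\bx)-\mu_0(\bx)+\frac{T(y-\mu_1(\bx))}{\pi(\bx)}-\frac{(1-T)(y-\mu_0(\bx))}{1-\pi(\bx)},
$$
a direct algebraic identity gives $\hat\mu^{\aipw}=\EE_n\psi(\cdot;\hat\mu_0,\hat\mu_1,\hat\pi)$. With $\psi^*:=\psi(\cdot;\mu_0^*,\mu_1^*,\pi^*)$, the unconfoundedness assumption delivers $\EE_P[\psi^*]=\mu$ and $\Var(\psi^*)=\sigma^2$, so the CLT yields $\sqrt n(\EE_n\psi^*-\mu)\todistr N(0,\sigma^2)$. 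The remainder $\sqrt n\,\EE_n[\psi(\cdot;\hat\mu_0,\hat\mu_1,\hat\pi)-\psi^*]$ is split into the bias $\sqrt n\,\EE_P[\psi(\cdot;\hat\mu_0,\hat\mu_1,\hat\pi)-\psi^*]$ and the centered empirical-process term $\sqrt n(\EE_n-\EE_P)[\psi(\cdot;\hat\mu_0,\hat\mu_1,\hat\pi)-\psi^*]$; Slutsky then closes the argument once both are shown to be $o_P(1)$.

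\textbf{Bias term.} Conditioning on the training sample and using $\EE[T\mid\bx]=\pi^*(\bx)$, $\EE[Ty\mid\bx]=\pi^*(\bx)\mu_1^*(\bx)$, together with the corresponding $T=0$ identities, a short computation yields the canonical doubly-robust product form
$$
\EE_P[\psi(\cdot;\hat\mu_0,\hat\mu_1,\hat\pi)]-\mu = \EE_P\!\left[\frac{(\hat\mu_1-\mu_1^*)(\hat\pi-\pi^*)}{\hat\pi}+\frac{(\hat\mu_0-\mu_0^*)(\hat\pi-\pi^*)}{1-\hat\pi}\right].
$$
On the event $\Omega_n$ of probability $\ge 1-\xi$ on which \eqref{eq:delta-bound} holds, Cauchy--Schwarz combined with (ii) bounds the right side by $\alpha^{-1}(\delta_0+\delta_1)\delta_2$, which multiplied by $\sqrt n$ is $o(1)$ by (iii). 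On $\Omega_n^c$, the uniform bound coming from (i),(ii) contributes at most $O(\sqrt n\,\xi/\alpha)=o(1)$, again by (iii).

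\textbf{Empirical-process term (main obstacle).} Set $\delta^\star=\max\{\delta_0/\alpha,\delta_1/\alpha,\delta_2/\alpha^2\}$ and introduce the localized shell
$$
\calG:=\bigl\{\psi(\cdot;\mu_0,\mu_1,\pi)-\psi^*:\mu_0,\mu_1,\pi\in\calF,\ \|\mu_j-\mu_j^*\|_{L_2(P)}\le\delta_j\ (j=0,1),\ \|\pi-\pi^*\|_{L_2(P)}\le\delta_2\bigr\}.
$$
Telescoping $\psi-\psi^*$ into three single-coordinate perturbations, each $\mu_j$-perturbation is $L_2(P)$-Lipschitz in $\mu_j\in\calF$ with constant $\alpha^{-1}$, and the $\pi$-perturbation is Lipschitz with constant $\alpha^{-2}$ (via $|1/\pi-1/\pi^*|\le|\pi-\pi^*|/\alpha^2$ together with the boundedness in (i)). Hence any $L_2(P)$-bracket of $\calF$ at scale $\eta$ lifts to brackets of the three subclasses at scales $\eta/\alpha,\eta/\alpha,\eta/\alpha^2$, giving
$$
\tilde J_{[]}(\delta^\star,\calG,L_2(P))\ \lesssim\ \sum_{j=0,1}\tilde J_{[]}(\delta_j/\alpha,\calF,L_2(P))+\tilde J_{[]}(\delta_2/\alpha^2,\calF,L_2(P)).
$$
Applying a localized bracketing maximal inequality (van der Vaart--Wellner, Lemma 3.4.2) to $\calG$ with envelope $\lesssim 1/\alpha$ and radius $\delta^\star$ produces
$$
\EE^*\sup_{f\in\calG}\bigl|\sqrt n(\EE_n-\EE_P)f\bigr|\ \lesssim\ \tilde J_{[]}(\delta^\star,\calG,L_2(P))\!\left(1+\frac{\tilde J_{[]}(\delta^\star,\calG,L_2(P))}{\alpha(\delta^\star)^2\sqrt n}\right),
$$
and the two-regime form of hypothesis (iv) drives the right side to $0$ in each of the cases $\alpha(\delta^\star)^2\sqrt n<1$ and $\alpha(\delta^\star)^2\sqrt n\ge 1$. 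Markov's inequality, together with $(\hat\mu_0,\hat\mu_1,\hat\pi)\in\calG$ on $\Omega_n$, then completes $\sqrt n(\EE_n-\EE_P)[\psi(\cdot;\hat\mu_0,\hat\mu_1,\hat\pi)-\psi^*]=o_P(1)$.

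\textbf{Principal difficulty.} The empirical-process step is the delicate one. The challenges I anticipate are (a) tracking Lipschitz constants so that a single bracket of $\calF$ splits into brackets of the three perturbation subclasses with exactly the $\alpha^{-1},\alpha^{-1},\alpha^{-2}$ powers that appear in (iv); (b) controlling the response $y$, which is not assumed bounded in the statement but enters $\psi$ through the score --- the plan is to truncate $y$ at level $\sqrt{\log n}$ using a conditional moment argument and absorb the truncation error into $o_P(1/\sqrt n)$, so that $\calG$ has an essentially bounded envelope proportional to $1/\alpha$; and (c) upgrading the expectation bound to the in-probability statement needed, which calls for peeling over shells of geometric radius $2^{-k}\delta^\star$ together with a union bound. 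Once these pieces are assembled, combining the CLT for $\EE_n\psi^*$ with the two $o_P(1)$ remainder bounds yields $\sqrt n(\hat\mu^{\aipw}-\mu)\todistr N(0,\sigma^2)$, with $\sigma^2$ the semiparametric efficiency bound of \cite{hahn1998role}.
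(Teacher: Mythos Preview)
Your proposal is correct and follows essentially the same route as the paper: the paper also decomposes $\hat\mu^{\aipw}-\mu$ into the efficient-influence-function average $n^{-1}\sum S_i$ plus remainder terms $R_0,\ldots,R_3$ (a slightly different telescoping of your $\psi(\cdot;\hat\mu_0,\hat\mu_1,\hat\pi)-\psi^*$, with the bias isolated inside $R_2,R_3$ rather than separated out first), and then controls each remainder using the same localized bracketing maximal inequality (van der Vaart--Wellner, Theorem~2.14.17$'$) together with Cauchy--Schwarz for the doubly-robust product bias. Your anticipated difficulty~(b) about the unboundedness of $y$ is real --- the paper's proof simply asserts that $y(1)-\mu_1^*(\bx)$ is uniformly bounded without listing it among the hypotheses of the theorem, so your truncation plan is in fact more careful on this point than the paper itself.
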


\begin{remark}
	The result also outlines the {\it doubly-robust} property of the AIPW estimator. Condition (iii) above shows that even when the estimation guarantees of the response functions are poor, i.e., $\delta_0,\delta_1$ converges to zero slowly, we can still guarantee the above asymptotic result as long as we have strong estimation guarantees for the propensity score, meaning $\delta_2\delta_0$ and $\delta_2\delta_1$ converges to zero at a rate $n^{-(1+c)}$ for some constant $c>0$. In particular, if the target functions $\mu_0^*,\mu_1^*,\pi^*$ belong to the classes $\calF_0,\calF_1,\calF_2$ with different dimensionality-adjusted degree of smoothness parameters $\gamma_0^*,\gamma_1^*,\gamma_2^*$ (as in \prettyref{def:dim-adjusted-smoothness}) respectively, and the estimators $\hat\mu_0,\hat \mu_1,\hat \pi$ achieves the nonparametric rates $\delta_i^2 = n^{-{2\gamma_i^*\over 2\gamma_i^*+1}}, i=0,1,2$, then we can establish the above result by requiring $\gamma_i^*\gamma_2^*>\frac 14+c$, rather than individually requiring $\gamma_i^*>\frac 12 + c, i=0,1,2$,  for some $c>0$. The is essentially the doubly robustness property in terms of the hardness of the target function classes.
\end{remark}

\begin{proof}[Proof of \prettyref{thm:genr}]
	Let $\calR$ denote the event in which \eqref{eq:delta-bound} holds. Then note that showing $\sqrt n(\hat \mu^\aipw-\mu)$ converges in distribution is equivalent to showing $\sqrt n(\hat \mu^\aipw-\mu)\indc{\calR}$ converges in distribution. This is because $\sqrt n(\hat \mu^\aipw-\mu)$ is bounded by $O(\sqrt n/\alpha)$ (in view of the boundedness of the estimators, the response functions and the outputs), and the difference of the above terms satisfies
	\begin{align*}
		\sqrt n\EE\qth{|\hat \mu^\aipw-\mu|\cdot\indc{\calR^c}}
		\leq O(\sqrt n/\alpha)\PP[\calR^c]=O(\sqrt n \xi/\alpha)\to 0.
	\end{align*}
	Hence, we will assume without a loss of generality that the event $\calR$ holds.

	For simplicity of notations, we note the following definitions
	\begin{align*}
			\pi_i^* = \pi^*(\bx_i),\quad \hat \pi_i = \hat \pi(\bx_i).
	\end{align*} 
	We first note the following decomposition
	\begin{align*}
		\hat \mu^\aipw-\mu
		=\frac 1n \sum_{i=1}^n S_i + R_0 + R_1+ R_2 + R_3,
	\end{align*}
	where 
	\begin{align}
		\begin{gathered}
			S_i={T_i\over \pi_i^*}\qth{y_i(1)-\mu_1^*(\bx_i)}
			- {1-T_i\over 1-\pi_i^*}\qth{y_i(0)-\mu_0^*(\bx_i)}
			+ \mu_1^*(\bx_i) - \mu_0^*(\bx_i) - \mu,
			\nonumber\\
			R_0 = \frac 1n\sum_{i=1}^n {T_i(y_i(1)-\mu_1^*(\bx_i))\over \hat \pi_i \pi_i^*}(\pi_i^*-\hat\pi_i),
			\quad
			R_1 = \frac 1n\sum_{i=1}^n {(1-T_i)(y_i(0)-\mu_0^*(\bx_i))\over (1-\hat \pi_i)(1-\pi_i^*)}(\pi_i^*-\hat\pi_i)
			\nonumber\\
			R_2 = \frac 1{n \hat \pi_i}\sum_{i=1}^n (\hat \pi_i-T_i)
			(\hat\mu_1(\bx_i)-\mu_1^*(\bx_i)),
			\quad
			R_3 = \frac 1{n(1-\hat \pi_i)}\sum_{i=1}^n (\hat \pi_i-T_i)
			(\hat\mu_0(\bx_i)-\mu_0^*(\bx_i)).
		\end{gathered}
	\end{align}
	We will show below that $\sqrt n R_i, i=0,1,2,3$, converges to zero in probability when the assumptions in the theorem statement are satisfied. Thus, the asymptotic normality of
	$\sqrt n(\hat \mu^\aipw-\mu )$ follows from the previous decomposition. We will use the following result.
	\begin{lemma} {\cite[Theorem 2.14.17']{vaart2023empirical}}\label{thm:empirical_process}
		Let $\calF$ be a class of measurable functions such that $\EE_P[h^2]\leq \delta^2,\|h\|_\infty\leq Q$ for every $h\in \calF$, and $\GG_n[h] = \sqrt n (\EE_n[h]-\EE_P[h])$ based on samples $\bx_1,\dots,\bx_n\simiid P$. Then there is a constant $c>0$ such that
		$$
		\EE\qth{\sup_{h\in \calF}|\GG_n[h]|}
		\leq c\tilde J_{[]}\pth{\delta,\calF,L_2(P)}
		\pth{1+{\tilde J_{[]}\pth{\delta,\calF,L_2(P)}\over \delta^2\sqrt n}Q}.
		$$
	\end{lemma}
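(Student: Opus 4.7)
The plan is classical bracketing chaining combined with Bernstein's inequality for empirical processes, following the template of van der Vaart and Wellner. Fix dyadic levels $\eta_k = \delta\, 2^{-k}$ for $k=0,1,\dots,K$, with a truncation depth $K$ chosen later. At each level select a minimum cardinality $\eta_k$-bracketing $\{[\ell_{k,j},u_{k,j}]\}_{j=1}^{N_k}$ of $\calF$ in $L_2(P)$, so $N_k=\calN_{[]}(\eta_k,\calF,L_2(P))$. For every $h\in\calF$ let $\pi_k h$ denote the lower bracket at level $k$ that contains $h$ (chosen measurably in $h$), and telescope
\[
h = \pi_0 h + \sum_{k=1}^K (\pi_k h-\pi_{k-1}h) + (h-\pi_K h),
\]
which induces the same decomposition of $\GG_n h$. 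The subsequent task is to bound the suprema of the three pieces and sum them.

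For a generic chain increment, $\|\pi_k h-\pi_{k-1}h\|_{L_2(P)}\le 2\eta_{k-1}$ and $\|\pi_k h-\pi_{k-1}h\|_\infty\le 2Q$, so Bernstein's inequality shows that $\GG_n(\pi_k h-\pi_{k-1}h)$ is sub-exponential with variance proxy $4\eta_{k-1}^2$ and exponential tail scale $2Q/\sqrt n$. As $h$ varies there are at most $N_k\,N_{k-1}\le N_k^2$ such increments at level $k$. The standard maximal inequality for a finite family of Bernstein-type variables (e.g.\ via the mixed $\psi_2/\psi_1$-Orlicz norm) then yields
\[
\EE\sup_{h\in\calF}\bigl|\GG_n(\pi_k h-\pi_{k-1}h)\bigr|
\;\lesssim\; \eta_{k-1}\sqrt{\log N_k} \;+\; \frac{Q\log N_k}{\sqrt n}.
\]
The initial term $\GG_n(\pi_0 h)$ is handled analogously (it involves $N_0$ functions of $L_2(P)$-norm $\lesssim\delta$ and sup-norm $\lesssim Q$), and the residual $\GG_n(h-\pi_K h)$ will be shown to vanish by sending $K\to\infty$ using the pointwise sandwich $\pi_K h\le h\le u_{K,j}$ and the sup bound by $\max_j|\GG_n(u_{K,j}-\ell_{K,j})|$ (whose variance is $\le\eta_K^2$).

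Summing the two parts of the Bernstein bound gives the two terms of the lemma. The first part, $\sum_k \eta_{k-1}\sqrt{\log N_k}$, is recognized as a Riemann sum for $\int_0^\delta\sqrt{\log\calN_{[]}(\eta,\calF,L_2(P))}\,d\eta$, which is $\lesssim \tilde J_{[]}(\delta,\calF,L_2(P))$. For the second part, one writes
\[
\frac{Q\log N_k}{\sqrt n}
= \frac{Q\,\sqrt{\log N_k}}{\eta_{k-1}\sqrt n}\cdot\bigl(\eta_{k-1}\sqrt{\log N_k}\bigr),
\]
factors out $\sup_k\sqrt{\log N_k}/\eta_{k-1}\lesssim \tilde J_{[]}(\delta)/\delta^2$ (which follows, up to constants, from monotonicity of $\calN_{[]}(\cdot)$ and from $\tilde J_{[]}(\delta)\ge\eta_k\sqrt{\log N_k}$ at every level), and then bounds the remaining sum by $\tilde J_{[]}(\delta)$ as in the first part, producing the claimed $Q\tilde J_{[]}^2(\delta)/(\delta^2\sqrt n)$. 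The main obstacle is precisely this step: converting the sub-exponential chain sum $\sum_k Q\log N_k/\sqrt n$ into the quadratic bracketing-integral form, which requires careful weighting and a judicious choice of the stopping level $K$ so that the residual chain error is absorbed rather than amplified.
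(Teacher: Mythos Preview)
The paper does not prove this lemma; it is quoted verbatim as \cite[Theorem 2.14.17$'$]{vaart2023empirical} and used as a black box inside the proof of \prettyref{thm:genr}. There is therefore no paper proof to compare against.

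Your sketch follows the standard van der Vaart--Wellner chaining argument, and you correctly identify its main technical obstacle in the last paragraph. One point to flag: the line ``the residual $\GG_n(h-\pi_K h)$ will be shown to vanish by sending $K\to\infty$'' is not how the textbook argument actually runs. Sending $K\to\infty$ would make the sub-exponential chain sum $\sum_k Q\log N_k/\sqrt n$ blow up whenever the entropy is not square-summable. The textbook proof instead stops at the first level $K$ where the Bernstein correction $Q\log N_K/\sqrt n$ overtakes the sub-Gaussian term $\eta_{K-1}\sqrt{\log N_K}$, and then controls the residual $h-\pi_K h$ at that fixed level by a separate one-step bracketing bound. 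Relatedly, your factorization $\sup_k\sqrt{\log N_k}/\eta_{k-1}\lesssim \tilde J_{[]}(\delta)/\delta^2$ is not valid for all $k$ (the ratio explodes as $\eta_k\to 0$); it only holds up to the stopping level, which is exactly why the finite truncation is essential and not merely a technicality.
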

	In view of the above theorem, we will proceed the proofs in the following way.
	\begin{enumerate}
		\item We first show $\sqrt n R_0$ converges to zero in probability and note that the convergence of $\sqrt n R_1$ can be shown similarly. 
		
		\item Next we show that $\sqrt n R_2$ converges to zero in probability if $\sqrt n\delta_1 \delta_2$ converges to zero and note that we can use a similar strategy to show that $\sqrt n R_3$  converges to zero in probability whenever $\sqrt n\delta_0 \delta_2$ converges to zero.
		
	\end{enumerate}
	We now jump into the details of the above two steps.
	\begin{enumerate}
		\item Note that $\sqrt{n} R_0 = \GG_n[h_{\hat \pi}]$, where 
		\begin{align*}
			h_{\pi}(\bx) = {T (y(1)- \mu_1^*(\bx))\over \pi(\bx)\pi^*(\bx)} \pth{\pi^*(\bx)-\pi(\bx)}
		\end{align*}
		and $\hat \pi \in \calF_2  \equiv \sth{ h_{\pi}: \EE_P\qth{(\pi-\pi^*)^2}\leq \delta_2^2, \pi\in \calF, \alpha<\pi<1-\alpha}$. Then for each fixed $ h_{\pi}\in \calF_2$ (given by a fixed $\pi\in \calF$)
		\begin{enumerate}[label=(\alph*)]
			\item $\EE_P[ h_{\pi}]=\EE_P\qth{\EE_P\qth{\left. h_{\pi}\right|\bx}}=0$, as given $\bx$, $y(1)-\mu_1^*(\bx)$ is uncorrelated with $T$ (unconfoundedness)
			\item $\EE_P\qth{ h_{\pi}^2}
			\leq {C\delta_2^2\over \alpha^4}$ for a constant $C$, as $y(1)-\mu_1^*(\bx)$ is uniformly bounded over all $\bx$ and we have assumed that $\alpha<\pi^*,\pi<1-\alpha$.
			\item $\| h_{\pi}\|_{\infty}\leq {C\over\alpha}$ for a constant $C>0$.
		\end{enumerate}
		Then using \prettyref{thm:empirical_process} with $\delta^2={C\delta_2^2\over \alpha^4},  Q={C\over \alpha}$ we get that $\EE\qth{\sup_{h_\pi \in \calF_2}\abs{\GG_n( h_\pi)}}$ converges to zero if the following are satisfied as $n\to \infty$
		\begin{align*}
			\tilde J_{[]}\pth{\delta,\calF,L_2(P)}\to 0,\quad
			{\tilde J_{[]}\pth{\delta,\calF,L_2(P)}\over \alpha \delta^2\sqrt n}
			\to 0
		\end{align*}
		which is guaranteed by our assumptions.  Consequently, 
		$$
		|\sqrt{n} R_0 |  =  \Bigl | \GG_n[h_{\hat \pi}] \Bigr |  \leq  \sup_{h_\pi \in \calF_2}\abs{\GG_n( h_\pi)} \stackrel{P}{\to} 0.
		$$

		\item 
		
		We next bound $\sqrt n R_2$. Denote
		\begin{align*}
			\tilde h_{\mu_1,\pi}(\bx) &= {(\pi^*(\bx)-T) \over \pi(\bx)}(\mu_1(\bx)- \mu_1^*(\bx)),
			\quad
			\breve h_{\mu_1,\pi}(\bx) &= {(\pi(\bx)-\pi^*(\bx)) \over \pi(\bx)}( \mu_1(\bx)- \mu_1^*(\bx))
		\end{align*}
		and note that 
		$\sqrt n R_2=\GG_n\qth{\tilde h_{\hat \mu_1,\hat \pi}}
		+\GG_n\qth{\breve h_{\hat \mu_1,\hat \pi}}$.
		Denote 
		\begin{align*}
			\calF_{11}&=\Biggl\{ \tilde h_{\mu_1,\pi}: \EE_P\qth{(\mu_1-\mu_1^*)^2}\leq \delta_1^2, \alpha<\pi<1-\alpha, \|\mu_1\|_{\infty}<\tilde B, \mu_1,\pi\in \calF\Biggr\},
			\nonumber\\
			\calF_{12}&=\Biggl\{\breve h_{\mu_1,\pi}: \EE_P\qth{(\mu_1-\mu_1^*)^2}\leq \delta_1^2,
			\EE_P\qth{(\pi-\pi^*)^2}\leq \delta_2^2,
			\nonumber\\
			&\quad ~~~~~
			\|\mu_1\|_{\infty}<\tilde B,\alpha<\pi<1-\alpha,
			\mu_1,\pi\in \calF\Biggr\}.
		\end{align*}
		Then, it suffices to separately show that 
		$$
		\lim _{n\to \infty}
		\sup_{\tilde h_{\mu_1,\pi}\in \calF_{11}}\abs{\GG_n\qth{\tilde h_{\mu_1,\pi}}} = 0,
		\quad
		\lim _{n\to \infty}
		\sup_{\breve h_{\mu_1,\pi}\in \calF_{12}}\abs{\GG_n\qth{\breve h_{\mu_1,\pi}}} = 0.
		$$ 
		
		\begin{enumerate}
			\item We first show $\lim _{n\to \infty}
			\sup_{\tilde h_{\mu_1}\in \calF_{11}}\abs{\GG_n\qth{\tilde h_{\mu_1,\pi}}} = 0$. For each fixed $\tilde h_{\mu_1,\pi}\in \calF_{11}$ (given by fixed $\mu_1,\pi\in \calF$)
			\begin{enumerate}
				\item $\EE_P[\tilde h_{\mu_1,\pi}]=\EE_P\qth{\EE_P\qth{\left. \tilde h_{\mu_1,\pi}\right|\bx}}=0$, as $\EE[T-\pi^*(\bx)|\bx]$ has expectation zero.
				\item $\EE_P\qth{ \tilde h_{\mu_1,\pi}^2}
				\leq {C\delta_1^2\over \alpha^2}$ for a constant $C$, as we assumed that $\alpha<\pi^*,\pi<1-\alpha$.
				\item $\|\tilde h_{\mu_1,\pi}\|_{\infty}\leq {C\over\alpha}$ for a constant $C>0$ as $\mu_1,\mu^*$ are bounded.
			\end{enumerate}
			Then we apply \prettyref{thm:empirical_process} with $\delta^2={C\delta_1^2\over \alpha^2},  Q={C\over \alpha}$ to get $\EE\qth{\sup_{\tilde h_{\mu_1,\pi} \in \calF_1}\abs{\EE_n(\tilde h_{\mu_1,\pi})}}$ converges to zero if the following are satisfied as $n\to \infty$
			\begin{align}\label{eq:m8}
				\tilde J_{[]}\pth{\delta,\calF\times \calF,L_2(P)}\to 0,\quad
				{\tilde J_{[]}\pth{\delta,\calF\times \calF,L_2(P)}\over \alpha \delta^2\sqrt n}
				\to 0
			\end{align}
			which is guaranteed by our assumptions as for any function classes $\calF_1,\calF_2$ 
			$$\tilde J_{[]}\pth{\delta,\calF_1\times \calF_2,L_2(P)}\leq \tilde J_{[]}\pth{\delta,\calF_1,L_2(P)}+\tilde J_{[]}\pth{\delta,\calF_2,L_2(P)}.$$
			
			\item 
			We establish $\lim _{n\to \infty}
			\sup_{\breve h_{\mu_1,\pi}\in \calF_{12}}\abs{\GG_n\qth{\breve h_{\mu_1,\pi}}} = 0$ using the following subparts
			\begin{align}\label{eq:m9}
				\lim _{n\to \infty}
				\sup_{\breve h_{\mu_1,\pi}\in \calF_{12}}\abs{\GG_n\qth{\breve h_{\mu_1,\pi}}} = 0,
				\quad
				\lim _{n\to \infty}
				\sqrt n\cdot \sup_{\breve h_{\mu_1,\pi}\in \calF_{12}}\abs{\EE_P\qth{\breve h_{\mu_1,\pi}}} = 0
			\end{align}
			For showing the first part, we note
			\begin{enumerate}
				\item $\EE_P\qth{ \breve h_{\mu_1,\pi}^2}
				\leq {C\delta_1^2\over \alpha^2}$ for a constant $C$, as we assumed that $\alpha<\pi^*,\pi<1-\alpha$.
				\item $\|\breve h_{\mu_1,\pi}\|_{\infty}\leq {C\over\alpha}$ for a constant $C>0$ as $\mu_1,\mu^*$ are bounded.
			\end{enumerate}
			Then we apply \prettyref{thm:empirical_process} with $\delta^2={C\delta_1^2\over \alpha^2},  Q={C\over \alpha}$ and \eqref{eq:m8} to get 
			$$\EE_P\qth{\sup_{\breve h_{\mu_1,\pi} \in \calF_{12}}\abs{\GG_n\qth{\breve h_{\mu_1,\pi}}}}\to 0.$$
			
			For the second expression in \eqref{eq:m9} using the Cauchy-Schwarz inequality we get
			$$
			\EE_P[\breve h_{\mu_1,\pi}]
			\leq \frac 1{\alpha} \sqrt{\EE_P\qth{(\pi-\pi^*)^2}\EE_P\qth{(\mu_1-\mu_1^*)^2}}
			$$
			for every fixed $\mu_1,\pi$ such that $\breve h_{\mu_1,\pi}\in \calF_{12}$. As all such $\mu_1,\pi$ satisfy 
			$$\EE_P\qth{(\mu_1-\mu_1^*)^2}\leq \delta_1^2,
			\EE_P\qth{(\pi-\pi^*)^2}\leq \delta_2^2,$$ 
			we continue the last display to get
			$
			\sqrt n \sup_{\breve h_{\mu_1,\pi} \in \calF_b}\abs{\EE \qth{\breve h_{\mu_1,\pi}}}\leq \sqrt n\delta_1\delta_2/\alpha,
			$
			which converges to zero in view of our assumptions.
		\end{enumerate}
	\end{enumerate}
\end{proof}

\begin{proof}[Proof of \prettyref{thm:main-ate}]
	We will verify the conditions in \prettyref{thm:genr}. Note that the conditions (i) and (ii) are satisfied in view of \prettyref{asmp:bounded}. Next we check condition (iii) in \prettyref{thm:genr}. In view of \cite[Theorem 4]{fan2024factor} we note that there exists a constant $c_1$ such that 
	$$
	\delta_{i,a} \leq \sup_{g\in \calH(r+|\calJ_i|,l,\calP)}\inf_{\hat g\in\calG(L,\barr+N,1,N,M,B)}\|g - \hat g\|_{\infty}^2\leq c_1 (n/\log n)^{-{2\gamma^*\over 2\gamma^*+1}},\quad i=0,1,2.
	$$
	In view of the definition of $\delta_{i,s},i=0,1,2$ in \prettyref{thm:main-fast-nn} we get that there exists a constant $c_2$ such that
	$$
	\delta_{i,s}\leq c_2(n/\log n)^{-{2\gamma^*\over 2\gamma^*+1}},\quad i=0,1,2.
	$$
	On the other hand, if $r\geq 1$, we use the assumption $p>(n/\log n)^{\frac 12 + c}$ for some constant $c>0$. This implies for constants $c_3>0$
	$$
	\delta_{i,f}\leq c_3(n/\log n)^{-(\frac 12 +c)},\quad i=0,1,2.
	$$
	Note that from the definition of $\delta_{i,f}$ in \prettyref{thm:main-fast-nn}, the above error becomes zero if $r=0$. As we do not require $p>(n/\log n)^{\frac 12 + c}$ for any other aspects of our proof, we can remove this requirement when $r=0$ and our proof for this specific case follows the remainder of the arguments.
	Hence, assuming $\gamma^*>\frac 12+c_4$ for some constant $c_4>0$, we get from \prettyref{thm:main-fast-nn} that there is an event $\calE$ with $\PP[\calE]\geq 1-n^{-2}$ and constant $c_5\in (0,\frac 12)$ such that
	\begin{align*}
		\EE_P[(\hat \mu_{j}^\fast - \mu_j^*)^2]
		&\leq (n/\log n)^{-\pth{\frac 12+c_5}},\quad j=0,1,\\
		\EE_P[(\hat \pi^\fast - \pi^*)^2]
		&\leq (n/\log n)^{-\pth{\frac 12+c_5}}.
	\end{align*}
	Hence, we get that on the event $\calE$, our estimators satisfy the requirement \eqref{eq:delta-bound} with $\delta_0^2=\delta_1^2=\delta_2^2=(n/\log n)^{-\pth{\frac 12+c_5}}$. Call this common value to be $\tilde \delta^2$. Then, the requirement (iii) in \prettyref{thm:genr} is satisfied. It remains to prove the condition (iv) in \prettyref{thm:genr}. In view of our choice of $\sth{\tau_k,\hat\bTheta_k}_{k=0}^2$ we first show that with a high probability
	\begin{align}\label{eq:psi-bound}
		\sum_{i,j}\psi_{\tau_k}(\hat\Theta_{k,i,j})\leq c_6 (n/\log n)^{\frac 12-c_5},\quad k=0,1,2.
	\end{align}
	We prove the case for $\tau_1$ as the other cases can be proven in a similar way. In view of \eqref{eq:m7}, choosing $t=2\log n$ we get that on an event $\calE_1$ with $\PP[\calE_1]\geq 1-\frac 1{n^2}$, for a constant $\bar c>0$
	\begin{align}\label{eq:m11}
		\sum_{i,j}\psi_{\tau_1}(\hat\Theta_{1,i,j})
		\leq \bar c \Biggl(|\calJ_1| + \frac 1{ \lambda_1}\Biggl\{
		&(n/\log n)^{-{2\gamma^*\over 2\gamma^*+1}}+({N^2L+N\bar r}){L\log(BNn)\over n}
		\nonumber\\
		&+{\log(np(N + \bar r)) + L\log(BN)\over n}\Biggr\}\Biggr).
	\end{align}
	As our choice of $\lambda_1$ guarantees $\lambda_1>{\log (np)\over n}$, we get the desired result. Here we used $N^2\asymp (n/\log n)^{1\over 2\gamma^*+1}<\sqrt {n\over \log n}$ from \prettyref{asmp:hyperparameters}. Now we are ready to verify the bracketing integral requirements in \prettyref{thm:genr}. We first note the following results for deep neural networks class that we use. A proof is provided later in this section.
	\begin{lemma}\label{lmm:bracketing-dnn}
		Consider the set $\calG_{m,s}$ defined as 
		\begin{align}\label{eq:G-ms}
			\calG_{m,s}=\sth{\mfunc=m^\fast(\cdot;\bW,g,\bTheta)\in \calG_m:\sum_{i,j}\psi_\tau(\Theta_{ij})\leq s,\|\bTheta\|_{\max}\leq B}.
		\end{align}
		and denote 
		$$\begin{gathered}
			A=LN^2+(L+1)N+N\barr+1,\\
			\tilde C = (M\vee K\|\bW\|_{\max})(L+1)B^L (N+1)^{L+1}
			+ K B^L N^L(N+\barr) p.
		\end{gathered}$$ 
		Then for all $\delta>0$
		\begin{align*}
			\tilde J_{[]}(\delta,\calG_{m,s},L_2(P))
			\leq
			8\sqrt{\log (B\tilde C)}\pth{\tau \log(1/\tau)\sqrt{A+Np} +\indc{\delta\geq \tau}\delta \log(1/\delta) \sqrt{(A+2s)}}.
		\end{align*}
	\end{lemma}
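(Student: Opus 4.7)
The plan is to construct $\epsilon$-brackets of $\calG_{m,s}$ in $L_2(P)$ from $\sup$-norm covers of the parameter space, exploiting Lipschitzness of the map $(g,\bTheta)\mapsto m^\fast(\cdot;\bW,g,\bTheta)$ together with the sparsity-inducing $\psi_\tau$ constraint. I would split the parameters into the $A$ ``dense'' scalars describing $g$ (the weight matrices, biases, and first-layer weights acting on $\tilde\bff$) and the $Np$ entries of $\bTheta$. Since sup-norm covers convert to $L_2(P)$-brackets at the cost of a universal constant (by taking $\pm\epsilon$ around each cover element), it suffices to control sup-norm covering numbers at each resolution.

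First, I would establish a Lipschitz estimate on the parameter-to-function map through a layer-by-layer recursion for bounded-weight \relu networks. This should yield $\|m^\fast(\cdot;\bW,g,\bTheta)-m^\fast(\cdot;\bW,g',\bTheta')\|_\infty \le L_g \|\theta_g-\theta_{g'}\|_\infty + L_{\bTheta}\|\bTheta-\bTheta'\|_\infty$, where $L_g$ (scaling as $(M\vee K\|\bW\|_{\max})(L+1)B^L(N+1)^{L+1}$) comes from propagating a single weight/bias perturbation through $L$ layers while using the truncation-induced bound on activations, and $L_{\bTheta}$ (scaling as $KB^LN^L(N+\bar r)p$) comes from bounding $\bx$ by $K$ in $\ell_\infty$ and then propagating a perturbation of $\Tr_M(\bTheta^\top\bx)$ through $g$. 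These sum to the constant $\tilde C$ in the statement.

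The key step is a two-regime covering of $\bTheta$ driven by the $\psi_\tau$-penalty, which enforces (i) at most $s$ entries with $|\Theta_{ij}|>\tau$, and (ii) total $\ell_1$-mass at most $s\tau$ among the remaining ``small'' entries. At scale $\epsilon\le\tau$, I would cover all of $\bTheta$ by an $\epsilon/\tilde C$-grid in $[-B,B]^{Np}$, which combined with the dense-parameter grid gives $\log N_{[]}(\epsilon)\le (A+Np)\log(B\tilde C/\epsilon)$. At scale $\epsilon>\tau$, I would zero out the small entries (incurring an output error of order $s\tau$ times the per-entry Lipschitz factor, absorbed into the $\epsilon$-budget since $\epsilon\ge\tau$) and then encode the $\le s$ large entries by at most $\binom{Np}{s}\le(eNp/s)^s$ support choices and $(B\tilde C/\epsilon)^s$ value quantizations, yielding $\log N_{[]}(\epsilon)\le (A+2s)\log(NpB\tilde C/\epsilon)$.

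Finally, I would substitute into the bracketing integral, split at $\tau$, and compute
\begin{align*}
\tilde J_{[]}(\delta,\calG_{m,s},L_2(P))&\le \int_0^{\min(\delta,\tau)}\sqrt{1+(A+Np)\log(B\tilde C/\epsilon)}\,d\epsilon \\
&\quad +\indc{\delta\ge\tau}\int_\tau^\delta \sqrt{1+(A+2s)\log(NpB\tilde C/\epsilon)}\,d\epsilon,
\end{align*}
then apply $\sqrt{\log(B\tilde C/\epsilon)}\le \sqrt{\log(B\tilde C)}\,(1+\log(1/\epsilon))$ (valid when $\log(B\tilde C)\ge 1$) together with $\int_0^a(1+\log(1/\epsilon))\,d\epsilon\le 2a\log(1/a)$ to get the two summands in the stated bound, after noting that $\log(NpB\tilde C)\lesssim \log(B\tilde C)$ because $\tilde C$ already absorbs $Np$ factors. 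I expect the main obstacle to be verifying that the output error from zeroing out small entries of $\bTheta$ genuinely fits within the $\epsilon$-budget when $\epsilon\ge\tau$: this requires careful $\ell_1\to\ell_\infty$ bookkeeping in the first hidden layer and then propagation through the $L$ \relu layers, and sloppy accounting would either inflate the leading constant or replace $\sqrt{A+2s}$ with $\sqrt{A+Np}$, nullifying the gain from the sparsity exploitation.
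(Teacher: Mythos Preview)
Your approach is essentially the same as the paper's: reduce $L_2(P)$-bracketing to $\|\cdot\|_\infty$-covering of the parameter space via the Lipschitz constant $\tilde C$ (the paper cites this as Lemma~8 of \cite{fan2024factor}), split the covering argument at resolution $\tau$, and then integrate. The two-regime counting and the final integral manipulation all match.

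Your anticipated ``main obstacle,'' however, is a non-issue, and your description of it contains a slight overcomplication. The constraint $\sum_{i,j}\psi_\tau(\Theta_{ij})\le s$ gives you directly that at most $s$ entries satisfy $|\Theta_{ij}|>\tau$ (since $\psi_\tau(\Theta_{ij})=1$ for those). The remaining entries are each bounded by $\tau$ in absolute value; there is no need to track their total $\ell_1$-mass. Because the Lipschitz bound $\|m-\breve m\|_\infty\le \tilde C\|\btheta(m)-\btheta(\breve m)\|_\infty$ is stated in the parameter $\ell_\infty$-norm and $\tilde C$ already contains the factor $p$ that accounts for accumulation across rows of $\bTheta$, zeroing out all small entries perturbs the parameter vector by at most $\tau$ in $\|\cdot\|_\infty$, and hence the function by at most $\tilde C\tau$. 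No $\ell_1\to\ell_\infty$ bookkeeping or propagation argument is required at this step; the work was already done in defining $\tilde C$. One minor point: your inequality $\sqrt{\log(B\tilde C/\epsilon)}\le\sqrt{\log(B\tilde C)}(1+\log(1/\epsilon))$ is not quite the right splitting; the paper instead uses $\log((1+4B\tilde C)/\epsilon)\le 4\log(B\tilde C)\log(1/\epsilon)$ for $\epsilon<1$ and $B\tilde C$ large, and then evaluates $\int_0^\delta\sqrt{\log(1/\epsilon)}\,d\epsilon\le 2\delta\log(1/\delta)$ via the substitution $\log(1/\epsilon)=z^2$ and a Gaussian tail bound.
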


	We now apply \prettyref{lmm:bracketing-dnn} with $\alpha={1\over\log n},\delta={\tilde \delta\over \alpha},s=c_6 (n/\log n)^{\frac 12-c_5}$ and $\log p\leq (n/\log n)^{\frac 12 - \tilde c}$ for some constant $\tilde c\in (0,\frac 12)$ and obtain
	\begin{align*}
		(\log n)  {(\tilde J_{[]}(\delta,\calG_{m,s},L_2(P)))^2\over \delta^2 \sqrt n}
		\leq c_8(\log n)^4 \log (B\tilde C)\sth{{1\over \sqrt n}+{(A+2s)\log n\over \sqrt n}}
		\to 0 \text{ as } n\to \infty,
	\end{align*}
	and 
	\begin{align*}
		\tilde J_{[]}(\delta,\calG_{m,s},L_2(P))
		\leq c_9(\log n)^2\sqrt{\log (B\tilde C)}\sth{\delta+\sqrt{\delta^2(A+2s)\log n}}
		\to 0 \text{ as } n\to \infty. 
	\end{align*}
	where we used that $A\asymp N^2\asymp (n/\log n)^{1\over 2\gamma^*+1}<\sqrt {n\over \log n}$, given $\gamma^*>\frac 12$. 
	
\end{proof}

\begin{proof}[Proof of \prettyref{lmm:bracketing-dnn}]
	The proof strategy is as follows. Consider the parameter space 
	\begin{align*}
		\calU=\sth{(\bTheta,\{\bW_\ell,\bb_\ell\}_{\ell=1}^{L+1})\in [-B,B]^{Np+A}:
			m(\cdot;\bW,g,\bTheta)\in \calG_{m,s}}.
	\end{align*}
	We will first show a bound on $\log \calN(\epsilon_1,\calU,\|\cdot\|_{\infty})$ separately for the case $\epsilon_1>\tau$ and $\epsilon_1\leq \tau$. Then in view of \cite[Theorem 2.7.17]{vaart2023empirical} we can use the bound 
	\begin{align}\label{eq:bracket-entropy}
		\calN_{[]}(\epsilon_1,\calG_{m,s},L_2(\calP))
		\leq N\pth{{\epsilon_1\over 2\tilde C},\calU,\|\cdot\|_{\infty}},
	\end{align}
	where $\tilde C$ is as defined in the result statement, and satisfies \cite[Lemma 8]{fan2024factor}
	$$
	\sup_{\bx\in [-K,K]^p}|m(\bx)-\breve m(\bx)|\leq \tilde C \|\btheta(m) - \btheta(\breve m)\|_{\infty},
	\quad
	m,\breve m\in \calG_{m,s},
	\btheta(m)=\{\bTheta,(\bW_\ell,\bb_\ell)_{\ell=1}^{L+1}\}.
	$$
	\begin{enumerate}
		\item Consider the case {$\epsilon_1\geq\tau$}: We will use the cover
		\begin{align*}
			\calU(\epsilon_1)=\cup_{S\subset [N]\times[p]:|S|=s}\calU(\epsilon_1, S)
		\end{align*}
		where
		\begin{align*}
			\calU(\epsilon_1, S) = \Bigg\{&[\bW_\ell]_{i,j}, [\bb_{\ell}]_{j} \in \left\{ -B+\epsilon_1,\cdots, -B+\epsilon_1 \cdot \left\lceil \frac{2B}{\epsilon_1} \right\rceil \right\},  \\
			&~\bTheta_{S} \in \left\{-B+\epsilon_1,\cdots, -B+\epsilon_1 \cdot \left\lceil \frac{2B}{\epsilon_1} \right\rceil \right\}^{s}, \bTheta_{S^c} = 0 \Bigg\}
		\end{align*}
		Then we show that $\calU(\epsilon_1)$ is a valid $\epsilon_1$-cover of $\calU$ in the $\|\cdot\|_{\infty}$ norm. Note that 
		$$
		\sum_{i,j} \indc{|\Theta_{i,j}|>\tau}\leq
		\sum_{i,j} \psi_\tau(\Theta_{ij})\leq s.
		$$
		The above implies, given any $T\in \calU$, there is a set $S\subset [n]\times [p]$ with $|S|\leq s$ such that $\bTheta_{S^c}$ has all entries with absolute value bounded by $\tau$. As $\epsilon_1\geq \tau$, we can find a $\tilde T\in \calU(\epsilon_1,S)$ such that $\|\tilde T-T\|_{\infty}\leq \epsilon_1$. Hence, $\calU(\epsilon_1)$ gives us an $\epsilon_1$-cover of $\calU$. Note that the total number of parameters in $\{\bW_\ell,\bb_\ell\}_{\ell=1}^{L+1}$ is $N(N+\barr)+N + \sum_{\ell=2}^L N(N+1) + N+1$, which is defined as $A$ in the lemma statement. Then, it is straightforward to check that $\calU(\epsilon_1)$ has at most $\ceil{2B\over \epsilon_1}^{s+A}$ entries. Next, note that the number of choices for $S\subset[N]\times [p]$ such that $|S|=s$ is $\binom{Np}{s}\leq (Np)^s$. Hence
		\begin{align*}
			\calN(\epsilon_1,\calU,\|\cdot\|_{\infty})
			\leq \ceil{2B\over \epsilon_1}^{s+A}
			(Np)^s
			\leq \ceil{2B\over \epsilon_1}^{s+A}
			(\tilde C)^s,\quad
			\epsilon_1\geq \tau.
		\end{align*}
		\item Next, we consider {$\epsilon_1<\tau$}:
		\begin{align*}
			\calU(\epsilon_1) = \Bigg\{&[\bW_\ell]_{i,j}, [\bb_{\ell}]_{j},[\bTheta]_{i,j} \in \left\{ -B+\epsilon_1,\cdots, -B+\epsilon_1 \cdot \left\lceil \frac{2B}{\epsilon_1} \right\rceil \right\} \Bigg\}
		\end{align*}
		It is straightforward to show that $\calU(\epsilon_1)$ gives an $\epsilon_1$-cover of $\calU$. As the total number of parameters in $(\bTheta,\{\bW_\ell,\bb_\ell\}_{\ell=1}^{L+1})$ is $Np+N(N+\barr)+N + \sum_{\ell=2}^L N(N+1) + N+1=Np+A$, $\calU(\epsilon_1)$ has at most $\ceil{2B\over \epsilon_1}^{Np+A}$ entries. Hence
		\begin{align*}
			\calN(\epsilon_1,\calU,\|\cdot\|_{\infty})
			\leq \ceil{2B\over \epsilon_1}^{Np+A},\quad
			\epsilon_1 < \tau.
		\end{align*}
	\end{enumerate}
	Combining the above, in view of \eqref{eq:bracket-entropy} we get that
	\begin{align*}
		\log\pth{\calN_{[]}(\epsilon_1,\calG_{m,s},L_2(\calP))}
		\leq
		\Biggl\{\indc{\epsilon_1<\tau}\pth{A+Np}
		+\indc{\epsilon_1\geq \tau}\pth{A+2s}\Biggr\}\log\pth{1+4B\tilde C\over \epsilon_1}.
	\end{align*}
	As $B\tilde C$ is large, for all $\epsilon_1<1$, we can use the inequality $\log\pth{1+4B\tilde C\over \epsilon_1}\leq 4\log(B\tilde C)\log(1/\epsilon_1)$.
	Noting the definition of the bracketing integral in \prettyref{def:bracketing} we get
	\begin{align}\label{eq:m6}
		&\tilde J_{[]}(\delta,\calG_{m,s},L_2(P))
		\nonumber\\
		&\leq
		4\sqrt{\log (B\tilde C)}\pth{\indc{\delta<\tau}\sqrt{A+Np} \int_{0}^\tau \sqrt{\log(1/\varepsilon)} d\varepsilon +\indc{\delta\geq \tau} \sqrt{(A+2s)} \int_{0}^\delta \sqrt{\log(1/\varepsilon)} d\varepsilon}.
	\end{align}
	Hence, it suffices to bound $\int_0^\delta \sqrt{\log(1/\epsilon)}\ d\epsilon$. Using a change of variable $\log(1/\epsilon)=z^2$ the integral can be transformed into $2\int_{\nu}^\infty z^2e^{-z^2} dz$ with $\nu=\sqrt{\log(1/\delta)}$. To study the last integral we will use $\int_{\nu}^\infty e^{-(az)^2} dz$. Note that for any $a\neq 0$ we have
	\begin{align*}
		\int_{\nu}^\infty e^{-(az)^2} dz 
		&= \frac 1a \int_{a\nu\sqrt 2}^\infty 2^{-\frac {y^2}2}dy
		={\sqrt{2\pi}\over a}\pth{1-\Phi(a\nu\sqrt 2)}.
	\end{align*}
	Differentiating the above display with respect to $a$ we get ($\phi$ is the standard Gaussian density)
	\begin{align*}
		2a\int_{\nu}^\infty z^2 e^{-(az)^2} dz 
		={\sqrt{2\pi}\over a^2}\pth{1-\Phi(a\nu\sqrt 2)}
		+{\sqrt{2\pi}\over a}\phi(a\nu\sqrt 2)\nu\sqrt 2.
	\end{align*}
	Plugging in $a=1$ and using Mill's ratio bound $1-\Phi(x)\leq {\phi(x)\over x}$ for $x>0$ we get
	\begin{align*}
		2\int_{\nu}^\infty z^2 e^{-z^2} dz 
		\leq {\sqrt{2\pi}\phi(\nu\sqrt 2)\over \nu\sqrt 2}
		+{\sqrt{2\pi}}\phi(\nu\sqrt 2)\nu\sqrt 2
		\leq {e^{-\nu^2}\over \nu\sqrt 2}+{\nu\sqrt 2 e^{-\nu^2}}.
	\end{align*}
	Finally substituting $\nu=\sqrt{\log(1/\delta)}$ we get
	\begin{align*}
		\int_0^\delta \sqrt{\log(1/\epsilon)}\ d\epsilon
		\leq \delta\pth{\sqrt{2\log\frac 1\delta}+\frac 1{\sqrt{2\log\frac 1\delta}}}
		\leq 2\delta\log(1/\delta).
	\end{align*}
	Then, in view of \eqref{eq:m6}, the result follows. This completes the proof.
\end{proof}

\section{Proof of \prettyref{thm:main-fast-nn}}
Proving the result related to $\pi^*$ is similar to the proof of \cite[Theorem 2]{fan2024factor} as the proofs depend on the entire sample space, so it is omitted here. We prove the result related to estimating $\mu_0^*,\mu_1^*$ and the proof differs from the standard functional guarantees in \cite[Theorem 2]{fan2024factor} as the above work analyses function estimation with fixed data and our estimators for $\mu_0^*,\mu_1^*$ are constructed using the random subsamples given by the control group $\{(y_i,\bx_i,T_i):T_i=0\}$ and the treatment group $\{(y_i,\bx_i,T_i):T_i=1\}$ respectively. Our proof will rely on the following auxiliary result. Let $P_j$ denote the conditional law of $\bx$ given $T=j,j=0,1$ and define
$$
\EE_{P_j}[h]=\int h(\bx) dP_j(\bx),
\quad
\EE_{n,j}[h] = \frac 1{n_j}\sum_{i:T_i=j} h(\bx_i),
\quad
n_j=\sum_{i=1}^n \indc{T_i=j},\quad
j=0,1.
$$

\begin{lemma}\label{lmm:conditional-bound}
	Suppose that the conditions in \prettyref{thm:main-fast-nn} hold. Then, with probability at
	least $1 - O(e^{-t}+e^{-n\alpha_*^2/2})$, the following holds, for $n$ large enough and $j=0,1$
	\begin{align*}
		&\EE_{P_j}\qth{ \pth{\hat \mu_{j}^\fast- \mu_j^*}^2}+ \EE_{n,j}\qth{(\hat \mu_{j}^\fast - \mu_j^*)^2}
		\leq {c\over \alpha_*}\sth{\delta_\opt + \delta_{j,a} + \delta_{j,s} + \delta_{j,f} +\frac tn},
	\end{align*}
	where $c$ is a constant.
\end{lemma}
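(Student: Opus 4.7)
The plan is to condition on the treatment assignment vector $(T_1, \dots, T_n)$ and reduce the claim to the fixed-sample FAST-NN oracle inequality of \cite[Theorem 2]{fan2024factor}, applied separately to the treatment and control subsamples. This reduction is legitimate because, by the unconfoundedness Assumption \ref{asmp:confoundedness}, conditional on $\{T_i\}$ the pairs $\{(\bx_i, y_i) : T_i = j\}$ are i.i.d. from the law of $(\bx, y(j))$ with $\bx \sim P_j$ and sub-Gaussian noise $\varepsilon(j)$; and because $\bW$ is built from a pretraining sample independent of the main data, making $\tilde \bff_i = p^{-1} \bW^\top \bx_i$ a deterministic linear function of $\bx_i$. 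Hence $\hat R_j$ in \eqref{eq:l1-optim} is genuinely a penalized empirical risk on the subsample of size $n_j$, of exactly the same form as in the original FAST-NN analysis.

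First I would control $n_j$ from below. Since $\pi^*(\bx) \in (\alpha_*, 1 - \alpha_*)$ by Assumption \ref{asmp:reg}, each $T_i$ is Bernoulli with parameter in $[\alpha_*, 1 - \alpha_*]$ conditional on $\bx_i$, so $\EE[n_j] \geq n \alpha_*$ and a standard Hoeffding/Chernoff bound gives $n_j \geq n \alpha_* / 2$ for both $j \in \{0,1\}$ on an event $\mathcal{A}$ with $\PP[\mathcal{A}] \geq 1 - 2 e^{-n \alpha_*^2 / 2}$. On $\mathcal{A}$, conditioning on $\{T_i\}$, I would invoke \cite[Theorem 2]{fan2024factor} with $n \to n_j$ and $P \to P_j$. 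The approximation error $\delta_{j,a}$ (an $L_\infty$ quantity) and the factor-estimation term $\delta_{j,f}$ (driven by $\nu_{\min}(p^{-1}\bW^\top \bB)$ via \prettyref{thm:div-proj}) transfer without any change, since neither depends on the subsample distribution. The complexity term, whose derivation rests on peeling and bracketing-integral bounds for the ReLU-plus-sparse-throughput class, depends on $P_j$ only through the uniform $L_\infty$ bound on the covariates (Assumption \ref{asmp:bounded}), so its proof carries over verbatim with $n$ replaced by $n_j$. This yields a conditional bound of the form $c'\{\delta_\opt + \delta_{j,a} + \tilde\delta_{j,s} + \delta_{j,f} + t/n_j\}$ with probability $\geq 1 - e^{-t}$, where $\tilde\delta_{j,s}$ is $\delta_{j,s}$ with $n$ replaced by $n_j$. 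Using $n_j \geq n \alpha_* / 2$ inflates the terms containing $1/n_j$ by at most $2/\alpha_*$, producing the $1/\alpha_*$ prefactor; a union bound over $j \in \{0,1\}$ together with $\mathcal{A}$ then gives the final probability $1 - O(e^{-t} + e^{-n \alpha_*^2 / 2})$.

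The hard part will be the careful transcription of the original FAST-NN oracle inequality to the conditional, subsample setting. Three structural items need checking: (i) the sub-Gaussian tail of $\varepsilon(j)$ persists conditional on $\{T_i\}$, which follows from $T \perp (y(0), y(1)) \mid \bx$; (ii) the penalty levels $\lambda_j$ and clipping thresholds $\tau_j$ specified in \prettyref{thm:main-fast-nn} --- which are already calibrated in terms of $n_j$ --- exceed the thresholds required by the fixed-sample oracle inequality, which is essentially how they are defined; and (iii) the pervasiveness and weak-dependence assumptions on $\bB$ and $\bu$ (Assumptions \ref{asmp:bounded}--\ref{asmp:dist-fu}) are statements about the unconditional law of $\bx$ and therefore survive conditioning on $\{T_i\}$, so the guarantee of \prettyref{thm:div-proj} continues to control $\delta_{j,f}$ on $\mathcal{A}$. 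Once these structural checks are in place, the conclusion reduces to substituting $n_j \geq n\alpha_*/2$ and bookkeeping of constants.
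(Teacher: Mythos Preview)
Your overall strategy --- condition on $(T_1,\dots,T_n)$, apply the fixed-sample FAST-NN oracle inequality to the subsample of size $n_j$ under the conditional law $P_j$, and then use a Chernoff bound to ensure $n_j \geq n\alpha_*/2$ --- is exactly the paper's approach, and your checks (i), (ii) and the remark that the complexity/concentration steps depend on $P_j$ only through the $L_\infty$ bound on $\bx$ are all correct.

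One point needs fixing, though. In item (iii) you claim that the factor-model moment assumptions (zero-mean $\bff$ and $\bu$, weak dependence $\sum_{j\neq k}|\EE[u_ju_k]|\leq c_1 p$) ``survive conditioning on $\{T_i\}$.'' They do not: under $P_j$ the latent factor and idiosyncratic component generally lose their zero mean, and Assumption~\ref{asmp:depend} need not hold for $\EE_{P_j}[u_ju_k]$. This matters because the step in the FAST-NN analysis that produces the $\delta_{j,f}$ contribution is the bound $\EE[(\tilde m^*-m^*)^2]\lesssim \delta_{j,f}$, where $\tilde m^*(\bx)=m^*(\bH^+\tilde\bff,\bx_{\calJ_j}-[\bB]_{\calJ_j,:}\bH^+\tilde\bff)$; this expectation involves the second moments of $\bu$, and the weak-dependence assumption is precisely what makes it $O(1/p)$ rather than $O(1)$. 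So you cannot invoke \cite[Theorem~2]{fan2024factor} as a black box with $P\to P_j$. The paper's fix is simple: carry out this factor-error step under the \emph{unconditional} law $P$ (where Assumptions~\ref{asmp:bounded}--\ref{asmp:dist-fu} apply), obtaining $\EE_P[(\tilde m^*-m^*)^2]\lesssim\delta_{j,f}$, and then transfer to $P_j$ via the change of measure $dP_j\leq dP/\alpha_*$. This is an additional source of the $1/\alpha_*$ prefactor, beyond the one you attribute to $n_j\geq n\alpha_*/2$. With this correction your plan goes through and coincides with the paper's.
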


In view of the above result, the proof of \prettyref{thm:main-fast-nn} relies on bounding $\EE_P$ using the conditional expectation $\EE_{P_j}$, as \prettyref{asmp:reg} implies that $dP(\bx)={\PP[T=j]\over \PP[T=j|\bx]} dP_j(\bx)
	\leq {dP_j(\bx)\over \alpha_*}$, for each $j=0,1$.
Hence, it remains to prove \prettyref{lmm:conditional-bound}.

\noindent \textbf{Notations:} We use the following notation for the proofs in this section. Given any matrix $\bB$ with $n$ rows and a subset $\calJ$ of the index set $\{1,\dots,n\}$, let $[\bB]_{\calJ,:}$ denote the submatrix consisting of the rows corresponding to the $\calJ$ index set. In view of \eqref{eq:factor}, define
\begin{align}\label{eq:diversified-decomposition}
	\tilde \bff= \frac 1p \bW^\top \bx, \quad
	\bH=\frac 1p \bW^\top \bB.
\end{align}
Given a matrix $\bH\in \reals^{\bar r\times r}, \bar r\geq r$ with full column rank define $\bH^+$ to be it's left inverse
$
	\bH^+=(\bH^\top\bH)^{-1}\bH^\top.
$
Assume that $\mfunc^*(\bx)=\mfunc(\bff,\bu_{\calJ_1})$, i.e., the coordinates corresponding to $\calJ_1$ are active in the output function $\mfunc^*$. Also define
\begin{align}
	\label{eq:v-rho}
	v_n=({N^2L+N\bar r}){L\log(BNn)\over n},
	\quad
	\varrho_n={\log(np(N + \bar r)) + L\log(BN)\over n}.
\end{align}

\begin{proof}[Proof of \prettyref{lmm:conditional-bound}]
	
	We only prove results related to $\mu_1^*$ as the result for $\mu_0^*$ is similar. We first outline the key steps of the proof of \prettyref{lmm:conditional-bound}.
	
	\begin{itemize}
		\item \textbf{Step 1:} Show that $\tilde \mfunc^*(\bx) = \mfunc^*(\bH^+\tilde \bff,\bx_{\calJ_1}-[\bB]_{\calJ_1,:}\bH^+\tilde\bff)$ is close to $\mfunc^*(\bff,\bu_{\calJ_1})$
		\begin{align}
			\EE_{P}\qth{\pth{\tilde \mfunc^*-\mfunc^*}^2}
			\lesssim {|{\calJ_1}| r \cdot \bar r\over (\nu_{\min}(\bH))^2p}
			=\delta_{1,f},
		\end{align}
		where $\EE_P$ is the expectation with respect to the unconditional distribution of $\bx$. Then noting that $\pi^*(\bx)=\PP[T=1|\bx]\in (\alpha_*,1-\alpha_*)$ for all $\bx$, we get
		\begin{align}
			\label{eq:step1}
			\EE_{P_1}\qth{\pth{\tilde \mfunc^*-\mfunc^*}^2}
			\leq {\EE_{P}\qth{\pth{\tilde \mfunc^*-\mfunc^*}^2}\over \alpha_*}
			=\delta_{1,f}/\alpha_*.
		\end{align}
		\item \textbf{Step 2:} Define the function class
		\begin{align}\label{eq:Gm}
			\calG_m = \sth{m^{\fast}(\bx; \bW,g,\bTheta): g\in \calG(L,\bar r +N,1,N,M,B),\bTheta\in \reals^{p\times N},\|\bTheta\|_{\max}\leq B}
		\end{align}
		Then there exists $\tilde \mfunc\in \calG_m$ (i.e., with corresponding $\tilde \bTheta_1$) such that $\|\tilde \bTheta_1\|_0\leq |\calJ_1|$ and 
		\begin{align}
			\EE_{P}\qth{\pth{\tilde \mfunc-\tilde \mfunc^*}^2}
			\lesssim \delta_{1,f}+\delta_{1,a}.
		\end{align}
		Similar to \eqref{eq:step2} we get $\EE_{P_1}\qth{\pth{\tilde \mfunc-\tilde \mfunc^*}^2}\lesssim (\delta_{1,f}+\delta_{1,a})/\alpha_*$,
		which implies
		\begin{align}\label{eq:step2}
			\EE_{P_1}\qth{(\tilde\mfunc-\mfunc^*)^2}
			\leq 2\pth{\EE_{P_1}\qth{(\tilde \mfunc-\tilde \mfunc^*)^2}+
				\EE_{P_1}\qth{(\tilde \mfunc^*- \mfunc^*)^2}}
			\lesssim (\delta_{1,f}+\delta_{1,a})/\alpha_*.
		\end{align}
		
		\item \textbf{Step 3:} Derive the basic inequality 
		\begin{align}\label{eq:step3}
			&~\EE_{n,1}[(\hat \mfunc^\fast-\tilde \mfunc)^2]
			+2\lambda_1 \sum_{i,j}\psi_{\tau_1}(\hat\Theta_{1,i,j})
			\nonumber\\
			&\leq 
			4\EE_{n,1}[(\tilde \mfunc -\mfunc^*)^2]
			+\frac 4{n_1} \sum_{i\in [n]:T_i=1}\varepsilon_i(1) (\hat \mfunc^\fast(\bx_i)-\tilde \mfunc(\bx_i))
			+2\lambda_1|\calJ_1|+2\delta_{\opt}.
		\end{align}

		\item \textbf{Step 4:} Show that the following event occurs with a probability at least $1-e^{-t}$
		\begin{align}\label{eq:step4}
			\calB_{t,1/2}
			=\Bigg\{&\forall \mfunc=m^\fast(\cdot; \bW,g,\bTheta)\in \calG_m,\
			\frac 4{n_1} \sum_{i\in [n]:T_i=1} \varepsilon_i(1) (\mfunc(\bx_i)-\tilde \mfunc(\bx_i))
			\nonumber\\
			&~ 
			-\lambda_1\sum_{i,j}\psi_\tau(\Theta_{1,i,j})\leq \frac 12 \EE_{n,1} [(\mfunc-\tilde \mfunc)^2]+2\pth{{v_{n_1}}+\varrho_{n_1}+\frac t{n_1}}\Bigg\}. 
		\end{align}
		
		\item \textbf{Step 5:} Show that the following event occurs with a probability $1-e^{-t}-e^{-n\alpha_*^2/2}$
		\begin{align}\label{eq:step5}
			\calC_t
			=\Bigg\{\forall \mfunc&=m^\fast(\cdot; \bW,g,\bTheta)\in \calG_m,\ \frac 12\EE_{P_1}\qth{\pth{\mfunc-\tilde \mfunc}^2}
			\nonumber\\
			&\leq \frac 1{2}\EE_{n,1}[(\mfunc-\tilde \mfunc)^2]
			+{2\lambda_1}\sum_{i,j}\psi_{\tau_1}(\Theta_{i,j})+C_5\pth{v_{n_1}+\varrho_{n_1}+\frac t{n_1}}\Bigg\}. 
		\end{align}
		
		\item \textbf{Step 6:} We bound the separation between $\mfunc^*$ and $\tilde \mfunc$ from Step 2. For every $0<t\leq n$, there is an event $\calA_t$ with $\PP[\calA_t]\geq 1-e^{-t}-e^{-n\alpha_*^2/2}$ on which
		\begin{align}\label{eq:step6}
			&\EE_{n,1}[(\tilde \mfunc-\mfunc^*)^2]
			\lesssim \frac 1{\alpha_*}\pth{{\delta_{1,f}+\delta_{1,a}}+\frac tn}.
		\end{align} 
		
		\item \textbf{Step 7:} We bound $\EE_{n,1}[(\hat \mfunc^\fast-\mfunc^*)^2]$. Using \eqref{eq:step3}, \eqref{eq:step4} and \eqref{eq:step6} we get on the event $\calB_{t,1/2}$
		\begin{align}
			\label{eq:step7-1}
			&~\EE_{n,1}[(\hat \mfunc^\fast-\tilde \mfunc)^2]
			+2\lambda_1 \sum_{i,j}\psi_{\tau_1}(\hat\Theta_{1,i,j})
			\nonumber\\
			&\leq 
			4\lambda_1|\calJ_1|+4\delta_\opt+{C_4}\pth{{v_{n_1}}+\varrho_{n_1}+\frac t{n_1}}
			+{\tilde C_4\over \alpha_*}\pth{\delta_{1,f}+\delta_{1,a}+\frac tn}.
		\end{align}
		Combining the last display and \eqref{eq:step6} with the following facts
		\begin{itemize}
			\item $(\hat \mfunc^\fast-\mfunc^*)^2
			\leq 2[(\tilde \mfunc-\mfunc^*)^2+(\hat \mfunc^\fast-\tilde \mfunc)^2]$, 
			\item for $n_1\in (n\alpha_*/2,n)$
			\begin{align}\label{eq:m10}
				v_{n_1}\lesssim {v_n\over \alpha_*},
				\quad
				\varrho_{n_1}\lesssim {\varrho_n\over \alpha_*} 
			\end{align} 
		\end{itemize} 
		we get on $\calE_1=\calA_t\cap \calB_{t,1/2}\cap \calD$ ($\calD$ is as in \prettyref{lmm:count}) with $\PP[\calE_1]\geq 1-e^{-t}-e^{-n\alpha_*^2/2}$
		\begin{align}\label{eq:step7}
			\EE_{n,1}[(\hat \mfunc^\fast-\mfunc^*)^2]
			\lesssim \lambda_1|\calJ|+\delta_\opt
			+ \frac 1{\alpha_*}\pth{{v_{n}+\varrho_{n}+\delta_{1,f}+\delta_{1,a}}+\frac t{n}}.
		\end{align} 
		
		\item \textbf{Step 8:} We bound $\EE_{P_1}[{(\hat \mfunc^\fast-\mfunc^*)^2}$]. On the event $\calC_t\cap \calD$ we use \eqref{eq:step7-1} to get
		\begin{align}\label{eq:m7}
			\EE_{P_1}\qth{\pth{\hat \mfunc^\fast-\tilde \mfunc}^2}
			&\leq \EE_{n,1}[(\hat \mfunc^\fast-\tilde \mfunc)^2]
			+{4\lambda_1}\sum_{i,j}\psi_{\tau_1}(\hat \Theta_{1,i,j})+{C_5}\pth{v_{n_1}+\varrho_{n_1}+\frac t{n_1}}
			\nonumber\\
			&\lesssim
			\frac 1{\alpha_*}\pth{\lambda_1|\calJ_1|+\delta_{1,f}+\delta_\opt+ {v_{n}}+\varrho_{n}+\frac tn}.
		\end{align}
		where the last inequality followed using \eqref{eq:m10}. We continue the last display using $\|\ba+\bb\|_2^2\leq 2(\|\ba\|_2^2+\|\bb\|_2^2)$ and \eqref{eq:step2} to get on the event $\calE_2=\calB_{t,1/2}\cap \calC_t\cap \calD$ with $\PP[\calE_1]\geq 1-e^{-t}-e^{-n\alpha_*^2/2}$
		\begin{align}\label{eq:step8}
			\EE_{P_1}\qth{\pth{\hat \mfunc^\fast-\mfunc^* }^2}
			&\leq 2\pth{\EE_{P_1}\qth{\pth{\hat \mfunc^\fast-\tilde \mfunc }^2}+\EE_{P_1}\qth{\pth{\tilde \mfunc-\mfunc^*}^2}}		\nonumber\\
			&\lesssim
			\frac 1{\alpha_*}\pth{\lambda_1|\calJ_1|+\delta_\opt+ {v_{n}}+\varrho_{n} +\delta_{1,f}+\delta_{\opt}+\frac tn}.
		\end{align}
	\end{itemize}

	To complete the proof of \prettyref{lmm:conditional-bound} we only prove steps 1--6, as the other steps follow from them via simple algebra. The proof of Steps 1-2 follows from the proof of \cite[Theorem 2]{fan2024factor} and uses properties of the functions $\tilde \mfunc,\tilde \mfunc^*,\mfunc^*$. Next, we outline the proof of Step 3. Note that from the definition of $\hat g_1,\hat \bTheta_1$ in \prettyref{eq:mu1-loss} it follows
	\begin{align*}
		&~\frac 1{n_1}\sum_{i\in [n], T_i=1} \sth{y_i - \hat \mfunc^{\fast}(\bx_i)}^2 
		+\lambda_1\sum_{i,j}\psi_{\tau_1}(\hat \Theta_{1,i,j})
		\nonumber\\
		&\leq
		\frac 1{n_1}\sum_{i\in [n], T_i=1} \sth{y_i - \tilde \mfunc(\bx_i)}^2 
		+\lambda_1\sum_{i,j}\psi_{\tau_1}(\tilde \Theta_{1,i,j}) +\delta_\opt
	\end{align*}
	Substituting $y_i=\mu_1^*(\bx_i)+\varepsilon_i(1),T_i=1$ in the above expression, we get 
	\begin{align*}
		&~\EE_{n,1}[(\mfunc^*-\hat\mfunc^\fast)^2]+\lambda_1\sum_{i,j}\psi_{\tau_1}(\hat \Theta_{1,i,j})
		\nonumber\\
		&\leq
		\EE_{n,1}[(\mfunc^*-\tilde\mfunc)^2]
		+\frac 2{n_1}\sum_{i\in [n]:T_i=1}\varepsilon_i(1)(\mfunc^\fast(\bx_i)-\tilde\mfunc(\bx_i))
		+\lambda_1\sum_{i,j}\psi_{\tau_1}(\tilde \Theta_{1,i,j}) +\delta_\opt	\end{align*} 
	In view of the construction of $\tilde \bTheta_1$ in Step 2 we have
	$$
	\sum_{i,j}\psi_{\tau_1}(\tilde \Theta_{1,i,j})\leq 
	\|\tilde \bTheta_1\|_0\leq |\calJ_1|.
	$$ 
	Using the last display and $\frac 12(\hat \mfunc^\fast-\tilde \mfunc)^2\leq (\hat \mfunc^\fast-\mfunc^*)^2+(\mfunc^*-\tilde \mfunc)^2$ we can rearrange the expressions to derive \prettyref{eq:step3}. The proof of Step 4 follows from \cite[Lemma 10]{fan2024factor} along with a union bound argument. In view of the proof of \cite[Lemma 10]{fan2024factor} we note that by defining
	\begin{align*}
		&~\calB_{t,1/2}(\{\bx_i:T_i=1, i\in [n]\})
		\nonumber\\
		&=\Bigg\{\forall \mfunc=m^\fast(\cdot; \bW,g,\bTheta)\in \calG_m,\
		\frac 4{n_1} \sum_{i\in [n]:T_i=1} \varepsilon_i(1)(\mfunc(\bx_i)-\tilde \mfunc(\bx_i))
		\nonumber\\
		&-\lambda_1\sum_{i,j}\psi_\tau(\Theta_{1,i,j})
		~\leq \frac 1{2n_1}\sum_{i\in [n]:T_i=1} (\mfunc(\bx_i)-\tilde \mfunc(\bx_i))^2+2\pth{{v_{n_1}}+\varrho_{n_1}+\frac t{n_1}}\Bigg\},
	\end{align*}
	we get $\PP[\calB_{t,1/2}(\{\bx_i:T_i=1\})]\geq 1-e^{-t}$ holds for every fixed realization of $\{\bx_i:T_i=1\}$. Hence, we can then apply the Law of Total Probability to conclude the statement. 
	
	To prove Step 5, define $\calI=\{i\in [n]: T_i=1\}$ and let $|\calI|$ denote the size of $\calI$. Note that conditioned on a fixed realization of $\calI$, the points $\{\bx_i\}_{i\in \calI}$ are independently distributed with the distribution $P_1$. Next, we restrict ourselves to the event $\calD=\{\sum_{i\in [n]}T_i\geq {n\alpha_*/2}\}$, which occurs with a probability at least $1-e^{-n\alpha_*^2/2}$ in view of \prettyref{lmm:count}. In view of the above, we can first show that for each fixed realization from the event $\calD$,  the following event holds with a probability $1-e^{-t}$, for any fixed $\tilde \mfunc\in \calG_m$
	\begin{align*}
		\calC_{t,\calI}
		=\Bigg\{\forall \mfunc&=m^\fast(\cdot; \bW,g,\bTheta)\in \calG_m,\ \frac 12\EE_{P_1}\qth{\pth{\mfunc-\tilde \mfunc}^2}
		\nonumber\\
		&\leq \frac 1{2|\calI|}\sum_{i\in \calI}(\mfunc(\bx_i)-\tilde \mfunc(\bx_i))^2
		+{2\lambda_1}\sum_{i,j}\psi_{\tau_1}(\Theta_{i,j})+C_5\pth{v_{|\calI|}+\varrho_{|\calI|}+\frac t{|\calI|}}\Bigg\}. 
	\end{align*}
	In addition, by further conditioning on the event in \prettyref{lmm:count} we get that $|\calI|$ is of constant order compared to $n$. Then we can follow the proof of \cite[Lemma 9]{fan2024factor} to show that
	\begin{align*}
		\PP\qth{\calC_{t,\calI}\cap \calD}\geq 1-e^{-t}-e^{-n\alpha_*^2/2}.
	\end{align*}
	Hence, using the law of total probability we get
	$$
	\PP[\calC_{t,\calI}]\geq \PP[\calC_{t,\calI}|\calD]\cdot \PP[\calD]\geq (1-e^{-t})(1-e^{-n\alpha_*^2/2})
	\geq 1-e^{-t}-e^{-n\alpha_*^2/2}.
	$$
	
	We present the proof of Step 6 below. We will first apply \cite[Lemma 9, Lemma 10]{fan2024factor} based on every fixed realization from the event $\calD$ as in \prettyref{lmm:count}. Note that the random variables $\{\bx_i:i\in \calI\}$ are independently and identically distributed. This implies that for $\mfunc^*$ and $\tilde \mfunc\in \calG_m$ as in Step 2, the following collection of random variables
	$$
	z_i=(\tilde \mfunc(\bx_i)-\mfunc^*(\bx_i))^2,\quad i\in [n], T_i=1
	$$ 
	are independent and satisfies (as $\tilde \mfunc\in [-M,M],\mfunc^*\in [-M^*,M^*]$ using \prettyref{asmp:reg})
	\begin{align*}
		z_i\leq (M+M^*)^2,\ \EE_{P_1}[z_i^2]\leq (M+M^*)^2\EE_{P_1}[(\tilde \mfunc-\mfunc^*)^2]
		\lesssim {\delta_{1,f}+\delta_{1,a}\over \alpha_*},
	\end{align*}
	Hence, conditioned on $n_1$ we can apply the Bernstein Inequality \citep{boucheron2003concentration} to conclude that with a probability $1-e^{-t}$
	\begin{align}\label{eq:m4}
		\EE_{n,1}[(\tilde \mfunc-\mfunc)^2]
		&\leq 
		\EE_{P_1}[(\tilde \mfunc-\mfunc^*)^2]+
		C\pth{{\delta_{1,f}+\delta_{1,a}\over \alpha_*}\sqrt{t\over n_1}+\frac t{n_1}},
	\end{align}
	for a constant $C>0$. Next we show that $n_1\geq {n\alpha_*\over 2}$ with a probability at least $1-e^{-{n\alpha_*^2\over 2}}$. 
	In view of \eqref{eq:m4} we use the last display and use a union bound to conclude that there exists a constant $C>0$ such that for all $0<t\leq n$
	\begin{align*}
		\PP\qth{\EE_{n,1}[(\tilde \mfunc-\mfunc^*)^2]\leq 
			{C\over \alpha_*}\pth{{\delta_{1,f}+\delta_{1,a}}+\frac t{n}}}
		\geq 1-e^{-t}-e^{-n\alpha_*^2/2}.
	\end{align*}
\end{proof}
\begin{lemma}\label{lmm:count}
	Define the event $\calD=\{\sum_{i\in [n]} T_i\geq n\alpha_*/2\}$. Then $\PP[\calD]\geq 1-e^{-n\alpha_*^2/2}$.
\end{lemma}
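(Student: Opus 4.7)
The plan is to observe that the event $\calD$ concerns a sum of independent Bernoulli-type random variables bounded in $[0,1]$, so a standard one-sided concentration inequality (Hoeffding) will yield the claim directly, with the constant $\alpha_*^2/2$ in the exponent arising from matching the gap $n\alpha_*/2$ against the variance proxy $n/4$ in the Hoeffding bound.

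More precisely, first I would note that by \prettyref{asmp:confoundedness} and the propensity definition \eqref{eq:propensity}, conditional on $\bx_i$ we have $T_i\sim \mathrm{Bernoulli}(\pi^*(\bx_i))$; since $\{(y_i,T_i,\bx_i)\}_{i=1}^n$ are i.i.d., marginally $T_1,\dots,T_n$ are i.i.d.\ Bernoulli with common mean $\mu=\EE[\pi^*(\bx)]$. By \prettyref{asmp:reg}, $\pi^*(\bx)>\alpha_*$ for every $\bx$, so $\mu>\alpha_*$, which in particular gives $n\mu\ge n\alpha_*$, so that
\begin{align*}
\PP\qth{\sum_{i\in [n]} T_i < n\alpha_*/2}
\le \PP\qth{\sum_{i\in [n]} T_i - n\mu < -n\alpha_*/2}.
\end{align*}

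Then I would apply Hoeffding's inequality for independent $[0,1]$-valued variables to the lower tail: for any $t>0$,
\begin{align*}
\PP\qth{\sum_{i\in [n]} T_i - n\mu \le -t} \le \exp\pth{-\frac{2t^2}{n}}.
\end{align*}
Plugging in $t=n\alpha_*/2$ yields the bound $\exp(-n\alpha_*^2/2)$, and therefore $\PP[\calD]\ge 1-e^{-n\alpha_*^2/2}$, as required. There is no real obstacle here; the only minor point to verify cleanly is that the unconditional independence of the $T_i$ follows from the i.i.d.\ structure of the data together with \prettyref{asmp:confoundedness}, after which Hoeffding applies immediately.
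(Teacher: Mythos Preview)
Your proof is correct and, in fact, more direct than the paper's. The paper first uses the pointwise lower bound $\pi^*(\bx)\ge \alpha_*$ to argue that $\sum_i T_i$ stochastically dominates a $\Binom(n,\alpha_*)$ variable $Z$, then applies the Chernoff--KL tail bound to $n-Z$ and finally invokes Pinsker's inequality $h_q(a)\ge 2(a-q)^2$ to extract the quadratic exponent $n\alpha_*^2/2$. You instead apply Hoeffding's inequality directly to the i.i.d.\ $[0,1]$-valued $T_i$, exploiting only that their common mean is at least $\alpha_*$; this bypasses both the stochastic dominance step and the Pinsker reduction, and lands on exactly the same constant. The two routes are equivalent in spirit (Chernoff plus Pinsker for Bernoullis \emph{is} Hoeffding), but yours is the cleaner packaging.

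One small cosmetic point: you do not actually need \prettyref{asmp:confoundedness} here. The i.i.d.\ assumption on the full triples $\{(y_i,T_i,\bx_i)\}$ already makes the marginal sequence $T_1,\dots,T_n$ i.i.d., and the bound $\EE[T_i]=\EE[\pi^*(\bx_i)]\ge \alpha_*$ comes from \prettyref{asmp:reg} alone. Unconfoundedness plays no role in this lemma.
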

\begin{proof}
	Note that as $\inf_{\bx}\pi^*(\bx)\geq \alpha_*$ we get $\sum_{i\in [n]}{T_i}$ are stochastically larger than $Z\sim\Binom(n,\alpha_*)$. Hence, we get for $c>1$ to be chosen later
	\begin{align}\label{eq:m0}
		\PP[n_1\leq {n\alpha_*\over c}]
		\leq \PP[Z\leq {n\alpha_*\over c}]
		\leq \PP[n-Z\geq n\pth{1-{\alpha_*\over c}}]
	\end{align}
	Here $n-Z\sim \Binom(1-\alpha_*)$. We will use Chernoff's inequality for Binomial random variables
	\begin{lemma}{\cite[Section 2.2]{boucheron2003concentration}}
		\label{lmm:chernoff}
		For a random variable $Z\sim \Binom(m,q)$, we have
		\begin{align*}
			\PP\qth{Z\geq ma}
			\leq \exp\pth{-mh_q(a)};\quad q<a<1,\
			h_q(a)=a\log{a\over q}+(1-a)\log{1-a\over 1-q}.
		\end{align*}
	\end{lemma}
	Using $q=1-\alpha_*,a=1-\alpha_*/c$ in the definition of $h_q(a)$ in the above result and using Pinsker's inequality $h_q(a)\geq 2(a-q)^2$ we get $h_q(a)\geq 2\alpha_*^2(c-1)^2/c^2$. Hence we continue \prettyref{eq:m0} using \prettyref{lmm:chernoff} to get 
	\begin{align*}
		\PP\qth{n_1\leq {n\alpha_*\over c}}
		\leq \exp\pth{-2n\alpha_*^2(c-1)^2/c^2}.
	\end{align*}
	Plugging in $c=2$ in the above inequality, we get the desired result.
\end{proof}

\begin{proof}[Proof of \prettyref{thm:main-fast-nn} for $r=0$]
	The proof here mainly deviates from the proof of the case $r\geq 1$ in establishing \eqref{eq:step1} and \eqref{eq:step2}. We modify the steps as follows. Note that we have $\delta_{1,f}=0$ and $r=0$. We also get from $r=0$ that $\mu_1^*(\bx)=\mu_1^*(\bx_{\calJ_1})$. Then, from the definition of $\delta_{1,a}$ as in \prettyref{thm:main-fast-nn} note that there exists $\tilde \mfunc\in \calG_m$ (i.e., with corresponding $\tilde \bTheta_1$, and $\calG_m$ is as defined in \eqref{eq:Gm}) such that $\|\tilde \bTheta_1\|_0\leq |\calJ_1|$ and $
		\EE_{P}\qth{\pth{\tilde \mfunc-\mfunc^*}^2}
		\lesssim \delta_{1,a}.$
	Similar to \eqref{eq:step2}, we get $\EE_{P_1}\qth{\pth{\tilde \mfunc-\tilde \mfunc^*}^2}\lesssim \delta_{1,a}/\alpha_*$. Then the other parts of the proof, from Step 3 onwards, can be carried out as before. This completes our argument.
\end{proof}

\section{Empirical Implementation Details}
\label{app:sim-imp}

\subsection{Parameter choices for candidate methods}
The scripts to submit each simulation as a job on the cluster are named identically to the file for the corresponding Python codes with an extension of `\textit{.sh}'. All the directories in the Python code are saved relatively, so people can execute the code under any directory without changing the paths inside the Python scripts. GPUs are recommended to simulate Double Deep Learning, Vanilla Neural Networks with $L_2$ regularization. 
\begin{itemize}
\item Factor Informed Double Deep Learning Estimator ({FIDDLE}): We implement a factor-augmented sparse throughput deep (FAST) ReLU neural network to estimate the average treatment effect (ATE). Set the number of epochs in training to be $100$, the batch size to be $64$, the learning rate $lr = 0.001$, the depth of the neural network $L = 4$, and the width of the neural network $N = 400$ and the column number of the diversified projection matrix $\overline{r} = 10$. The hyperparameters for the penalty in the FAST architecture are set to $\tau = 0.005$ and $\lambda = 1.3 \, \mathrm{log}(p) / n$. We randomly sample $m = 50$ unlabeled observations of covariates to pre-train the diversified projection matrix $\boldsymbol{W}$ and leave the rest of the dataset to estimate the propensity and outcomes models. The column number of the diversified projection matrix $\overline{r} = 3$ only for the experiments on the MBSAQIP dataset in Section \ref{ss:application-surgery}, due to a smaller number of covariates. 
\item Vanilla Neural Networks (Vanilla-NN): We adopt a fully connected ReLU neural network with the same number of epochs in training to be $100$, the batch size $64$, and the learning rate $lr = 0.001$, the depth of the neural network $L = 4$ and the width of the neural network $N = 400$. We penalize the loss function by an $L_2$ norm term with weight $\lambda = 1$. 
\item Generative Adversarial Nets for inference of Individualized Treatment Effects (GANITE): We adapt the official package published by \citep{vanderschaarlab_ganite} to implement this method. The hyperparameters for the simulation implementation are set as default: the hidden dimensions $\textit{h\_dim}=100$, number of training iterations $\textit{num\_iterations} = 5000$, the batch size $256$, hyperparameters to adjust the loss importance $\alpha = 0.1$ and $\beta = 0$. 
\item Double Robust Forest Model (DR): We implement DR by the function $\textit{econml.dr.DRLearner}$ from the EconML package \citep{econml}. The propensity score is modeled by the function $\textit{sklearn.ensemble.RandomForestClassifier}$ in the package \citep{scikit-learn} with number of trees $\textit{n\_estimators} = 100$ and the maximum depth of the tree $\textit{max\_depth} = 2$. Both the outcome and the final model are implemented by the function $\textit{sklearn.ensemble.RandomForestRegressor}$ with both models have number of trees $\textit{n\_estimators} = 100$ and the maximum depth of the tree $\textit{max\_depth} = 2$ separately. We truncate the $\textit{min\_propensity} = 0.1$, which is the minimum propensity at which to clip propensity estimates to avoid dividing by zero.
\item Double Machine Learning Forest Model (DML): We implement DML by the function $\textit{econml.dml.CausalForestDML}$ from the EconML package \citep{econml}. The propensity score is modeled by the function $\textit{sklearn.ensemble.RandomForestClassifier}$ in the package \citep{scikit-learn} with number of trees $\textit{n\_estimators} = 100$ and the maximum depth of the tree $\textit{max\_depth} = 2$. The outcome is modeled by the function $\textit{sklearn.ensemble.RandomForestRegressor}$ with number of trees $\textit{n\_estimators} = 100$ and the maximum depth of the tree $\textit{max\_depth} = 2$ separately. 
\item Causal Forest (CF) on Covariates or Latent Factors: We implement CF by the function \textit{econml.grf.CausalForest} from the EconML package \citep{econml}. The parameters to implement the function are set as default: number of trees $\textit{n\_estimators} = 100$, the maximum depth of the tree $\textit{max\_depth} = 50$. 
\end{itemize}

\subsection{Additional tables and plots}

\begin{figure}[htbp]
	\centering
	\includegraphics[width=\textwidth]{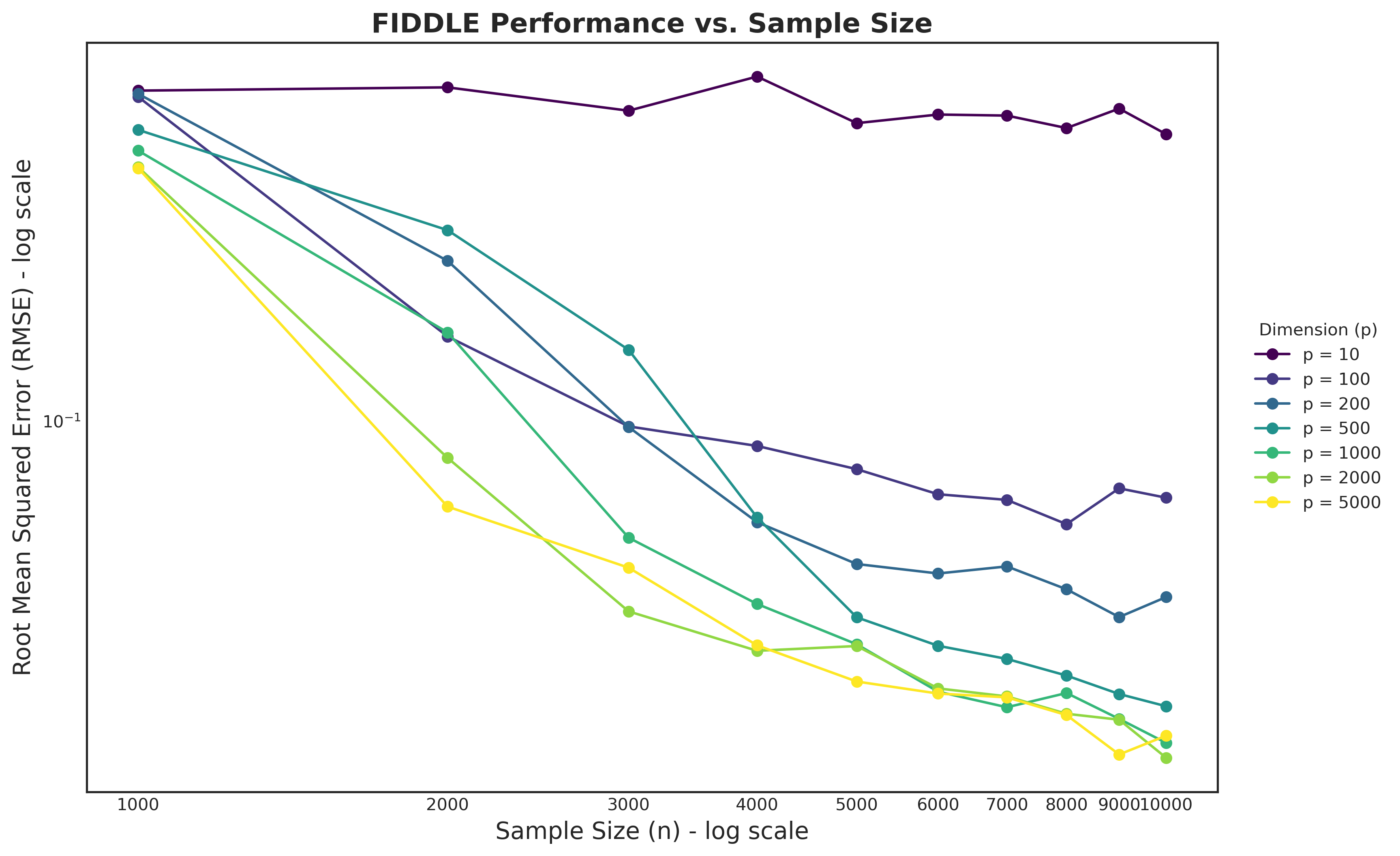}
	\caption{Plot of root mean squared error (RMSE) by FIDDLE performance across different sample sizes ($n$) and covariate dimensions ($p$) among $100$ replications (Both x and y axis plot on a log scale).} 
\label{fig:size_compare}
\end{figure}

\begin{table}[H]
	\centering
	\begin{tabular}{c|ccccccc}
		\hline
		& $p=10$ & $p=100$ & $p=200$ & $p=500$ & $p=1000$ & $p=2000$ & $p=5000$ \\
		\hline
		$n=1000$ & 0.5247 & 0.5087 & 0.5164 & 0.4310 & 0.3883 & 0.3577 & 0.3557 \\
		& (0.0342) & (0.0167) & (0.0166) & (0.0139) & (0.0130) & (0.0145) & (0.0134) \\
		\hline
		$n=2000$ & 0.5333 & 0.1529 & 0.2234 & 0.2606 & 0.1561 & 0.0832 & 0.0651 \\
		& (0.0299) & (0.0089) & (0.0079) & (0.0074) & (0.0086) & (0.0120) & (0.0150) \\
		\hline
		$n=3000$ & 0.4744 & 0.0974 & 0.0971 & 0.1430 & 0.0557 & 0.0385 & 0.0480 \\
		& (0.0251) & (0.0057) & (0.0040) & (0.0058) & (0.0037) & (0.0026) & (0.0106) \\
		\hline
		$n=4000$ & 0.5631 & 0.0883 & 0.0603 & 0.0617 & 0.0400 & 0.0316 & 0.0325 \\
		& (0.0282) & (0.0058) & (0.0031) & (0.0041) & (0.0025) & (0.0018) & (0.0025) \\
		\hline
		$n=5000$ & 0.4457 & 0.0786 & 0.0489 & 0.0374 & 0.0327 & 0.0324 & 0.0271 \\
		& (0.0239) & (0.0044) & (0.0030) & (0.0029) & (0.0022) & (0.0019) & (0.0018) \\
		\hline
		$n=6000$ & 0.4654 & 0.0693 & 0.0466 & 0.0324 & 0.0258 & 0.0262 & 0.0255 \\
		& (0.0231) & (0.0035) & (0.0026) & (0.0023) & (0.0018) & (0.0022) & (0.0018) \\
		\hline
		$n=7000$ & 0.4629 & 0.0674 & 0.0483 & 0.0304 & 0.0238 & 0.0252 & 0.0250 \\
		& (0.0240) & (0.0044) & (0.0037) & (0.0023) & (0.0017) & (0.0016) & (0.0018) \\
		\hline
		$n=8000$ & 0.4349 & 0.0597 & 0.0430 & 0.0279 & 0.0256 & 0.0231 & 0.0229 \\
		& (0.0213) & (0.0033) & (0.0026) & (0.0024) & (0.0019) & (0.0019) & (0.0017) \\
		\hline
		$n=9000$ & 0.4795 & 0.0714 & 0.0374 & 0.0255 & 0.0225 & 0.0224 & 0.0188 \\
		& (0.0242) & (0.0054) & (0.0026) & (0.0020) & (0.0014) & (0.0014) & (0.0014) \\
		\hline
		$n=10000$ & 0.4217 & 0.0681 & 0.0414 & 0.0239 & 0.0199 & 0.0185 & 0.0207 \\
		& (0.0200) & (0.0038) & (0.0021) & (0.0015) & (0.0013) & (0.0016) & (0.0013) \\
		\hline
	\end{tabular}
	\caption{Root mean squared error and standard error (in parentheses) of our proposed FIDDLE across different sample sizes ($n$) and covariate dimensions ($p$) over $100$ replications. For each $n$, the first row shows RMSE and the second row shows SE in parentheses.}
	\label{tab:size_compare}
\end{table}

{
	
	\bibliographystyle{apalike}
	\bibliography{reference}}

\end{document}